\journal{Theoretical Computer Science}
\begin{document}

\begin{frontmatter}



\title{Weighted Last-Step Min-Max Algorithm with Improved Sub-Logarithmic Regret}

\author[T]{Edward Moroshko}
\ead{edward.moroshko@gmail.com}
\author[T]{Koby Crammer}
\ead{koby@ee.technion.ac.il}
\address[T]{Department of Electrical Engineering, Technion, Israel}

\begin{abstract}
In online learning the performance of an algorithm is typically compared to the performance of a fixed function from some class, with a quantity called regret. Forster~\cite{Forster} proposed a last-step min-max algorithm which was somewhat simpler than the algorithm of Vovk~\cite{vovkAS}, yet with the same regret. In fact the algorithm he analyzed assumed that the choices of the adversary are bounded, yielding artificially only the two extreme cases.  We fix this problem by weighing the examples in such a way that the min-max problem will be well defined, and provide analysis with logarithmic regret that may have better multiplicative factor than both bounds of Forster~\cite{Forster} and Vovk~\cite{vovkAS}.  We also derive a new bound that may be sub-logarithmic, as a recent bound of Orabona et.al~\cite{OrabonaCBG12}, but may have better multiplicative factor. Finally, we analyze the algorithm in a weak-type of non-stationary setting, and show a bound that is sublinear if the non-stationarity is sub-linear as well.

\end{abstract}

\begin{keyword}
Online learning \sep Regression \sep Min-max learning
\end{keyword}

\end{frontmatter}


%

\newtheorem{theorem}{Theorem}
\newtheorem{lemma}[theorem]{Lemma}
\newtheorem{corollary}[theorem]{Corollary}

\newtheorem{Remark}{Remark}
\def\proof{\par\penalty-1000\vskip .5 pt\noindent{\bf Proof\/: }}
\def\proofsketch{\par\penalty-1000\vskip .5 pt\noindent{\bf Proof sketch\/: }}
\def\ProofSketch{\par\penalty-1000\vskip .1 pt\noindent{\bf Proof sketch\/: }}
\newcommand{\QED}{\hfill$\;\;\;\rule[0.1mm]{2mm}{2mm}$\\}

\newcommand{\todo}[1]{{~\\\bf TODO: {#1}}~\\}

\newfont{\msym}{msbm10}
\newcommand{\reals}{\mathbb{R}}
\newcommand{\half}{\frac{1}{2}}
\newcommand{\sign}{{\rm sign}}
\newcommand{\paren}[1]{\left({#1}\right)}
\newcommand{\brackets}[1]{\left[{#1}\right]}
\newcommand{\braces}[1]{\left\{{#1}\right\}}
\newcommand{\ceiling}[1]{\left\lceil{#1}\right\rceil}
\newcommand{\abs}[1]{\left\vert{#1}\right\vert}
\newcommand{\tr}{{\rm Tr}}
\newcommand{\pr}[1]{{\rm Pr}\left[{#1}\right]}
\newcommand{\prp}[2]{{\rm Pr}_{#1}\left[{#2}\right]}
\newcommand{\Exp}[1]{{\rm E}\left[{#1}\right]}
\newcommand{\Expp}[2]{{\rm E}_{#1}\left[{#2}\right]}
\newcommand{\eqdef}{\stackrel{\rm def}{=}}
\newcommand{\comdots}{, \ldots ,}
\newcommand{\true}{\texttt{True}}
\newcommand{\false}{\texttt{False}}
\newcommand{\mcal}[1]{{\mathcal{#1}}}
\newcommand{\argmin}[1]{\underset{#1}{\mathrm{argmin}} \:}
\newcommand{\normt}[1]{\left\Vert {#1} \right\Vert^2}
\newcommand{\step}[1]{\left[#1\right]_+}
\newcommand{\1}[1]{[\![{#1}]\!]}
\newcommand{\diag}{{\textrm{diag}}}
\newcommand{\KL}{{\textrm{D}_{\textrm{KL}}}}
\newcommand{\IS}{{\textrm{D}_{\textrm{IS}}}}
\newcommand{\EU}{{\textrm{D}_{\textrm{EU}}}}

\newcommand{\leftmarginpar}[1]{\marginpar[#1]{}}
\newcommand{\figline}{\rule{0.50\textwidth}{0.5pt}}
\newcommand{\pseudocodefont}{\normalsize}
\newcommand{\nolineskips}{
\setlength{\parskip}{0pt}
\setlength{\parsep}{0pt}
\setlength{\topsep}{0pt}
\setlength{\partopsep}{0pt}
\setlength{\itemsep}{0pt}}

\newcommand{\beq}[1]{\begin{equation}\label{#1}}
\newcommand{\eeq}{\end{equation}}
\newcommand{\beqa}{\begin{eqnarray}}
\newcommand{\eeqa}{\end{eqnarray}}
\newcommand{\secref}[1]{Sec.~\ref{#1}}
\newcommand{\figref}[1]{Fig.~\ref{#1}}
\newcommand{\exmref}[1]{Example~\ref{#1}}
\newcommand{\thmref}[1]{Theorem~\ref{#1}}
\newcommand{\sthmref}[1]{Thm.~\ref{#1}}
\newcommand{\defref}[1]{Definition~\ref{#1}}
\newcommand{\remref}[1]{Remark~\ref{#1}}
\newcommand{\chapref}[1]{Chapter~\ref{#1}}
\newcommand{\appref}[1]{Appendix~\ref{#1}}
\newcommand{\lemref}[1]{Lemma~\ref{#1}}
\newcommand{\propref}[1]{Proposition~\ref{#1}}
\newcommand{\claimref}[1]{Claim~\ref{#1}}

\renewcommand{\corref}[1]{Corollary~\ref{#1}}
\newcommand{\scorref}[1]{Cor.~\ref{#1}}
\newcommand{\tabref}[1]{Table~\ref{#1}}
\newcommand{\tran}[1]{{#1}^{\top}}
\newcommand{\norm}{\mcal{N}}
\newcommand{\eqsref}[1]{Eqns.~(\ref{#1})}

\newcommand{\mb}[1]{{\boldsymbol{#1}}}
\newcommand{\up}[2]{{#1}^{#2}}
\newcommand{\dn}[2]{{#1}_{#2}}
\newcommand{\du}[3]{{#1}_{#2}^{#3}}
\newcommand{\textl}[2]{{$\textrm{#1}_{\textrm{#2}}$}}

\newcommand{\vx}{\mathbf{x}}
\newcommand{\vxi}[1]{\vx_{#1}}
\newcommand{\vxii}{\vxi{t}}

\newcommand{\yi}[1]{y_{#1}}
\newcommand{\yii}{\yi{t}}
\newcommand{\hyi}[1]{\hat{y}_{#1}}
\newcommand{\hyii}{\hyi{i}}

\newcommand{\vy}{\mb{y}}
\newcommand{\vyi}[1]{\vy_{#1}}
\newcommand{\vyii}{\vyi{i}}

\newcommand{\vn}{\mb{\nu}}
\newcommand{\vni}[1]{\vn_{#1}}
\newcommand{\vnii}{\vni{i}}

\newcommand{\vmu}{\mb{\mu}}
\newcommand{\vmus}{{\vmu^*}}
\newcommand{\vmuts}{{\vmus}^{\top}}
\newcommand{\vmui}[1]{\vmu_{#1}}
\newcommand{\vmuii}{\vmui{i}}

\newcommand{\vmut}{\vmu^{\top}}
\newcommand{\vmuti}[1]{\vmut_{#1}}
\newcommand{\vmutii}{\vmuti{i}}

\newcommand{\vsigma}{\mb \sigma}
\newcommand{\msigma}{\Sigma}
\newcommand{\msigmas}{{\msigma^*}}
\newcommand{\msigmai}[1]{\msigma_{#1}}
\newcommand{\msigmaii}{\msigmai{t}}

\newcommand{\mups}{\Upsilon}
\newcommand{\mupss}{{\mups^*}}
\newcommand{\mupsi}[1]{\mups_{#1}}
\newcommand{\mupsii}{\mupsi{i}}
\newcommand{\upssl}{\upsilon^*_l}

\newcommand{\vu}{\mathbf{u}}
\newcommand{\vut}{\tran{\vu}}
\newcommand{\vui}[1]{\vu_{#1}}
\newcommand{\vuti}[1]{\vut_{#1}}
\newcommand{\hvu}{\hat{\vu}}
\newcommand{\hvut}{\tran{\hvu}}
\newcommand{\hvur}[1]{\hvu_{#1}}
\newcommand{\hvutr}[1]{\hvut_{#1}}
\newcommand{\vw}{\mb{w}}
\newcommand{\vwi}[1]{\vw_{#1}}
\newcommand{\vwii}{\vwi{t}}
\newcommand{\vwti}[1]{\vwt_{#1}}
\newcommand{\vwt}{\tran{\vw}}

\newcommand{\tvw}{\tilde{\mb{w}}}
\newcommand{\tvwi}[1]{\tvw_{#1}}
\newcommand{\tvwii}{\tvwi{t}}

\newcommand{\vh}{\mb{h}}

\newcommand{\vv}{\mb{v}}
\newcommand{\vvt}{\tran{\vv}}

\newcommand{\vvi}[1]{\vv_{#1}}
\newcommand{\vvti}[1]{\vvt_{#1}}
\newcommand{\lambdai}[1]{\lambda_{#1}}
\newcommand{\Lambdai}[1]{\Lambda_{#1}}

\newcommand{\vxt}{\tran{\vx}}
\newcommand{\hvx}{\hat{\vx}}
\newcommand{\hvxi}[1]{\hvx_{#1}}
\newcommand{\hvxii}{\hvxi{i}}
\newcommand{\hvxt}{\tran{\hvx}}
\newcommand{\hvxti}[1]{\hvxt_{#1}}
\newcommand{\hvxtii}{\hvxti{i}}
\newcommand{\vxti}[1]{\vxt_{#1}}
\newcommand{\vxtii}{\vxti{i}}

\newcommand{\vb}{\mb{b}}
\newcommand{\vbt}{\tran{\vb}}
\newcommand{\vbi}[1]{\vb_{#1}}

\newcommand{\hvy}{\hat{\vy}}
\newcommand{\hvyi}[1]{\hvy_{#1}}


\renewcommand{\mp}{P}
\newcommand{\mpd}{\mp^{(d)}}
\newcommand{\mpt}{\mp^T}
\newcommand{\tmp}{\tilde{\mp}}
\newcommand{\mpi}[1]{\mp_{#1}}
\newcommand{\mpti}[1]{\mpt_{#1}}
\newcommand{\mptii}{\mpti{i}}
\newcommand{\mpii}{\mpi{i}}
\newcommand{\mps}{Q}
\newcommand{\mpsi}[1]{\mps_{#1}}
\newcommand{\mpsii}{\mpsi{i}}
\newcommand{\tmpt}{\tmp^T}
\newcommand{\mz}{Z}
\newcommand{\mv}{V}
\newcommand{\mvi}[1]{\mv_{#1}}
\newcommand{\mvt}{V^T}
\newcommand{\mvti}[1]{\mvt_{#1}}
\newcommand{\mzt}{\mz^T}
\newcommand{\tmz}{\tilde{\mz}}
\newcommand{\tmzt}{\tmz^T}
\newcommand{\mx}{\mathbf{X}}
\newcommand{\ma}{\mathbf{A}}
\newcommand{\mxs}[1]{\mx_{#1}}

\newcommand{\mai}[1]{\ma_{#1}}
\newcommand{\mat}{\tran{\ma}}
\newcommand{\mati}[1]{\mat_{#1}}

\newcommand{\mc}{{C}}
\newcommand{\mci}[1]{\mc_{#1}}
\newcommand{\mcti}[1]{\mct_{#1}}

\newcommand{\md}{{\mathbf{D}}}
\newcommand{\mdi}[1]{\md_{#1}}

\newcommand{\mxi}[1]{\textrm{diag}^2\paren{\vxi{#1}}}
\newcommand{\mxii}{\mxi{i}}

\newcommand{\hmx}{\hat{\mx}}
\newcommand{\hmxi}[1]{\hmx_{#1}}
\newcommand{\hmxii}{\hmxi{i}}
\newcommand{\hmxt}{\hmx^T}
\newcommand{\mxt}{\mx^\top}
\newcommand{\mi}{\mathbf{I}}
\newcommand{\mq}{Q}
\newcommand{\mqt}{\mq^T}
\newcommand{\mlam}{\Lambda}

\renewcommand{\L}{\mcal{L}}
\newcommand{\R}{\mcal{R}}
\newcommand{\X}{\mcal{X}}
\newcommand{\Y}{\mcal{Y}}
\newcommand{\F}{\mcal{F}}
\newcommand{\nur}[1]{\nu_{#1}}
\newcommand{\lambdar}[1]{\lambda_{#1}}
\newcommand{\gammai}[1]{\gamma_{#1}}
\newcommand{\gammaii}{\gammai{i}}
\newcommand{\alphai}[1]{\alpha_{#1}}
\newcommand{\alphaii}{\alphai{i}}
\newcommand{\lossp}[1]{\ell_{#1}}
\newcommand{\eps}{\epsilon}
\newcommand{\epss}{\eps^*}
\newcommand{\lsep}{\lossp{\eps}}
\newcommand{\lseps}{\lossp{\epss}}
\newcommand{\T}{\mcal{T}}

\newcommand{\kc}[1]{\begin{center}\fbox{\parbox{3in}{{\textcolor{green}{KC: #1}}}}\end{center}}
\newcommand{\edward}[1]{\begin{center}\fbox{\parbox{3in}{{\textcolor{red}{EM: #1}}}}\end{center}}
\newcommand{\nv}[1]{\begin{center}\fbox{\parbox{3in}{{\textcolor{blue}{NV: #1}}}}\end{center}}

\newcommand{\newstuffa}[2]{#2}
\newcommand{\newstufffroma}[1]{}
\newcommand{\newstufftoa}{}

\newcommand{\newstuff}[2]{#2}
\newcommand{\newstufffrom}[1]{}
\newcommand{\newstuffto}{}
\newcommand{\oldnote}[2]{}

\newcommand{\commentout}[1]{}
\newcommand{\mypar}[1]{\medskip\noindent{\bf #1}}

\newcommand{\inner}[2]{\left< {#1} , {#2} \right>}
\newcommand{\kernel}[2]{K\left({#1},{#2} \right)}
\newcommand{\tprr}{\tilde{p}_{rr}}
\newcommand{\hxr}{\hat{x}_{r}}
\newcommand{\projalg}{{PST }}
\newcommand{\projealg}[1]{$\textrm{PST}_{#1}~$}
\newcommand{\gradalg}{{GST }}

\newcounter {mySubCounter}
\newcommand {\twocoleqn}[4]{
  \setcounter {mySubCounter}{0} %
  \let\OldTheEquation \theequation %
  \renewcommand {\theequation }{\OldTheEquation \alph {mySubCounter}}%
  \noindent \hfill%
  \begin{minipage}{.40\textwidth}
\vspace{-0.6cm}
    \begin{equation}\refstepcounter{mySubCounter}
      #1
    \end {equation}
  \end {minipage}
~~~~~~
  \addtocounter {equation}{ -1}%
  \begin{minipage}{.40\textwidth}
\vspace{-0.6cm}
    \begin{equation}\refstepcounter{mySubCounter}
      #3
    \end{equation}
  \end{minipage}%
  \let\theequation\OldTheEquation
}

\newcommand{\vzero}{\mb{0}}

\newcommand{\smargin}{\mcal{M}}

\newcommand{\ai}[1]{A_{#1}}
\newcommand{\bi}[1]{B_{#1}}
\newcommand{\aii}{\ai{i}}
\newcommand{\bii}{\bi{i}}
\newcommand{\betai}[1]{\beta_{#1}}
\newcommand{\betaii}{\betai{i}}
\newcommand{\mar}{M}
\newcommand{\mari}[1]{\mar_{#1}}
\newcommand{\marii}{\mari{i}}
\newcommand{\nmari}[1]{m_{#1}}
\newcommand{\nmarii}{\nmari{i}}

\newcommand{\erf}{\Phi}

\newcommand{\var}{V}
\newcommand{\vari}[1]{\var_{#1}}
\newcommand{\varii}{\vari{i}}

\newcommand{\varb}{v}
\newcommand{\varbi}[1]{\varb_{#1}}
\newcommand{\varbii}{\varbi{i}}

\newcommand{\vara}{u}
\newcommand{\varai}[1]{\vara_{#1}}
\newcommand{\varaii}{\varai{i}}

\newcommand{\marb}{m}
\newcommand{\marbi}[1]{\marb_{#1}}
\newcommand{\marbii}{\marbi{i}}

\newcommand{\algname}{{AROW}}
\newcommand{\rlsname}{{RLS}}
\newcommand{\mrlsname}{{MRLS}}

\newcommand{\phia}{\psi}
\newcommand{\phib}{\xi}

\newcommand{\amsigmaii}{\tilde{\msigma}_t}
\newcommand{\amsigmai}[1]{\tilde{\msigma}_{#1}}
\newcommand{\avmuii}{\tilde{\vmu}_i}
\newcommand{\avmui}[1]{\tilde{\vmu}_{#1}}
\newcommand{\amarbii}{\tilde{\marb}_i}
\newcommand{\avarbii}{\tilde{\varb}_i}
\newcommand{\avaraii}{\tilde{\vara}_i}
\newcommand{\aalphaii}{\tilde{\alpha}_i}

\newcommand{\svar}{v}
\newcommand{\smar}{m}
\newcommand{\nsmar}{\bar{m}}

\newcommand{\vnu}{\mb{\nu}}
\newcommand{\vnut}{\vnu^\top}
\newcommand{\vz}{\mb{z}}
\newcommand{\vZ}{\mb{Z}}
\newcommand{\fphi}{f_{\phi}}
\newcommand{\gphi}{g_{\phi}}


\newcommand{\vtmui}[1]{\tilde{\vmu}_{#1}}
\newcommand{\vtmuii}{\vtmui{i}}

\newcommand{\zetai}[1]{\zeta_{#1}}
\newcommand{\zetaii}{\zetai{i}}


\newcommand{\vstate}{\bf{s}}
\newcommand{\vstatet}[1]{\vstate_{#1}}
\newcommand{\vstatett}{\vstatet{t}}

\newcommand{\mtran}{\bf{\Phi}}
\newcommand{\mtrant}[1]{\mtran_{#1}}
\newcommand{\mtrantt}{\mtrant{t}}

\newcommand{\vstatenoise}{\bf{\eta}}
\newcommand{\vstatenoiset}[1]{\vstatenoise_{#1}}
\newcommand{\vstatenoisett}{\vstatenoiset{t}}

\newcommand{\vobser}{\bf{o}}
\newcommand{\vobsert}[1]{\vobser_{#1}}
\newcommand{\vobsertt}{\vobsert{t}}

\newcommand{\mobser}{\bf{H}}
\newcommand{\mobsert}[1]{\mobser_{#1}}
\newcommand{\mobsertt}{\mobsert{t}}

\newcommand{\vobsernoise}{\bf{\nu}}
\newcommand{\vobsernoiset}[1]{\vobsernoise_{#1}}
\newcommand{\vobsernoisett}{\vobsernoiset{t}}

\newcommand{\mstatenoisecov}{\bf{Q}}
\newcommand{\mstatenoisecovt}[1]{\mstatenoisecov_{#1}}
\newcommand{\mstatenoisecovtt}{\mstatenoisecovt{t}}

\newcommand{\mobsernoisecov}{\bf{R}}
\newcommand{\mobsernoisecovt}[1]{\mobsernoisecov_{#1}}
\newcommand{\mobsernoisecovtt}{\mobsernoisecovt{t}}

\newcommand{\vestate}{\bf{\hat{s}}}
\newcommand{\vestatet}[1]{\vestate_{#1}}
\newcommand{\vestatett}{\vestatet{t}}
\newcommand{\vestatept}[1]{\vestatet{#1}^+}
\newcommand{\vestatent}[1]{\vestatet{#1}^-}

\newcommand{\mcovar}{\bf{P}}
\newcommand{\mcovart}[1]{\mcovar_{#1}}
\newcommand{\mcovarpt}[1]{\mcovart{#1}^+}
\newcommand{\mcovarnt}[1]{\mcovart{#1}^-}

\newcommand{\mkalmangain}{\bf{K}}
\newcommand{\mkalmangaint}[1]{\mkalmangain_{#1}}

\newcommand{\vkalmangain}{\bf{\kappa}}
\newcommand{\vkalmangaint}[1]{\vkalmangain_{#1}}

\newcommand{\obsernoise}{{\nu}}
\newcommand{\obsernoiset}[1]{\obsernoise_{#1}}
\newcommand{\obsernoisett}{\obsernoiset{t}}

\newcommand{\obsernoisecov}{r}
\newcommand{\obsernoisecovt}[1]{\obsernoisecov_{#1}}
\newcommand{\obsernoisecovtt}{\obsernoisecov}

\newcommand{\obsnscv}{s}
\newcommand{\obsnscvt}[1]{\obsnscv_{#1}}
\newcommand{\obsnscvtt}{\obsnscvt{t}}

\newcommand{\Psit}[1]{\Psi_{#1}}
\newcommand{\Psitt}{\Psit{t}}

\newcommand{\Omegat}[1]{\Omega_{#1}}
\newcommand{\Omegatt}{\Omegat{t}}

\newcommand{\ellt}[1]{\ell_{#1}}
\newcommand{\gllt}[1]{g_{#1}}

\newcommand{\chit}[1]{\chi_{#1}}

\newcommand{\ms}{\mathcal{M}}
\newcommand{\us}{\mathcal{U}}
\newcommand{\as}{\mathcal{A}}

\newcommand{\mn}{M}
\newcommand{\un}{U}

\newcommand{\seti}[1]{S_{#1}}

\newcommand{\obj}{\mcal{C}}

\newcommand{\dta}[3]{d_{#3}\paren{#1,#2}}

\newcommand{\coa}{a}
\newcommand{\coc}{c}
\newcommand{\cob}{b}
\newcommand{\cor}{r}
\newcommand{\conu}{\nu}

\newcommand{\coat}[1]{\coa_{#1}}
\newcommand{\coct}[1]{\coc_{#1}}
\newcommand{\cobt}[1]{\cob_{#1}}
\newcommand{\cort}[1]{\cor_{#1}}
\newcommand{\conut}[1]{\conu_{#1}}

\newcommand{\coatt}{\coat{t}}
\newcommand{\coctt}{\coct{t}}
\newcommand{\cobtt}{\cobt{t}}
\newcommand{\cortt}{\cort{t}}
\newcommand{\conutt}{\conut{t}}

\newcommand{\rb}{R_B}
\newcommand{\proj}{\textrm{proj}}

\section{Introduction}
We consider the online learning regression problem, in which a learning algorithm tries to predict real numbers in a sequence of rounds given some side-information or inputs $\vxi{t}\in\reals^d$. Real-world example applications for these algorithms are weather or stockmarket predictions. The goal of the algorithm is to have a small discrepancy between its predictions and the associated outcomes $\yi{t}\in\reals$. This discrepancy is measured with a loss function, such as the square loss. It is common to evaluate algorithms by their regret, the difference between the cumulative loss of an algorithm with the cumulative loss of any function taken from some class.

Forster~\cite{Forster} proposed a last-step min-max algorithm for online regression that makes a prediction assuming it is the last example to be observed, and the goal of the algorithm is indeed to minimize the regret with respect to linear functions. The resulting optimization problem he obtained was convex in both choice of the algorithm and the choice of the adversary, yielding an unbounded optimization problem. Forster circumvented this problem by assuming a bound $Y$ over the choices of the adversary that should be known to the algorithm, yet his analysis is for the version with no bound.

We propose a modified last-step min-max algorithm with weights over examples, that are controlled in a way to obtain a problem that is concave over the choices of the adversary and convex over the   choices of the algorithm. We analyze our algorithm and show a logarithmic-regret that may have a better multiplicative factor than the analysis of Forster. We derive additional analysis that is logarithmic in the loss of the reference function, rather than the number of rounds $T$. This behaviour was recently given by Orabona et.al~\cite{OrabonaCBG12} for a certain online-gradient decent algorithm. Yet, their bound~\cite{OrabonaCBG12} has a similar multiplicative factor to that of 
Forster~\cite{Forster}, while our bound has a potentially better multiplicative factor and it has the same dependency in the cumulative loss of the reference function as Orabona et.al~\cite{OrabonaCBG12}. Additionally, our algorithm and analysis are totally free of assuming the bound $Y$ or knowing its value.

Competing with the best {\em single} function might not suffice for some problems. In many real-world applications, the true target function is not fixed, but may change from time to time. We bound the performance of our algorithm also in non-stationary environment, where we measure the complexity of the non-stationary environment by the total deviation of a collection of linear functions from some fixed reference point. We show that our algorithm maintains an average loss close to that of the best sequence of functions, as long as the total of this deviation is sublinear in the number of rounds $T$.

A short version appeared in The 23rd International Conference
on
Algorithmic Learning Theory (ALT 2012). 
This journal version of the paper includes additionally: (1) Recursive form of the algorithm and comparison to other algorithms of the same form (\secref{Recursive_form}). (2) Kernel version of the algorithm (\secref{kernel_form}).
(3) MAP interpretation of the minimization problems (\remref{MAP1} and \remref{MAP2}). (4) All proofs and extended related-work section.

\section{Problem Setting}
\label{sec:problem_setting} 
We work in the online setting for regression evaluated with the
squared loss. Online algorithms work in rounds or iterations. On each
iteration an online algorithm receives an instance
$\vxi{t}\in\reals^d$ and predicts a real value $\hyi{t}\in\reals$, it
then receives a label $\yi{t}\in\reals$, possibly chosen by an
adversary, suffers loss $\ell_t(\textrm{alg})=\ell\paren{ \yi{t},
  \hyi{t} } = \paren{\hyi{t}- \yi{t} }^2$, updates its prediction
rule, and proceeds to the next round. The cumulative loss suffered by
the algorithm over $T$ iterations is, 
\begin{equation}
L_{T}(\textrm{alg})=\sum_{t=1}^{T}\ell_{t}(\textrm{alg})
\label{LT} ~.
\end{equation}
The goal of the algorithm is to perform well compared to any predictor
from some function class. 

A common choice is to compare the performance of an algorithm
with respect to {\em a single} function, or specifically a single
linear function,  $f(\vx)=\vxt\vu$, parameterized by a vector
$\vu\in\reals^d$. 
Denote by $\ell_t(\vu) = \paren{\vxti{t}\vu-\yi{t}}^2$ the instantaneous
loss of a vector $\vu$, and by $L_T(\vu) = \sum_t^T
\ell_t(\vu)$.
The regret with respect to $\vu$ is defined to be,
\[
R_T(\vu) = \sum_t^T (\yi{t}- \hyi{t})^2
-  L_T(\vu) ~.
\]
A desired goal of the algorithm is to have ${R}_T(\vu) = o(T)$, that is, the
average loss suffered by the algorithm will converge to the average
loss of the best linear function $\vu$.  

Below in \secref{sec:non_stat} we will also consider an extension of
this form of regret, and evaluate the performance of an algorithm
against some $T$-tuple of functions, $(\vui{1} \comdots \vui{T})\in
\reals^d \times \dots \times \reals^d$,
\[
R_T(\vui{1}\comdots \vui{T})  = \sum_t^T (\yi{t}- \hyi{t})^2-  L_T(\vui{1} \comdots \vui{T})~, 
\]
where $ L_T(\vui{1} \comdots \vui{T}) = \sum_t^T
\ell_t(\vui{t})$.
Clearly, with no restriction of the $T$-tuple, any algorithm may
suffer a regret linear in $T$, as one can set $\vui{t} = \vxi{t}
(\yi{t}/\normt{\vxi{t}})$, and suffer zero quadratic loss in all rounds. Thus, we restrict below
the possible choices of $T$-tuple either explicitly, or implicitly via
some penalty.







\section{A Last Step Min-Max Algorithm}
Our algorithm is derived based on a last-step min-max prediction,
proposed by Forster~\cite{Forster} and Takimoto and
Warmuth~\cite{TakimotoW00}. See also the work of Azoury and
Warmuth~\cite{AzouryWa01}. An algorithm following this approach outputs the
min-max prediction assuming the current iteration is the last one. The
algorithm we describe below is based on an extension of this
notion. For this purpose we introduce a weighted cumulative loss using
positive input-dependent weights $\left\{ a_{t}\right\} _{t=1}^{T}$,
\[
L_{T}^{\boldsymbol{a}}(\mathbf{u})=\sum_{t=1}^{T}a_{t}\left(y_{t}-\mathbf{u}^{\top}\mathbf{x}_{t}\right)^{2}
\quad,\quad
L_{T}^{\boldsymbol{a}}(\vui{1} \comdots
\vui{T})=\sum_{t=1}^{T}a_{t}\left(y_{t}-\vuti{t} \vxi{t}\right)^{2}~.
\]
The exact values of the weights $a_t$ will be defined below.

Our variant of the last step min-max algorithm predicts\footnote{$\yi{T}$ and $\hyi{T}$ serves both as quantifiers (over the
 $\min$ and $\max$ operators, respectively), and as the optimal values
 over this optimization problem. }
\begin{align}
\hyi{T} = \arg\min_{\hyi{T}} \max_{\yi{T}} \brackets{\sum_{t=1}^{T} (\yi{t} -
  \hyi{t})^2  - \inf_{\vu} \paren{b\left\Vert \mathbf{u}\right\Vert
    ^{2}+L_{T}^{\boldsymbol{a}}(\vu)}}~,
\label{minmax_algorithm_1}
\end{align}
for some positive constant $b>0$.
We next compute the actual prediction based on the optimal last step min-max solution. We
start with additional notation,
\begin{align}
\mathbf{A}_{t}
&=b\mathbf{I}+\sum_{s=1}^{t}a_{s}\mathbf{x}_{s}\mathbf{x}_{s}^{\top}&&\in\mathbb{R}^{d\times
  d}\label{Adef}\\
\mathbf{b}_{t}&=\sum_{s=1}^{t}a_{s}y_{s}\mathbf{x}_{s}&&\in\mathbb{R}^{d}\label{bdef} ~.
\end{align}
The solution of the internal infimum over $\vu$ is summarized in the
following lemma.
\begin{figure}[!t!]
\includegraphics[width=0.33\textwidth]{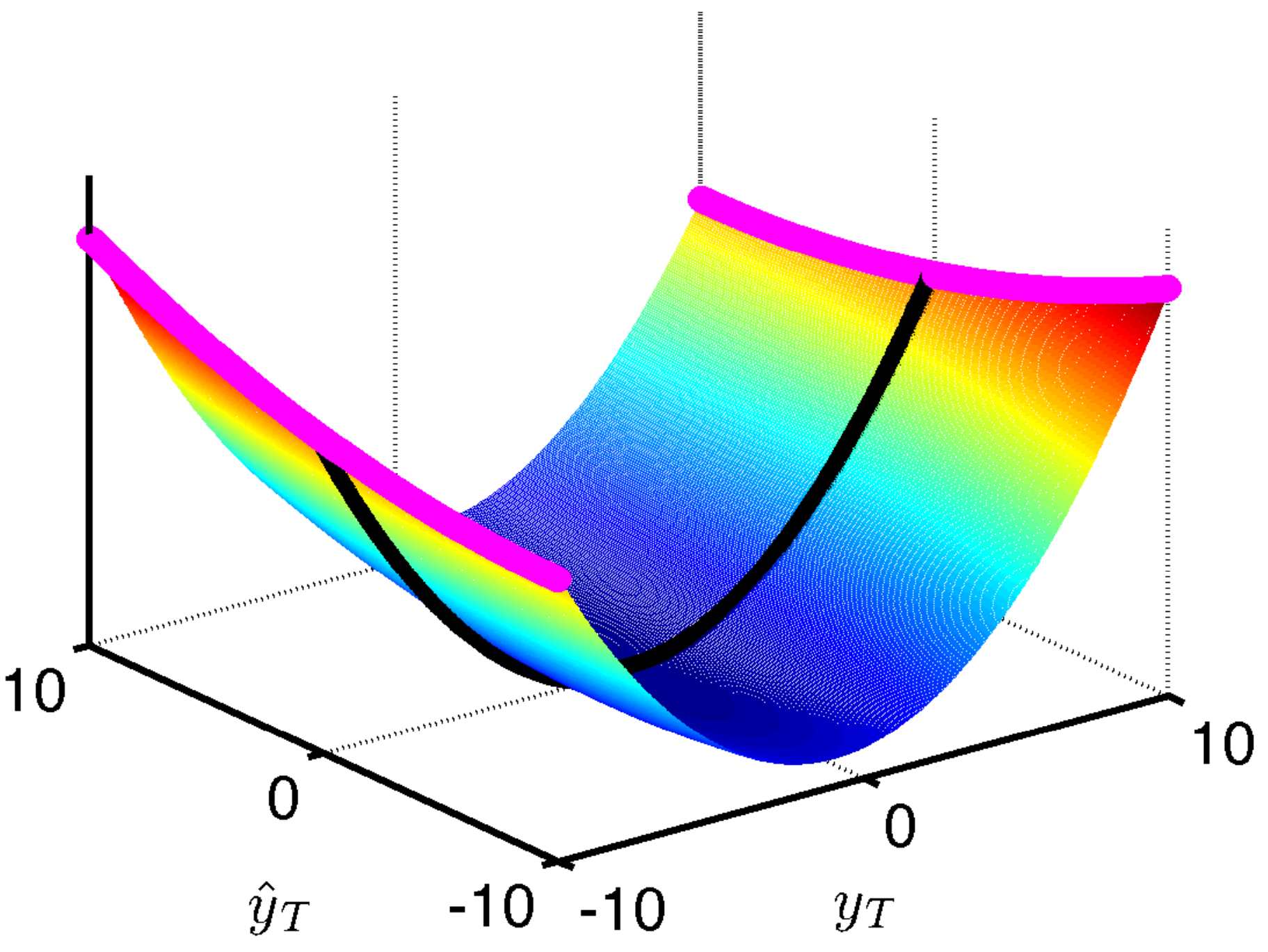}
\includegraphics[width=0.33\textwidth]{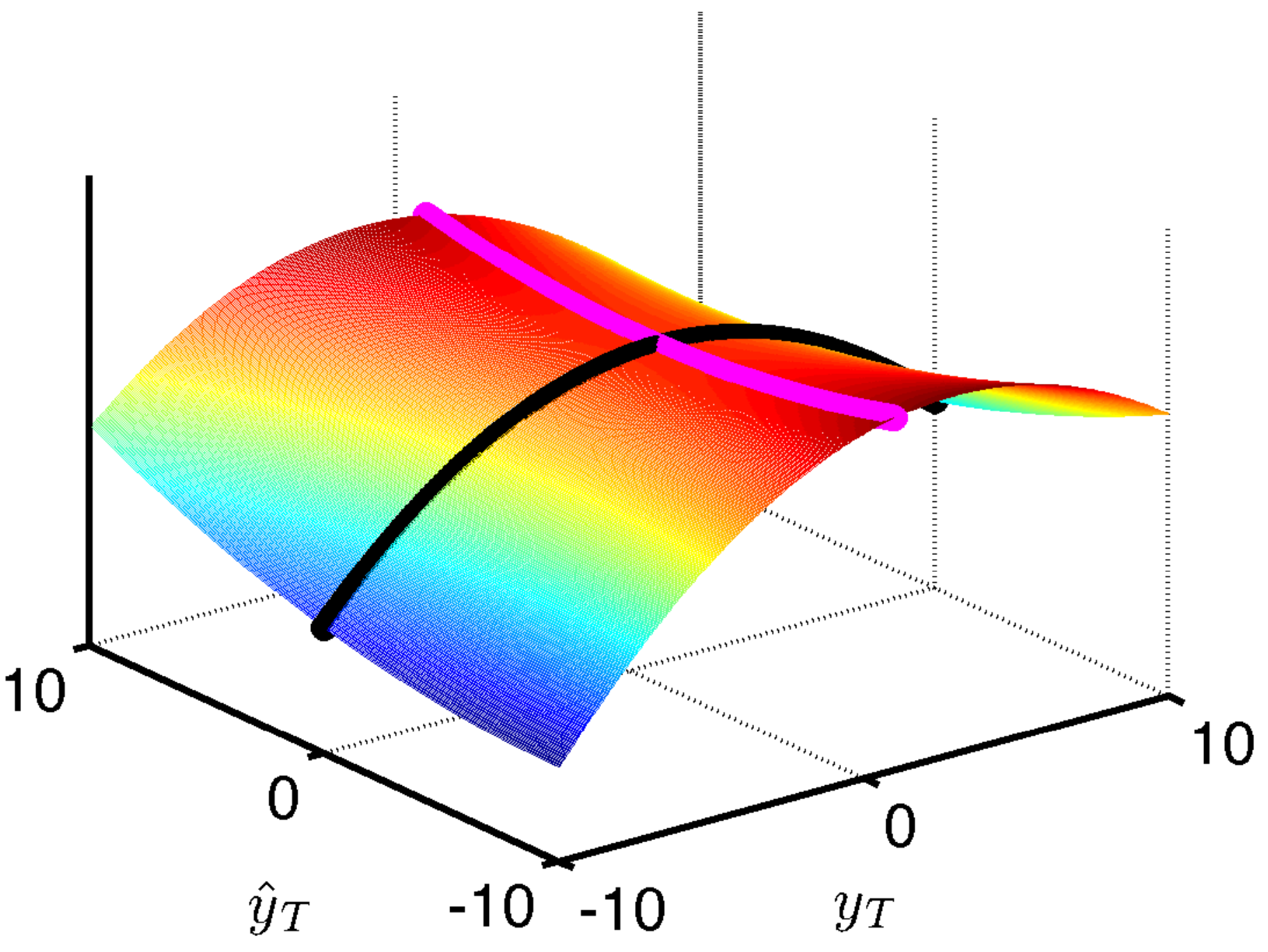}
\includegraphics[width=0.33\textwidth]{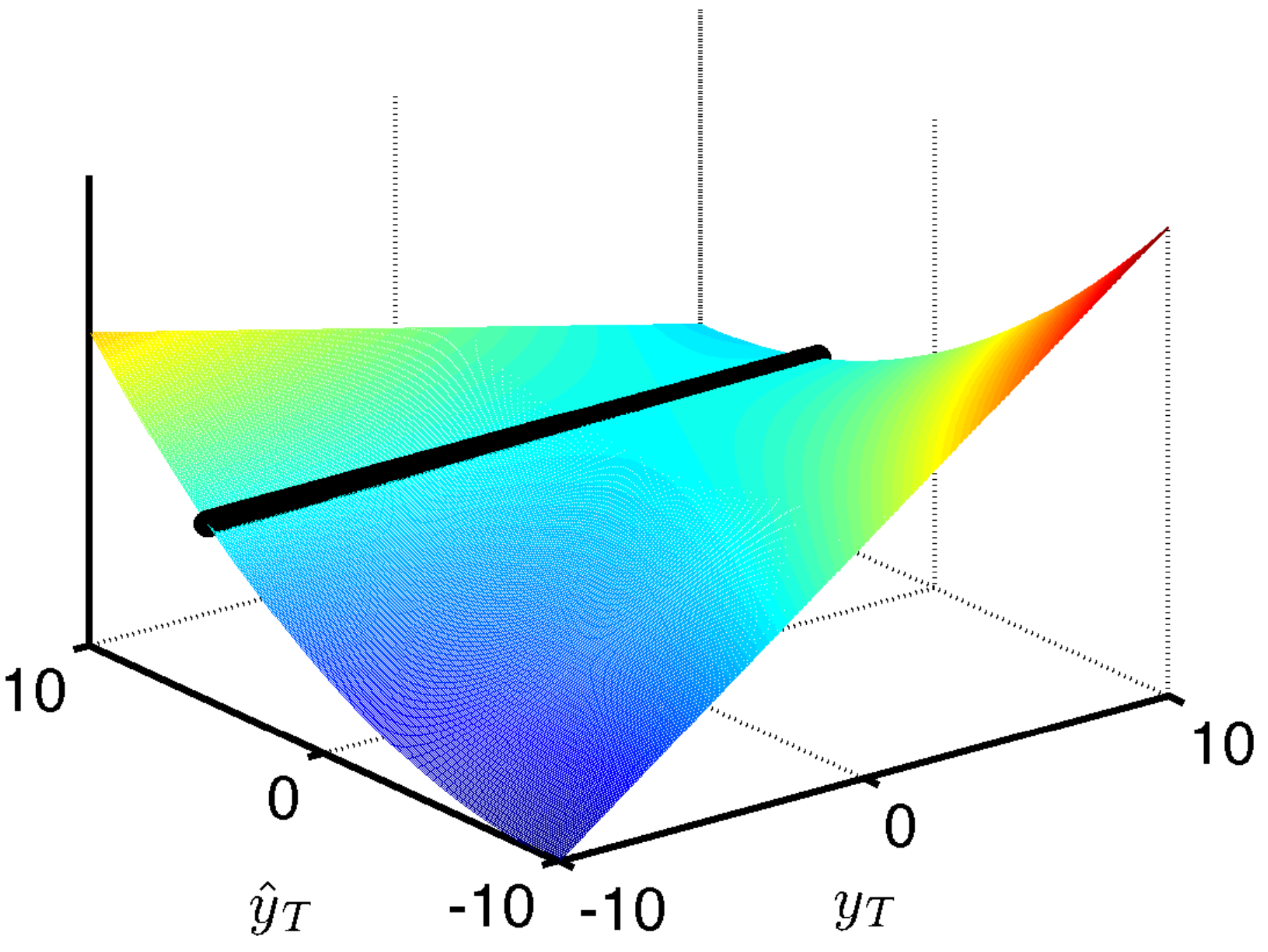}
\caption{
An illustration of the minmax objective function $G(\yi{T},\hyi{T})$
\eqref{minmax_objective}. The black line is the value of the objective
as a function of $\yi{T}$ for the optimal predictor $\hyi{T}$. 
Left: Forster's optimization function (convex in $\yi{T}$). 
Center: our optimization function (strictly  concave in $\yi{T}$,
case 1 in \thmref{thm:theorem1}).
Right: our optimization function (invariant to $\yi{T}$, case 2 in \thmref{thm:theorem1}).}
\label{fig:minmax}
\end{figure}
\begin{lemma}
\label{lem:lemma1}
For all $t\geq1$, the function $f\left(\mathbf{u}\right)=b\left\Vert \mathbf{u}\right\Vert ^{2}+\sum_{s=1}^{t}a_{s}\left(y_{s}-\mathbf{u}^{\top}\mathbf{x}_{s}\right)^{2}$
is minimal at a unique point $\mathbf{u}_{t}$ given by,
\begin{align}
\mathbf{u}_{t}=\mathbf{A}_{t}^{-1}\mathbf{b}_{t}\quad\textrm{ and
}\quad
f(\mathbf{u}_{t})=\sum_{s=1}^{t}a_{s}y_{s}^{2}-\mathbf{b}_{t}^{\top}\mathbf{A}_{t}^{-1}\mathbf{b}_{t}
~. \label{optimal_solution}
\end{align}
\end{lemma}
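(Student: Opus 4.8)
The plan is to recognize $f(\mathbf{u})$ as a strictly convex quadratic form in $\mathbf{u}$ and to minimize it by completing the square. First I would expand the squared terms and collect them using the definitions in \eqref{Adef}--\eqref{bdef} together with the identity $\paren{\mathbf{u}^\top\mathbf{x}_s}^2 = \mathbf{u}^\top\paren{\mathbf{x}_s\mathbf{x}_s^\top}\mathbf{u}$:
\[
f(\mathbf{u}) = b\left\Vert\mathbf{u}\right\Vert^2 + \sum_{s=1}^{t} a_s\paren{y_s^2 - 2 y_s \mathbf{u}^\top\mathbf{x}_s + \paren{\mathbf{u}^\top\mathbf{x}_s}^2} = \mathbf{u}^\top\mathbf{A}_t\mathbf{u} - 2\mathbf{b}_t^\top\mathbf{u} + \sum_{s=1}^{t} a_s y_s^2 ~.
\]

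Second, I would observe that since $b>0$ and each weight $a_s>0$, the matrix $\mathbf{A}_t = b\mathbf{I} + \sum_{s=1}^{t} a_s\mathbf{x}_s\mathbf{x}_s^\top$ is positive definite (the sum of a positive-definite matrix and a positive-semidefinite one), hence invertible; in particular $f$ is strictly convex and therefore has a unique minimizer. Completing the square with respect to $\mathbf{A}_t$ gives
\[
f(\mathbf{u}) = \paren{\mathbf{u} - \mathbf{A}_t^{-1}\mathbf{b}_t}^\top\mathbf{A}_t\paren{\mathbf{u} - \mathbf{A}_t^{-1}\mathbf{b}_t} - \mathbf{b}_t^\top\mathbf{A}_t^{-1}\mathbf{b}_t + \sum_{s=1}^{t} a_s y_s^2 ~.
\]
Because $\mathbf{A}_t$ is positive definite, the first term is nonnegative and vanishes exactly when $\mathbf{u} = \mathbf{u}_t = \mathbf{A}_t^{-1}\mathbf{b}_t$, which establishes simultaneously that the minimizer is unique, that it equals $\mathbf{A}_t^{-1}\mathbf{b}_t$, and that the minimal value is $\sum_{s=1}^{t} a_s y_s^2 - \mathbf{b}_t^\top\mathbf{A}_t^{-1}\mathbf{b}_t$. (Equivalently one can set the gradient $\nabla f(\mathbf{u}) = 2\mathbf{A}_t\mathbf{u} - 2\mathbf{b}_t$ to zero and substitute back.)

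I do not expect a genuine obstacle here: this is precisely the classical regularized least-squares (ridge regression) normal-equation computation, and every step is elementary linear algebra. The only point deserving a word of care is the strict positive-definiteness of $\mathbf{A}_t$ --- it is what guarantees uniqueness of $\mathbf{u}_t$ (and, later, well-posedness of the min-max problem) --- but this is immediate from $b>0$ regardless of the inputs $\vxi{s}$ or weights $a_s>0$, and the argument is uniform in $t\geq 1$.
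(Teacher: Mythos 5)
Your proposal is correct and follows essentially the same route as the paper: both expand $f$ into the quadratic form $\mathbf{u}^\top\mathbf{A}_t\mathbf{u}-2\mathbf{b}_t^\top\mathbf{u}+\sum_s a_s y_s^2$ and then minimize, the paper by setting the gradient to zero (noting convexity via the Hessian $2\mathbf{A}_t$) and you by completing the square, which you yourself note is equivalent. Your explicit remark that $\mathbf{A}_t\succ 0$ because $b>0$ is a small but welcome addition that the paper leaves implicit.
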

\begin{proof}
From 
\begin{eqnarray*}
f\left(\mathbf{u}\right) & = & b\left\Vert \mathbf{u}\right\Vert ^{2}+\sum_{s=1}^{t}a_{s}\left(y_{s}-\mathbf{u}^{\top}\mathbf{x}_{s}\right)^{2}\\
 & = & \sum_{s=1}^{t}a_{s}y_{s}^{2}-2\sum_{s=1}^{t}\mathbf{u}^{\top}\left(a_{s}y_{s}\mathbf{x}_{s}\right)+\mathbf{u}^{\top}\left(b\mathbf{I}+\sum_{s=1}^{t}a_{s}\mathbf{x}_{s}\mathbf{x}_{s}^{\top}\right)\mathbf{u}\\
 & \underset{=}{\eqref{Adef},}\eqref{bdef} & \sum_{s=1}^{t}a_{s}y_{s}^{2}-2\mathbf{u}^{\top}\mathbf{b}_{t}+\mathbf{u}^{\top}\mathbf{A}_{t}\mathbf{u}
\end{eqnarray*}
it follows that $\nabla
f\left(\mathbf{u}\right)=2\mathbf{A}_{t}\mathbf{u}-2\mathbf{b}_{t},\:
\triangle f(\mathbf{u})=2\mathbf{A}_{t}$.
Thus $f$ is convex and it is minimal if $\nabla f\left(\mathbf{u}\right)=0$,
i.e. for $\mathbf{u}=\mathbf{A}_{t}^{-1}\mathbf{b}_{t}$. This show
that $\mathbf{u}_{t}=\mathbf{A}_{t}^{-1}\mathbf{b}_{t}$ and we obtain

\[
f\left(\mathbf{u}_{t}\right)=f\left(\mathbf{A}_{t}^{-1}\mathbf{b}_{t}\right)=\sum_{s=1}^{t}a_{s}y_{s}^{2}-2\mathbf{b}_{t}^{\top}\mathbf{A}_{t}^{-1}\mathbf{b}_{t}+\mathbf{b}_{t}^{\top}\mathbf{A}_{t}^{-1}\mathbf{A}_{t}\mathbf{A}_{t}^{-1}\mathbf{b}_{t}=\sum_{s=1}^{t}a_{s}y_{s}^{2}-\mathbf{b}_{t}^{\top}\mathbf{A}_{t}^{-1}\mathbf{b}_{t}~.
\]
\QED
\end{proof}
\begin{Remark}
\label{MAP1}
The minimization problem in \lemref{lem:lemma1} can be interpreted as MAP estimator
of $\mathbf{u}$ based on the sequence $\left\{ \left(\mathbf{x}_{s},y_{s}\right)\right\} _{s=1}^{t}$
in the following generative model:
\begin{eqnarray*}
\mathbf{u} & \sim & N\left(0,\sigma_{b}^{2}\mathbf{I}\right)\\
y_{s} & \sim & N\left(\mathbf{x}_{s}^{\top}\mathbf{u},\sigma_{s}^{2}\right)~,
\end{eqnarray*}
where $\sigma_{b}^{2}=\frac{1}{2b}$ and $\sigma_{s}^{2}=\frac{1}{2a_{s}}$.

Under the model we calculate,
\begin{eqnarray}
\mathbf{u}_{MAP} & = & \arg\max_{\mathbf{u}}P\left(\mathbf{u}\mid\left\{ \mathbf{x}_{s}\right\} ,\left\{ y_{s}\right\} \right) \nonumber \\
 & = & \arg\max_{\mathbf{u}}\left[P\left(\mathbf{u}\right)\prod_{s=1}^{t}P\left(y_{s}\mid\mathbf{u},\mathbf{x}_{s}\right)\right] \nonumber \\
 & = & \arg\min_{\mathbf{u}}\left[-\log P\left(\mathbf{u}\right)-\sum_{s=1}^{t}\log P\left(y_{s}\mid\mathbf{u},\mathbf{x}_{s}\right)\right]~. \label{u_map}
\end{eqnarray}
By our gaussian generative model,
\begin{align*}
&-\log P\left(\mathbf{u}\right)&=&\log\left(2\pi\sigma_{b}^{2}\right)^{d/2}+\frac{1}{2\sigma_{b}^{2}}\left\Vert \mathbf{u}\right\Vert ^{2}\\
&-\log P\left(y_{s}\mid\mathbf{u},\mathbf{x}_{s}\right)&=&\log\left(2\pi\sigma_{s}^{2}\right)^{1/2}+\frac{1}{2\sigma_{s}^{2}}\left(y_{s}-\mathbf{x}_{s}^{\top}\mathbf{u}\right)^{2}~.
\end{align*}
Substituting in \eqref{u_map} we get
\[
 \mathbf{u}_{MAP}=\arg\min_{\mathbf{u}}\left[\frac{1}{2\sigma_{b}^{2}}\left\Vert \mathbf{u}\right\Vert ^{2}+\sum_{s=1}^{t}\frac{1}{2\sigma_{s}^{2}}\left(y_{s}-\mathbf{x}_{s}^{\top}\mathbf{u}\right)^{2}\right]~,
\]
and by using $\frac{1}{2\sigma_{b}^{2}}=b$,
$\frac{1}{2\sigma_{s}^{2}}=a_{s}$ we get the minimization
problem of \lemref{lem:lemma1}. 
\end{Remark}

Substituting \eqref{optimal_solution} back in
\eqref{minmax_algorithm_1} we obtain the following form of the minmax
problem,
%
\begin{align}
\min_{\hyi{T}} \max_{\yi{T}} G(\yi{T},\hyi{T}) \quad\textrm{ for }
\quad G(\yi{T},\hyi{T})= \alpha(a_T) \yi{T}^2 + 2 \beta(a_T,
\hyi{T}) \yi{T} + \hyi{T}^2 ~,\label{minmax_objective}
\end{align}
for some functions $\alpha(a_T)$ and $\beta(a_T, \hyi{T})$. Clearly,
for this problem to be well defined the function $G$ should be convex
in $\hyi{T}$ and concave in $\yi{T}$.

A previous choice, proposed by Forster~\cite{Forster}, is to have uniform weights
and set $a_t=1$ (for $t=1 \comdots T$), which for the particular function $\alpha(a_T)$
yields $\alpha(a_T)>0$. Thus, $G(\yi{T},\hyi{T})$ is a convex function
in $\yi{T}$, implying that the optimal value of $G$ is not bounded
from above. Forster~\cite{Forster} addressed this problem by restricting
$\yi{T}$ to belong to a predefined interval $[-Y,Y]$, known also to
the learner. As a consequence, the adversary optimal prediction is in
fact either $\yi{T}=Y$ or $\yi{T}=-Y$, which in turn yields an optimal
predictor which is clipped at this bound, \(
\hat{y}_{T}={\rm clip}\left(\mathbf{b}_{T-1}^{\top}\mathbf{A}_{T}^{-1}\mathbf{x}_{T},Y\right)\),
where for $y>0$ we define ${\rm clip}(x,y)=x$ if $\vert x \vert \leq y$ and
${\rm clip}(x,y) =y\, \sign(x)$, otherwise.

This phenomena is illustrated in the left panel
of \figref{fig:minmax} (best viewed in color).  For the minmax optimization function defined
by Forster~\cite{Forster}, fixing some value of $\hat{y}_{T}$, the function
is convex in $y_{T}$, and the adversary would achieve a maximal value
at the boundary of the feasible values of $y_{T}$ interval. That is,
either $y_{T}=Y$ or $y_{T}=-Y$, as indicated by the two magenta lines
at $\yi{T}=\pm10$. The optimal predictor $\hat{y}_{T}$ is
achieved somewhere along the lines $y_{T}=Y$ or $y_{T}=-Y$.



We propose an alternative approach to make the minmax optimal solution
bounded by appropriately setting the weight $a_T$ such that
$G(\yi{T},\hyi{T})$ is concave in $\yi{T}$ for a constant
$\hat{y}_{T}$. We explicitly consider two cases.  
First, set $a_T$ such that $G(\yi{T},\hyi{T})$ is {\em strictly
  concave} in $\yi{T}$, and thus attains a single maximum with no need
to artificially restrict the value of $\yi{T}$. In this case our
function is concave in $y_{T}$ in the first option and has a maximum
point, which is the worst adversary. The optimal predictor
$\hat{y}_{T}$ is achieved in the unique saddle point, as illustrated
in the center panel of \figref{fig:minmax}.
A second case is to set $a_T$ such that $\alpha(a_T)=0$ and the
minmax function $G(\yi{T},\hyi{T})$ becomes linear in $\yi{T}$. Here,
the optimal prediction is achieved by choosing $\hyi{T}$ such that
$\beta(a_T, \hyi{T})=0$ which turns $G(\yi{T},\hyi{T})$ to be
invariant to $\yi{T}$, as illustrated in the right panel of
\figref{fig:minmax}.  

Equipped with
\lemref{lem:lemma1} we develop the optimal solution of the min-max
predictor, summarized in the following theorem.
\begin{theorem}
\label{thm:theorem1}
Assume that $1+a_{T}\mathbf{x}_{T}^{\top}\mathbf{A}_{T-1}^{-1}\mathbf{x}_{T}-a_{T}\leq0$.
Then the optimal prediction for the last round $T$ is 
\begin{align}
\hat{y}_{T}=\mathbf{b}_{T-1}^{\top}\mathbf{A}_{T-1}^{-1}\mathbf{x}_{T}
~. \label{laststep_minmax_optimal}
\end{align}
\end{theorem}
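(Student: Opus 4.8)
The plan is to substitute the closed form of \lemref{lem:lemma1} into the min-max problem \eqref{minmax_algorithm_1}, reduce it to a two-variable quadratic game in $(\yi{T},\hyi{T})$, and solve that game in closed form; the only delicate point is a rank-one update computation, after which two elementary one-dimensional optimizations finish the proof.

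First I would apply \lemref{lem:lemma1} with $t=T$ to rewrite the bracketed term of \eqref{minmax_algorithm_1} as $\sum_{t=1}^{T}(\yi{t}-\hyi{t})^2-\sum_{s=1}^{T}a_s\yi{s}^2+\vbi{T}^{\top}\mai{T}^{-1}\vbi{T}$. Everything carrying an index at most $T-1$ is fixed in the last-step problem, so only the three terms $(\yi{T}-\hyi{T})^2-a_T\yi{T}^2+\vbi{T}^{\top}\mai{T}^{-1}\vbi{T}$ involve $(\yi{T},\hyi{T})$, where $\vbi{T}=\vbi{T-1}+a_T\yi{T}\vxi{T}$ and $\mai{T}=\mai{T-1}+a_T\vxi{T}\vxi{T}^{\top}$. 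Then I would expand $\vbi{T}^{\top}\mai{T}^{-1}\vbi{T}$ as a quadratic in $\yi{T}$ by means of the Sherman--Morrison identity $\mai{T}^{-1}=\mai{T-1}^{-1}-\frac{a_T\mai{T-1}^{-1}\vxi{T}\vxi{T}^{\top}\mai{T-1}^{-1}}{1+a_T\vxi{T}^{\top}\mai{T-1}^{-1}\vxi{T}}$, which is valid because $\mai{T-1}$, being $b\mi$ plus a positive-semidefinite matrix with $b>0$, is positive definite, hence invertible. Writing $r=\vxi{T}^{\top}\mai{T-1}^{-1}\vxi{T}$ and $p=\vbi{T-1}^{\top}\mai{T-1}^{-1}\vxi{T}$, the coefficients of $\yi{T}^2$ and $\yi{T}$ collapse and one reaches exactly the form \eqref{minmax_objective}, namely $G(\yi{T},\hyi{T})=\alpha(a_T)\yi{T}^2+2\beta(a_T,\hyi{T})\yi{T}+\hyi{T}^2$ up to an additive constant, with $\alpha(a_T)=\frac{1+a_Tr-a_T}{1+a_Tr}$ and $\beta(a_T,\hyi{T})=\frac{a_Tp}{1+a_Tr}-\hyi{T}$. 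Since $1+a_Tr>0$, the hypothesis $1+a_T\vxi{T}^{\top}\mai{T-1}^{-1}\vxi{T}-a_T\le 0$ is precisely $\alpha(a_T)\le 0$, so $G$ is concave in $\yi{T}$ and the min-max is well posed.

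Finally I would carry out the inner $\max_{\yi{T}}$ and then the outer $\min_{\hyi{T}}$, splitting on the sign of $\alpha(a_T)$. If $\alpha(a_T)<0$ (the strictly concave case), then $\max_{\yi{T}}G(\yi{T},\hyi{T})=-\beta(a_T,\hyi{T})^2/\alpha(a_T)+\hyi{T}^2=\hyi{T}^2+\beta(a_T,\hyi{T})^2/|\alpha(a_T)|$, which, $\beta$ being affine in $\hyi{T}$, is a convex quadratic in $\hyi{T}$; setting its derivative to zero gives $\hyi{T}=\frac{a_Tp/(1+a_Tr)}{1+|\alpha(a_T)|}$, and since $1+|\alpha(a_T)|=\frac{a_T}{1+a_Tr}$ this simplifies to $\hyi{T}=p=\vbi{T-1}^{\top}\mai{T-1}^{-1}\vxi{T}$. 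If $\alpha(a_T)=0$, then $G$ is affine in $\yi{T}$, so $\max_{\yi{T}}G=+\infty$ unless $\beta(a_T,\hyi{T})=0$; hence the minimizing $\hyi{T}$ must make $\beta$ vanish, i.e.\ $\hyi{T}=\frac{a_Tp}{1+a_Tr}$, which again equals $\vbi{T-1}^{\top}\mai{T-1}^{-1}\vxi{T}$ because $\alpha(a_T)=0$ forces $1+a_Tr=a_T$. In both cases we obtain \eqref{laststep_minmax_optimal}.

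The main obstacle is the bookkeeping in the Sherman--Morrison step: one must check that the unwieldy coefficients of $\yi{T}^2$ and $\yi{T}$ in $\vbi{T}^{\top}\mai{T}^{-1}\vbi{T}$ combine with $(\yi{T}-\hyi{T})^2-a_T\yi{T}^2$ into the clean $\alpha(a_T)$ and $\beta(a_T,\hyi{T})$ above, and then that the nested optimum really reduces to $p$ rather than to a rescaled multiple of it; the cancellation $1+|\alpha(a_T)|=\frac{a_T}{1+a_Tr}$ is precisely what produces this. It is worth noting that no minimax theorem is invoked: the problem is solved in its stated order $\min_{\hyi{T}}\max_{\yi{T}}$, and under the hypothesis $\max_{\yi{T}}G(\cdot,\hyi{T})$ is a finite convex function of $\hyi{T}$ that attains its minimum.
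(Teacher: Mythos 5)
Your proposal is correct and follows essentially the same route as the paper: substitute \lemref{lem:lemma1}, use the Sherman--Morrison/Woodbury update to reduce the problem to the quadratic game \eqref{minmax_objective} (your inline computation of $\alpha(a_T)$ is exactly \lemref{lem:lemma2}), and then split on strict concavity versus linearity in $\yi{T}$. The only cosmetic difference is that you carry out the final cancellation scalar-wise via $1+|\alpha(a_T)|=a_T/(1+a_T r)$, whereas the paper first obtains \eqref{t1} and then simplifies with the matrix identity \eqref{t2}; both yield \eqref{laststep_minmax_optimal}.
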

%
The proof of the theorem makes use of the following technical lemma.
\begin{lemma}
\label{lem:lemma2}
For all $t=1,2,\ldots,T$
\begin{equation}
a_{t}^{2}\mathbf{x}_{t}^{\top}\mathbf{A}_{t}^{-1}\mathbf{x}_{t}+1-a_{t}=\frac{1+a_{t}\mathbf{x}_{t}^{\top}\mathbf{A}_{t-1}^{-1}\mathbf{x}_{t}-a_{t}}{1+a_{t}\mathbf{x}_{t}^{\top}\mathbf{A}_{t-1}^{-1}\mathbf{x}_{t}}
~. \label{lemma2}
\end{equation}
\end{lemma}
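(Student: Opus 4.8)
The plan is to reduce everything to the scalar quantity $z_t \eqdef \mathbf{x}_t^\top \mathbf{A}_{t-1}^{-1}\mathbf{x}_t$ by means of the Sherman--Morrison identity. First I would observe that $\mathbf{A}_t = \mathbf{A}_{t-1} + a_t \mathbf{x}_t\mathbf{x}_t^\top$ by \eqref{Adef} (with the convention $\mathbf{A}_0 = b\mathbf{I}$), and that both $\mathbf{A}_{t-1}$ and $\mathbf{A}_t$ are invertible since $b>0$ forces $\mathbf{A}_t \succeq b\mathbf{I} \succ 0$. Applying Sherman--Morrison to this rank-one update gives
\[
\mathbf{A}_t^{-1} = \mathbf{A}_{t-1}^{-1} - \frac{a_t\,\mathbf{A}_{t-1}^{-1}\mathbf{x}_t\mathbf{x}_t^\top\mathbf{A}_{t-1}^{-1}}{1 + a_t z_t}~,
\]
where the denominator $1 + a_t z_t$ is strictly positive because $a_t>0$ and $z_t \geq 0$ ($\mathbf{A}_{t-1}^{-1}$ being positive definite).

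Next I would multiply on the left by $\mathbf{x}_t^\top$ and on the right by $\mathbf{x}_t$ to obtain the scalar identity
\[
\mathbf{x}_t^\top\mathbf{A}_t^{-1}\mathbf{x}_t = z_t - \frac{a_t z_t^2}{1+a_t z_t} = \frac{z_t}{1 + a_t z_t}~.
\]
Substituting this into the left-hand side of \eqref{lemma2} yields
\[
a_t^2\,\mathbf{x}_t^\top\mathbf{A}_t^{-1}\mathbf{x}_t + 1 - a_t = \frac{a_t^2 z_t}{1+a_t z_t} + 1 - a_t = \frac{a_t^2 z_t + (1-a_t)(1+a_t z_t)}{1+a_t z_t}~.
\]
Expanding the numerator, the terms $a_t^2 z_t$ and $-a_t^2 z_t$ cancel, leaving $1 + a_t z_t - a_t$; rewriting $z_t$ as $\mathbf{x}_t^\top\mathbf{A}_{t-1}^{-1}\mathbf{x}_t$ gives exactly the right-hand side of \eqref{lemma2}.

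There is no genuinely hard step here: the only points requiring care are (i) invoking invertibility of $\mathbf{A}_{t-1}$ and $\mathbf{A}_t$, immediate from $b>0$, and (ii) the single algebraic cancellation in the numerator. If one prefers a self-contained argument that does not quote Sherman--Morrison, one can instead verify $\mathbf{x}_t^\top\mathbf{A}_t^{-1}\mathbf{x}_t = z_t/(1+a_t z_t)$ directly: multiplying the claimed identity $(1+a_t z_t)\,\mathbf{A}_t^{-1}\mathbf{x}_t = \mathbf{A}_{t-1}^{-1}\mathbf{x}_t$ through by $\mathbf{A}_t = \mathbf{A}_{t-1}+a_t\mathbf{x}_t\mathbf{x}_t^\top$ reduces it to an identity in $z_t$, and then proceed with the same final algebra. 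Either route is short; I would present the Sherman--Morrison version.
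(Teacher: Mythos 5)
Your proof is correct and follows essentially the same route as the paper's: the paper also applies the Woodbury (Sherman--Morrison) identity to the rank-one update $\mathbf{A}_t = \mathbf{A}_{t-1} + a_t\mathbf{x}_t\mathbf{x}_t^\top$, sandwiches with $\mathbf{x}_t^\top(\cdot)\mathbf{x}_t$, and performs the same scalar cancellation. Your explicit remarks on invertibility and the positivity of the denominator $1+a_t\mathbf{x}_t^\top\mathbf{A}_{t-1}^{-1}\mathbf{x}_t$ are fine additions but not a substantive difference.
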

The proof appears in \ref{proof_lemma2}. We now prove
\thmref{thm:theorem1}.
\begin{proof}
The adversary can choose any $y_{T}$, thus the algorithm should predict
$\hat{y}_{T}$ such that the following quantity is minimal,
\begin{eqnarray*}
&&\max_{y_{T}}\left(\sum_{t=1}^{T}\left(y_{t}-\hat{y}_{t}\right)^{2}-\inf_{\mathbf{u}\in\mathbb{R}^{d}}\left(b\left\Vert \mathbf{u}\right\Vert ^{2}+\sum_{t=1}^{T}a_{t}\left(y_{t}-\mathbf{u}^{\top}\mathbf{x}_{t}\right)^{2}\right)\right)\\
&\overset{\eqref{optimal_solution}}{=}&\max_{y_{T}}\left(\sum_{t=1}^{T}\left(y_{t}-\hat{y}_{t}\right)^{2}-\sum_{t=1}^{T}a_{t}y_{t}^{2}+\mathbf{b}_{T}^{\top}\mathbf{A}_{T}^{-1}\mathbf{b}_{T}\right) ~.
\end{eqnarray*}
That is, we need to solve the following minmax problem 
\[
\min_{\hat{y}_{T}}\max_{y_{T}}\left(\sum_{t=1}^{T}\left(y_{t}-\hat{y}_{t}\right)^{2}-\sum_{t=1}^{T}a_{t}y_{t}^{2}+\mathbf{b}_{T}^{\top}\mathbf{A}_{T}^{-1}\mathbf{b}_{T}\right)~.
\]
We use the following relation to re-write the optimization problem, 
\begin{eqnarray}
\mathbf{b}_{T}^{\top}\mathbf{A}_{T}^{-1}\mathbf{b}_{T} 
 & = &
 \mathbf{b}_{T-1}^{\top}\mathbf{A}_{T}^{-1}\mathbf{b}_{T-1}+2a_{T}y_{T}\mathbf{b}_{T-1}^{\top}\mathbf{A}_{T}^{-1}\mathbf{x}_{T}+a_{T}^{2}y_{T}^{2}\mathbf{x}_{T}^{\top}\mathbf{A}_{T}^{-1}\mathbf{x}_{T} ~.\label{t3}
\end{eqnarray}
Omitting all terms that are
not depending on $y_{T}$ and $\hat{y}_{T}$,
\[
\min_{\hat{y}_{T}}\max_{y_{T}}\left(\left(y_{T}-\hat{y}_{T}\right)^{2}-a_{T}y_{T}^{2}+2a_{T}y_{T}\mathbf{b}_{T-1}^{\top}\mathbf{A}_{T}^{-1}\mathbf{x}_{T}+a_{T}^{2}y_{T}^{2}\mathbf{x}_{T}^{\top}\mathbf{A}_{T}^{-1}\mathbf{x}_{T}\right)~.
\]
We manipulate the last problem to be of form \eqref{minmax_objective} using \lemref{lem:lemma2},
\begin{align}
\min_{\hat{y}_{T}}\max_{y_{T}} \left(\!
  \frac{1+a_{T}\mathbf{x}_{T}^{\top}\mathbf{A}_{T-1}^{-1}\mathbf{x}_{T}-a_{T}}{1+a_{T}\mathbf{x}_{T}^{\top}\mathbf{A}_{T-1}^{-1}\mathbf{x}_{T}}y_{T}^{2}+2y_{T}\left(a_{T}\mathbf{b}_{T-1}^{\top}\mathbf{A}_{T}^{-1}\mathbf{x}_{T}-\hat{y}_{T}\right)+\hat{y}_{T}^{2}
\!\right),\label{minmax}
\end{align}
where 
\[
\alpha(a_T)=\frac{1+a_{T}\mathbf{x}_{T}^{\top}\mathbf{A}_{T-1}^{-1}\mathbf{x}_{T}-a_{T}}{1+a_{T}\mathbf{x}_{T}^{\top}\mathbf{A}_{T-1}^{-1}\mathbf{x}_{T}}
\quad\textrm{ and }\quad
\beta(a_T,\hyi{T})=a_{T}\mathbf{b}_{T-1}^{\top}\mathbf{A}_{T}^{-1}\mathbf{x}_{T}-\hat{y}_{T}~.
\]

We consider two cases:  (1)
$1+a_{T}\mathbf{x}_{T}^{\top}\mathbf{A}_{T-1}^{-1}\mathbf{x}_{T}-a_{T}<0$ 
(corresponding to the middle panel of \figref{fig:minmax}),
and (2)
$1+a_{T}\mathbf{x}_{T}^{\top}\mathbf{A}_{T-1}^{-1}\mathbf{x}_{T}-a_{T}=0$ 
(corresponding to the right panel of \figref{fig:minmax}),
starting with the first case,
\begin{equation}
1+a_{T}\mathbf{x}_{T}^{\top}\mathbf{A}_{T-1}^{-1}\mathbf{x}_{T}-a_{T}<0\label{option1}~.
\end{equation}
Denote the inner-maximization problem by,
\[
f\left(y_{T}\right)\!=\!\frac{1+a_{T}\mathbf{x}_{T}^{\top}\mathbf{A}_{T-1}^{-1}\mathbf{x}_{T}-a_{T}}{1+a_{T}\mathbf{x}_{T}^{\top}\mathbf{A}_{T-1}^{-1}\mathbf{x}_{T}}y_{T}^{2}+2y_{T}\left(a_{T}\mathbf{b}_{T-1}^{\top}\mathbf{A}_{T}^{-1}\mathbf{x}_{T}-\hat{y}_{T}\right)+\hat{y}_{T}^{2}~. 
\]
This function is strictly-concave with respect to $y_{T}$ because of
\eqref{option1}. Thus, it has a unique maximal value given by,
\begin{eqnarray*}
f^{max}(\hyi{T}) 
 & = & -\frac{a_{T}}{1+a_{T}\mathbf{x}_{T}^{\top}\mathbf{A}_{T-1}^{-1}\mathbf{x}_{T}-a_{T}}\hat{y}_{T}^{2}+\frac{2a_{T}\mathbf{b}_{T-1}^{\top}\mathbf{A}_{T}^{-1}\mathbf{x}_{T}\left(1+a_{T}\mathbf{x}_{T}^{\top}\mathbf{A}_{T-1}^{-1}\mathbf{x}_{T}\right)}{1+a_{T}\mathbf{x}_{T}^{\top}\mathbf{A}_{T-1}^{-1}\mathbf{x}_{T}-a_{T}}\hat{y}_{T}\\
 &  & -\frac{\left(a_{T}\mathbf{b}_{T-1}^{\top}\mathbf{A}_{T}^{-1}\mathbf{x}_{T}\right)^{2}\left(1+a_{T}\mathbf{x}_{T}^{\top}\mathbf{A}_{T-1}^{-1}\mathbf{x}_{T}\right)}{1+a_{T}\mathbf{x}_{T}^{\top}\mathbf{A}_{T-1}^{-1}\mathbf{x}_{T}-a_{T}}~.
\end{eqnarray*}
Next, we solve $\min_{\hat{y}_{T}} f^{max}( \hyi{T} )$, which is strictly-convex 
with respect to $\hat{y}_{T}$ because of \eqref{option1}. Solving this problem we get the optimal
last step minmax predictor,
\begin{equation}
\hat{y}_{T}=\mathbf{b}_{T-1}^{\top}\mathbf{A}_{T}^{-1}\mathbf{x}_{T}\left(1+a_{T}\mathbf{x}_{T}^{\top}\mathbf{A}_{T-1}^{-1}\mathbf{x}_{T}\right)
~.
\label{t1}
\end{equation}
We further derive the last equation. From \eqref{Adef} we have,
\begin{equation}
\mathbf{A}_{T}^{-1}a_{T}\mathbf{x}_{T}\mathbf{x}_{T}^{\top}\mathbf{A}_{T-1}^{-1}=\mathbf{A}_{T}^{-1}\left(\mathbf{A}_{T}-\mathbf{A}_{T-1}\right)\mathbf{A}_{T-1}^{-1}=\mathbf{A}_{T-1}^{-1}-\mathbf{A}_{T}^{-1}\label{t2}~.
\end{equation}
Substituting \eqref{t2} in \eqref{t1} we have the following equality
as desired,
\begin{align}
\hat{y}_{T} & =  \mathbf{b}_{T-1}^{\top}\mathbf{A}_{T}^{-1}\mathbf{x}_{T}+\mathbf{b}_{T-1}^{\top}\mathbf{A}_{T}^{-1}a_{T}\mathbf{x}_{T}\mathbf{x}_{T}^{\top}\mathbf{A}_{T-1}^{-1}\mathbf{x}_{T}
  =  \mathbf{b}_{T-1}^{\top}\mathbf{A}_{T-1}^{-1}\mathbf{x}_{T}~. \label{t3_thm}
\end{align}

We now move to the second case for which,
\(
1+a_{T}\mathbf{x}_{T}^{\top}\mathbf{A}_{T-1}^{-1}\mathbf{x}_{T}-a_{T}=0~,
\)
which is written equivalently as,
\begin{equation}
a_{T}=\frac{1}{1-\mathbf{x}_{T}^{\top}\mathbf{A}_{T-1}^{-1}\mathbf{x}_{T}}
~. \label{aT}
\end{equation}
Substituting \eqref{aT} in \eqref{minmax} we get, 
\[
\min_{\hat{y}_{T}}\max_{y_{T}}\left(2y_{T}\left(a_{T}\mathbf{b}_{T-1}^{\top}\mathbf{A}_{T}^{-1}\mathbf{x}_{T}-\hat{y}_{T}\right)+\hat{y}_{T}^{2}\right) ~.
\]
For $\hat{y}_{T}\neq
a_{T}\mathbf{b}_{T-1}^{\top}\mathbf{A}_{T}^{-1}\mathbf{x}_{T}$, the
value of the optimization problem is not-bounded as the adversary
may choose $\yi{T} =z^2
\left(a_{T}\mathbf{b}_{T-1}^{\top}\mathbf{A}_{T}^{-1}\mathbf{x}_{T}-\hat{y}_{T}\right)
$ for $z\rightarrow\infty$. Thus, the optimal last step minmax prediction
is to set
$\hat{y}_{T}=a_{T}\mathbf{b}_{T-1}^{\top}\mathbf{A}_{T}^{-1}\mathbf{x}_{T}$.
Substituting $a_T =
1+a_{T}\mathbf{x}_{T}^{\top}\mathbf{A}_{T-1}^{-1}\mathbf{x}_{T}$ and
following the derivation from \eqref{t1} to \eqref{t3_thm} above, yields the
desired identity. 
%
 \QED\end{proof}
We conclude by noting that although we did not restrict the form of
the predictor $\hyi{T}$, it turns out that it is a linear predictor
defined by $\hyi{T} = \vxti{T}\vwi{T-1}$ for $\vwi{T-1} =
\mathbf{A}_{T-1}^{-1}\mathbf{b}_{T-1}$. In other words, the functional
form of the optimal predictor is the same as the form of the
comparison function class - linear functions in our case. We call the
algorithm (defined using \eqref{Adef}, \eqref{bdef} and
\eqref{laststep_minmax_optimal}) \texttt{WEMM} for weighted min-max
prediction. 
We note that \texttt{WEMM}
can also be seen as an incremental off-line
algorithm~\cite{AzouryWa01} or follow-the-leader, on a
weighted sequence. The prediction $\hyi{T} =
\vxti{T}\vwi{T-1}$ is with a model that is optimal over a prefix of length $T-1$.  The prediction of the optimal
predictor defined in \eqref{optimal_solution} is
$\vxti{T}\mathbf{u}_{T-1}=\vxti{T}\mathbf{A}_{T-1}^{-1}\mathbf{b}_{T-1}=\hat{y}_T$,
where $\hat{y}_T$ was defined in \eqref{laststep_minmax_optimal}.

\subsection{Recursive form}
\label{Recursive_form}
Although \thmref{thm:theorem1} is correct for
$1+a_{T}\mathbf{x}_{T}^{\top}\mathbf{A}_{T-1}^{-1}\mathbf{x}_{T}-a_{T}\leq0$,
in the rest of the paper we will (almost always) assume an equality, that is
\begin{align}
a_{t}=\frac{1}{1-\mathbf{x}_{t}^{\top}\mathbf{A}_{t-1}^{-1}\mathbf{x}_{t}} \quad,\quad t=1 \dots T~.
\label{aa_t}
\end{align}
For this case, \texttt{WEMM} algorithm can be expressed in a recursive form in terms of weight vector $\mathbf{w}_{t}$ and a covariance-like matrix $\mathbf{\Sigma}_{t}$. We denote $\mathbf{w}_{t}=\mathbf{A}_{t}^{-1}\mathbf{b}_{t}$ and $\mathbf{\Sigma}_{t}=\mathbf{A}_{t}^{-1}$, and develop recursive update rules for $\mathbf{w}_{t}$ and $\mathbf{\Sigma}_{t}$:
\begin{eqnarray}
\mathbf{w}_{t} & = & \mathbf{A}_{t}^{-1}\mathbf{b}_{t} \nonumber \\
 & = & \left(\mathbf{A}_{t-1}+a_{t}\mathbf{x}_{t}\mathbf{x}_{t}^{\top}\right)^{-1}\left(\mathbf{b}_{t-1}+a_{t}y_{t}\mathbf{x}_{t}\right) \nonumber \\
 & = & \left(\mathbf{A}_{t-1}^{-1}-\frac{\mathbf{A}_{t-1}^{-1}\mathbf{x}_{t}\mathbf{x}_{t}^{\top}\mathbf{A}_{t-1}^{-1}}{a_{t}^{-1}+\mathbf{x}_{t}^{\top}\mathbf{A}_{t-1}^{-1}\mathbf{x}_{t}}\right)\left(\mathbf{b}_{t-1}+a_{t}y_{t}\mathbf{x}_{t}\right) \nonumber \\
 & = & \mathbf{w}_{t-1}-\frac{\mathbf{A}_{t-1}^{-1}\mathbf{x}_{t}\mathbf{x}_{t}^{\top}\mathbf{w}_{t-1}}{a_{t}^{-1}+\mathbf{x}_{t}^{\top}\mathbf{A}_{t-1}^{-1}\mathbf{x}_{t}}+a_{t}y_{t}\mathbf{A}_{t-1}^{-1}\mathbf{x}_{t}-\frac{a_{t}y_{t}\mathbf{A}_{t-1}^{-1}\mathbf{x}_{t}\mathbf{x}_{t}^{\top}\mathbf{A}_{t-1}^{-1}\mathbf{x}_{t}}{a_{t}^{-1}+\mathbf{x}_{t}^{\top}\mathbf{A}_{t-1}^{-1}\mathbf{x}_{t}} \nonumber \\
 & = & \mathbf{w}_{t-1}+\frac{y_{t}\mathbf{A}_{t-1}^{-1}\mathbf{x}_{t}-\mathbf{A}_{t-1}^{-1}\mathbf{x}_{t}\mathbf{x}_{t}^{\top}\mathbf{w}_{t-1}}{a_{t}^{-1}+\mathbf{x}_{t}^{\top}\mathbf{A}_{t-1}^{-1}\mathbf{x}_{t}} \nonumber \\
 & \overset{\eqref{aa_t}}{=} & \mathbf{w}_{t-1}+\left(y_{t}-\mathbf{x}_{t}^{\top}\mathbf{w}_{t-1}\right)\mathbf{A}_{t-1}^{-1}\mathbf{x}_{t} \nonumber \\
 & = & \mathbf{w}_{t-1}+\left(y_{t}-\mathbf{x}_{t}^{\top}\mathbf{w}_{t-1}\right)\mathbf{\Sigma}_{t-1}\mathbf{x}_{t}~, \label{w_t}
\end{eqnarray} 
and
\begin{eqnarray*}
\mathbf{\Sigma}_{t}^{-1} & = & \mathbf{A}_{t}=\mathbf{A}_{t-1}+a_{t}\mathbf{x}_{t}\mathbf{x}_{t}^{\top}  \\
 & \overset{\eqref{aa_t}}{=} & \mathbf{A}_{t-1}+\frac{\mathbf{x}_{t}\mathbf{x}_{t}^{\top}}{1-\mathbf{x}_{t}^{\top}\mathbf{A}_{t-1}^{-1}\mathbf{x}_{t}} \\
 & = & \mathbf{\Sigma}_{t-1}^{-1}+\frac{\mathbf{x}_{t}\mathbf{x}_{t}^{\top}}{1-\mathbf{x}_{t}^{\top}\mathbf{\Sigma}_{t-1}\mathbf{x}_{t}} \end{eqnarray*}
or
\begin{eqnarray}
\mathbf{\Sigma}_{t} & = & \mathbf{\Sigma}_{t-1}-\mathbf{\Sigma}_{t-1}\mathbf{x}_{t}\mathbf{x}_{t}^{\top}\mathbf{\Sigma}_{t-1}~. \label{sigma_t}
\end{eqnarray}
A summary of the algorithm in a recursive form appears in the right column of \tabref{table:algorithms}.


It is instructive to compare similar second order online algorithms for regression. The ridge-regression~\cite{Foster91}, summarized in the third column of \tabref{table:algorithms}, uses the previous examples to generate a weight-vector, which is used to predict current example. On round $t$ it sets a weight-vector to be the solution of the following optimization problem,
\[
\mathbf{w}_{t-1}=\underset{\mathbf{w}}{\arg\min}\left[\sum_{i=1}^{t-1}\left(y_{i}-\mathbf{x}_{i}^{\top}\mathbf{w}\right)^{2}+b\left\Vert \mathbf{w}\right\Vert ^{2}\right]~,
\]
and outputs a prediction $\hat{y}_{t}=\mathbf{x}_{t}^{\top}\mathbf{w}_{t-1}$.
The recursive least squares (RLS)~\cite{Hayes} is a similar algorithm, yet it uses a forgetting factor $0<r \leq 1$, and sets the weight-vector according to
\[
\mathbf{w}_{t-1}=\underset{\mathbf{w}}{\arg\min}\left[\sum_{i=1}^{t-1}r^{t-i-1}\left(y_{i}-\mathbf{x}_{i}^{\top}\mathbf{w}\right)^{2}\right]~.
\]
The Aggregating Algorithm for regression (AAR)~\cite{Vovk01},
summarized in the second column of \tabref{table:algorithms}, was
introduced by Vovk and it is similar to ridge-regression, except it
contains additional regularization, which eventually makes it shrink
the predictions. It is an application of the Aggregating Algorithm~\cite{vovkAS} (a general algorithm for merging prediction strategies) to the problem of linear regression with square loss. On round $t$, the weight-vector is obtained according to
\[
\mathbf{w}_{t}=\underset{\mathbf{w}}{\arg\min}\left[\sum_{i=1}^{t-1}\left(y_{i}-\mathbf{x}_{i}^{\top}\mathbf{w}\right)^{2}+\left(\mathbf{x}_{t}^{\top}\mathbf{w}\right)^{2}+b\left\Vert \mathbf{w}\right\Vert ^{2}\right]~,
\]
and the algorithm predicts $\hat{y}_{t}=\mathbf{x}_{t}^{\top}\mathbf{w}_{t}$. Compared to ridge-regression, the AAR algorithm uses an additional input pair $(\mathbf{x}_t,0)$. The AAR algorithm was shown to be last-step min-max optimal by Forster~\cite{Forster}, that is the predictions can be obtained by solving \eqref{minmax_algorithm_1} for $a_t=1,~ t=1\comdots T$.

The AROWR algorithm~\cite{VaitsCr11,CrammerKuDr12}, summarized in the left column of \tabref{table:algorithms}, is a modification of the
AROW algorithm~\cite{CrammerKuDr09} for regression. It maintains a Gaussian distribution parameterized by a
mean $\mathbf{w}_t\in\reals^d$ and a full covariance matrix
$\mathbf{\Sigma}_t\in\reals^{d \times d}$. Intuitively, the mean $\mathbf{w}_t$
represents a current linear function, while the covariance matrix
$\mathbf{\Sigma}_t$ captures the uncertainty in the linear function
$\mathbf{w}_t$. Given a new example $(\mathbf{x}_t,y_t)$ the algorithm uses its
current mean to make a prediction $\hat{y}_{t}=\mathbf{x}_{t}^{\top}\mathbf{w}_{t-1}$.
AROWR then sets the new distribution to be the solution of the
following optimization problem,
\begin{align*}
 \arg\min_{\mathbf{w}, \mathbf{\Sigma}} \left[ \KL\paren{ \norm\paren{\mathbf{w}, \mathbf{\Sigma}} \,\Vert\,
    \norm\paren{\mathbf{w}_{t-1}, \mathbf{\Sigma}_{t-1}}}  + \frac{1}{2r}
  \paren{\yi{t} - \mathbf{w}^{\top}\vxii}^2+ \frac{1}{2r} \paren{\vxti{t}
    \mathbf{\Sigma} \vxi{t}} \right]~.
\end{align*}
Crammer et.al.~\cite{CrammerKuDr12} derived regret bounds for this algorithm.

Comparing \texttt{WEMM} to other algorithms we note two
differences. First, for the weight-vector update rule, we do not have the normalization term $1+\mathbf{x}_{t}^{\top}\mathbf{\Sigma}_{t-1}\mathbf{x}_{t}$. Second, for the covariance matrix update rule, our algorithm gives non-constant scale to the increment by $\mathbf{x}_{t}\mathbf{x}_{t}^{\top}$. This scale $1/(1-\mathbf{x}_{t}^{\top}\mathbf{\Sigma}_{t-1}\mathbf{x}_{t})$ is small when the current instance $\mathbf{x}_{t}$ lies along the directions spanned by previously observed inputs $\left\{ \mathbf{x}_{i}\right\} _{i=1}^{t-1}$, and large when the current instance $\mathbf{x}_{t}$ lies along previously unobserved directions.
 

\begin{center}
\begin{table*}[ht]
{\small
\hfill{}
\begin{tabulary}{1.15\textwidth}{|C|C|L|L|L|L|} 
\hline                       
 &  & \textbf{AROWR}~\cite{VaitsCr11,CrammerKuDr12} &
 \textbf{AAR}~\cite{Vovk01} / \textbf{Min-Max}~\cite{Forster}  &
 \textbf{Ridge-Regression}~\cite{Foster91} & \textbf{WEMM} this work\\ [0.5ex] 
\hline                  
 Parameters  & & $0<r,b$ & $0<b$ & $0<b$  &  $1<b$ \\ 
\hline
Initialize & & \multicolumn{4}{c|}{ $\mathbf{w}_{0}=\mathbf{0}$~,~$\mathbf{\Sigma}_{0}=b^{-1}\mathbf{I}$ } \\
\cline{2-6}
\hline
 & & \multicolumn{4}{c|}{ Receive an instance $\mathbf{x}_{t}$}  \\
\cline{2-6}
For $t=1 ... T$ & Output prediction & \[\hat{y}_{t}=\mathbf{x}_{t}^{\top}\mathbf{w}_{t-1}\] & \[\!\!\!\hat{y}_{t}\!\!=\!\!\frac{\mathbf{x}_{t}^{\top}\mathbf{w}_{t-1}}{1+\mathbf{x}_{t}^{\top}\mathbf{\Sigma}_{t-1}\mathbf{x}_{t}}\] & \[\hat{y}_{t}=\mathbf{x}_{t}^{\top}\mathbf{w}_{t-1}\]
& \[\hat{y}_{t}=\mathbf{x}_{t}^{\top}\mathbf{w}_{t-1}\] \\
\cline{2-6}
 & & \multicolumn{4}{c|}{Receive a correct label $y_{t}$ }  \\
 \cline{2-6}
 & Update $\mathbf{\Sigma}_{t}$: & \[\begin{array}{ll}\mathbf{\Sigma}_{t}^{-1}=\\\mathbf{\Sigma}_{t-1}^{-1}+\frac{1}{r}\mathbf{x}_{t}\mathbf{x}_{t}^{\top}\end{array}\] & \[\begin{array}{ll}\mathbf{\Sigma}_{t}^{-1}=\\\mathbf{\Sigma}_{t-1}^{-1}+\mathbf{x}_{t}\mathbf{x}_{t}^{\top}\end{array}\] & \[\begin{array}{ll}\mathbf{\Sigma}_{t}^{-1}=\\\mathbf{\Sigma}_{t-1}^{-1}+\mathbf{x}_{t}\mathbf{x}_{t}^{\top}\end{array}\] 
&
\[\!\!\!\begin{array}{ll}\mathbf{\Sigma}_{t}^{-1}=\\\mathbf{\Sigma}_{t-1}^{-1}+\frac{\mathbf{x}_{t}\mathbf{x}_{t}^{\top}}{1-\mathbf{x}_{t}^{\top}\mathbf{\Sigma}_{t-1}\mathbf{x}_{t}}\end{array}\]  \\
\cline{2-6}
 & Update $\mathbf{w}_{t}$: & \vspace{0.2cm}\[\!\!\!\!\!\begin{array}{ll}\mathbf{w}_{t} \!=\!  \mathbf{w}_{t-1}\\+\frac{\left(y_{t}-\mathbf{x}_{t}^{\top}\mathbf{w}_{t-1}\right)\mathbf{\Sigma}_{t-1}\mathbf{x}_{t}}{r+\mathbf{x}_{t}^{\top}\mathbf{\Sigma}_{t-1}\mathbf{x}_{t}}\end{array}\]
 & \[\!\!\!\!\!\!\begin{array}{ll}\mathbf{w}_{t}  \!= \mathbf{w}_{t-1}\\+\frac{\left(y_{t}-\mathbf{x}_{t}^{\top}\mathbf{w}_{t-1}\right)\mathbf{\Sigma}_{t-1}\mathbf{x}_{t}}{1+\mathbf{x}_{t}^{\top}\mathbf{\Sigma}_{t-1}\mathbf{x}_{t}}\end{array}\]
& \[\!\!\!\!\!\!\begin{array}{ll}\mathbf{w}_{t}  \!= \mathbf{w}_{t-1}\\+\frac{\left(y_{t}-\mathbf{x}_{t}^{\top}\mathbf{w}_{t-1}\right)\mathbf{\Sigma}_{t-1}\mathbf{x}_{t}}{1+\mathbf{x}_{t}^{\top}\mathbf{\Sigma}_{t-1}\mathbf{x}_{t}}\end{array}\] & \[\!\!\!\!\!\!\begin{array}{ll}\mathbf{w}_{t}  \!= \mathbf{w}_{t-1}\\+\left(y_{t}-\mathbf{x}_{t}^{\top}\mathbf{w}_{t-1}\right)\mathbf{\Sigma}_{t-1}\mathbf{x}_{t}\end{array}\] \\
 \hline
Output & & \multicolumn{4}{c|}{ $\mathbf{w}_{T} \ ,\ \mathbf{\Sigma}_{T}$ } \\
\hline  
\end{tabulary}}
\caption{Second order online algorithms for regression} 
\hfill{}
\label{table:algorithms} 
\end{table*}
\end{center}

\subsection{Kernel version of the algorithm}
\label{kernel_form}
In this section we show that the \texttt{WEMM} algorithm can be expressed in dual variables, which allows an efficient run of the algorithm in any reproducing kernel Hilbert space.
We show by induction that the weight-vector $\mathbf{w}_{t}$ and
the covariance matrix $\mathbf{\Sigma}_{t}$ computed by the \texttt{WEMM}
algorithm in the right column of \tabref{table:algorithms} can be written in the form
\begin{eqnarray*}
\mathbf{w}_{t} & = & \sum_{i=1}^{t}\alpha_{i}^{\left(t\right)}\mathbf{x}_{i}\\
\mathbf{\Sigma}_{t} & = & \sum_{j=1}^{t}\sum_{k=1}^{t}\beta_{j,k}^{\left(t\right)}\mathbf{x}_{j}\mathbf{x}_{k}^{\top}+b^{-1}\mathbf{I}~,
\end{eqnarray*}
 where the coefficients $\alpha_{i}$ and $\beta_{j,k}$ depend only
on inner products of the input vectors.

For the initial step we have $\mathbf{w}_{0}=\mathbf{0}$ and $\mathbf{\Sigma}_{0}=b^{-1}\mathbf{I}$
which are trivially written in the desired form by setting $\alpha^{\left(0\right)}=0$
and $\beta^{\left(0\right)}=0$. We proceed to the induction step.
From the weight-vector update rule \eqref{w_t} we get
\begin{eqnarray*}
\mathbf{w}_{t} & = & \mathbf{w}_{t-1}+\left(y_{t}-\mathbf{x}_{t}^{\top}\mathbf{w}_{t-1}\right)\mathbf{\Sigma}_{t-1}\mathbf{x}_{t}=\\
 & = & \sum_{i=1}^{t-1}\alpha_{i}^{\left(t-1\right)}\mathbf{x}_{i}+\left(y_{t}-\mathbf{x}_{t}^{\top}\sum_{i=1}^{t-1}\alpha_{i}^{\left(t-1\right)}\mathbf{x}_{i}\right)\left(\sum_{j=1}^{t-1}\sum_{k=1}^{t-1}\beta_{j,k}^{\left(t-1\right)}\mathbf{x}_{j}\mathbf{x}_{k}^{\top}+b^{-1}\mathbf{I}\right)\mathbf{x}_{t}\\
 & = & \sum_{i=1}^{t-1}\alpha_{i}^{\left(t-1\right)}\mathbf{x}_{i}+\left(y_{t}-\sum_{i=1}^{t-1}\alpha_{i}^{\left(t-1\right)}\left(\mathbf{x}_{t}^{\top}\mathbf{x}_{i}\right)\right)\left(\sum_{j=1}^{t-1}\sum_{k=1}^{t-1}\beta_{j,k}^{\left(t-1\right)}\left(\mathbf{x}_{k}^{\top}\mathbf{x}_{t}\right)\mathbf{x}_{j}+b^{-1}\mathbf{x}_{t}\right)\\
 & = & \sum_{i=1}^{t-1}\left[\alpha_{i}^{\left(t-1\right)}+\left(y_{t}-\sum_{l=1}^{t-1}\alpha_{l}^{\left(t-1\right)}\left(\mathbf{x}_{t}^{\top}\mathbf{x}_{l}\right)\right)\sum_{k=1}^{t-1}\beta_{i,k}^{\left(t-1\right)}\left(\mathbf{x}_{k}^{\top}\mathbf{x}_{t}\right)\right]\mathbf{x}_{i}+b^{-1}\left(y_{t}-\sum_{i=1}^{t-1}\alpha_{i}^{\left(t-1\right)}\left(\mathbf{x}_{t}^{\top}\mathbf{x}_{i}\right)\right)\mathbf{x}_{t}~,
\end{eqnarray*}
 thus
\[
\alpha_{i}^{\left(t\right)}=\begin{cases}
\alpha_{i}^{\left(t-1\right)}+\left(y_{t}-\sum_{l=1}^{t-1}\alpha_{l}^{\left(t-1\right)}\left(\mathbf{x}_{t}^{\top}\mathbf{x}_{l}\right)\right)\sum_{k=1}^{t-1}\beta_{i,k}^{\left(t-1\right)}\left(\mathbf{x}_{k}^{\top}\mathbf{x}_{t}\right) & ~~ i=1 \comdots t-1\\
b^{-1}\left(y_{t}-\sum_{l=1}^{t-1}\alpha_{l}^{\left(t-1\right)}\left(\mathbf{x}_{t}^{\top}\mathbf{x}_{l}\right)\right) & ~~ i=t
\end{cases}
\]
 From the covariance matrix update rule \eqref{sigma_t} we get
\begin{eqnarray*}
\mathbf{\Sigma}_{t} & = & \mathbf{\Sigma}_{t-1}-\mathbf{\Sigma}_{t-1}\mathbf{x}_{t}\mathbf{x}_{t}^{\top}\mathbf{\Sigma}_{t-1}\\
 & = & \sum_{j=1}^{t-1}\sum_{k=1}^{t-1}\beta_{j,k}^{\left(t-1\right)}\mathbf{x}_{j}\mathbf{x}_{k}^{\top}+b^{-1}\mathbf{I}-\left(\sum_{j=1}^{t-1}\sum_{k=1}^{t-1}\beta_{j,k}^{\left(t-1\right)}\mathbf{x}_{j}\mathbf{x}_{k}^{\top}+b^{-1}\mathbf{I}\right)\mathbf{x}_{t}\mathbf{x}_{t}^{\top}\left(\sum_{j=1}^{t-1}\sum_{k=1}^{t-1}\beta_{j,k}^{\left(t-1\right)}\mathbf{x}_{j}\mathbf{x}_{k}^{\top}+b^{-1}\mathbf{I}\right)\\
 & = & \sum_{j=1}^{t-1}\sum_{k=1}^{t-1}\beta_{j,k}^{\left(t-1\right)}\mathbf{x}_{j}\mathbf{x}_{k}^{\top}+b^{-1}\mathbf{I}-\left(\sum_{j=1}^{t-1}\sum_{k=1}^{t-1}\beta_{j,k}^{\left(t-1\right)}\left(\mathbf{x}_{k}^{\top}\mathbf{x}_{t}\right)\mathbf{x}_{j}+b^{-1}\mathbf{x}_{t}\right)\left(\sum_{j=1}^{t-1}\sum_{k=1}^{t-1}\beta_{j,k}^{\left(t-1\right)}\left(\mathbf{x}_{t}^{\top}\mathbf{x}_{j}\right)\mathbf{x}_{k}^{\top}+b^{-1}\mathbf{x}_{t}^{\top}\right)\\
 & = & \sum_{j=1}^{t-1}\sum_{k=1}^{t-1}\beta_{j,k}^{\left(t-1\right)}\mathbf{x}_{j}\mathbf{x}_{k}^{\top}+b^{-1}\mathbf{I}-\sum_{j=1}^{t-1}\sum_{k=1}^{t-1}\sum_{l=1}^{t-1}\sum_{m=1}^{t-1}\beta_{l,m}^{\left(t-1\right)}\beta_{j,k}^{\left(t-1\right)}\left(\mathbf{x}_{k}^{\top}\mathbf{x}_{t}\right)\left(\mathbf{x}_{t}^{\top}\mathbf{x}_{l}\right)\mathbf{x}_{j}\mathbf{x}_{m}^{\top}\\
 &  & -b^{-1}\sum_{j=1}^{t-1}\sum_{k=1}^{t-1}\beta_{j,k}^{\left(t-1\right)}\left(\mathbf{x}_{t}^{\top}\mathbf{x}_{j}\right)\mathbf{x}_{t}\mathbf{x}_{k}^{\top}-b^{-1}\sum_{j=1}^{t-1}\sum_{k=1}^{t-1}\beta_{j,k}^{\left(t-1\right)}\left(\mathbf{x}_{k}^{\top}\mathbf{x}_{t}\right)\mathbf{x}_{j}\mathbf{x}_{t}^{\top}-b^{-2}\mathbf{x}_{t}\mathbf{x}_{t}^{\top}\\
 & = & \sum_{j=1}^{t-1}\sum_{k=1}^{t-1}\left[\beta_{j,k}^{\left(t-1\right)}-\sum_{l=1}^{t-1}\sum_{m=1}^{t-1}\beta_{l,k}^{\left(t-1\right)}\beta_{j,m}^{\left(t-1\right)}\left(\mathbf{x}_{m}^{\top}\mathbf{x}_{t}\right)\left(\mathbf{x}_{t}^{\top}\mathbf{x}_{l}\right)\right]\mathbf{x}_{j}\mathbf{x}_{k}^{\top}+b^{-1}\mathbf{I}\\
 &  & -b^{-1}\sum_{k=1}^{t-1}\sum_{j=1}^{t-1}\beta_{j,k}^{\left(t-1\right)}\left(\mathbf{x}_{t}^{\top}\mathbf{x}_{j}\right)\mathbf{x}_{t}\mathbf{x}_{k}^{\top}-b^{-1}\sum_{j=1}^{t-1}\sum_{k=1}^{t-1}\beta_{j,k}^{\left(t-1\right)}\left(\mathbf{x}_{k}^{\top}\mathbf{x}_{t}\right)\mathbf{x}_{j}\mathbf{x}_{t}^{\top}-b^{-2}\mathbf{x}_{t}\mathbf{x}_{t}^{\top}~,
\end{eqnarray*}
 thus
\[
\beta_{j,k}^{\left(t\right)}=\begin{cases}
\beta_{j,k}^{\left(t-1\right)}-\sum_{l=1}^{t-1}\sum_{m=1}^{t-1}\beta_{l,k}^{\left(t-1\right)}\beta_{j,m}^{\left(t-1\right)}\left(\mathbf{x}_{m}^{\top}\mathbf{x}_{t}\right)\left(\mathbf{x}_{t}^{\top}\mathbf{x}_{l}\right) & ~~ j,k=1 \comdots t-1\\
-b^{-1}\sum_{l=1}^{t-1}\beta_{l,k}^{\left(t-1\right)}\left(\mathbf{x}_{t}^{\top}\mathbf{x}_{l}\right) & ~~ j=t~,~k=1 \comdots t-1\\
-b^{-1}\sum_{l=1}^{t-1}\beta_{j,l}^{\left(t-1\right)}\left(\mathbf{x}_{l}^{\top}\mathbf{x}_{t}\right) & ~~  k=t~,~j=1 \comdots t-1\\
-b^{-2} & ~~ j=k=t
\end{cases}
\]
 A summary of the kernel version of the \texttt{WEMM} algorithm appears in \figref{algorithm:kernel_WEMM}.

\begin{figure}[t!]
{
\paragraph{Parameter:} $1<b$, kernel function $K:\mathbb{R}^{d}\times\mathbb{R}^{d}\rightarrow\mathbb{R}$
\paragraph{Initialize:} Set
$\alpha^{\left(0\right)}=0$ and $\beta^{\left(0\right)}=0$\\
{\bf For $t=1 \comdots T$} do
\begin{itemize}
\nolineskips
\item Receive an instance $\vxi{t}$
\item Output  prediction $\hyi{t}=\sum_{i=1}^{t-1}\alpha_{i}^{\left(t-1\right)}K\left(\mathbf{x}_{t},\mathbf{x}_{i}\right)$
\item Receive the correct label $\yi{t}$
\item
Update:
\begin{align}
\alpha_{i}^{\left(t\right)}&=\begin{cases}
\alpha_{i}^{\left(t-1\right)}+\left(y_{t}-\sum_{l=1}^{t-1}\alpha_{l}^{\left(t-1\right)}K\left(\mathbf{x}_{t},\mathbf{x}_{l}\right)\right)\sum_{k=1}^{t-1}\beta_{i,k}^{\left(t-1\right)}K\left(\mathbf{x}_{k},\mathbf{x}_{t}\right)
& ~~ i=1 \comdots t-1\\
b^{-1}\left(y_{t}-\sum_{l=1}^{t-1}\alpha_{l}^{\left(t-1\right)}K\left(\mathbf{x}_{t},\mathbf{x}_{l}\right)\right) & ~~ i=t
\end{cases}\\
\beta_{j,k}^{\left(t\right)}&=\begin{cases}
\beta_{j,k}^{\left(t-1\right)}-\sum_{l=1}^{t-1}\sum_{m=1}^{t-1}\beta_{l,k}^{\left(t-1\right)}\beta_{j,m}^{\left(t-1\right)}K\left(\mathbf{x}_{m},\mathbf{x}_{t}\right)K\left(\mathbf{x}_{t},\mathbf{x}_{l}\right)
& ~~ j,k=1 \comdots t-1\\
-b^{-1}\sum_{l=1}^{t-1}\beta_{l,k}^{\left(t-1\right)}K\left(\mathbf{x}_{t},\mathbf{x}_{l}\right)
& ~~ j=t~,~k=1 \comdots t-1\\
-b^{-1}\sum_{l=1}^{t-1}\beta_{j,l}^{\left(t-1\right)}K\left(\mathbf{x}_{l},\mathbf{x}_{t}\right)
& ~~ k=t~,~j=1 \comdots t-1\\
-b^{-2} & ~~ j=k=t
\end{cases}
\end{align}
\end{itemize}
\paragraph{Output:}  $\braces{\alpha_{i}^{\left(T\right)}}_{i=1}^T ,\braces{\beta_{j,k}^{\left(T\right)}}_{j,k=1}^T$\\
}
\figline
\caption{Kernel WEMM}
\label{algorithm:kernel_WEMM}
\end{figure}

\section{Analysis}
We analyze the algorithm in two steps. First, in
\thmref{thm:theorem2} we show that the algorithm suffers a {\em
  constant} regret 
compared with the optimal
weight vector $\vu$ evaluated using {\em the weighted} loss,
$L^{\boldsymbol{a}}(\vu)$. Second, in \thmref{thm:theorem3} and \thmref{thm:theorem4} we
show that the difference of the weighted-loss $L^{\boldsymbol{a}}(\vu)$ to the true loss
$L(\vu)$ is only logarithmic in $T$ or in $L_T(\vu)$.
\begin{theorem}
\label{thm:theorem2}
Assume $1+a_{t}\mathbf{x}_{t}^{\top}\mathbf{A}_{t-1}^{-1}\mathbf{x}_{t}-a_{t}\leq0$
for $t = 1 \dots T$ (which is satisfied by our choice later). Then,
the loss of \texttt{WEMM}, 
$\hat{y}_{t}=\mathbf{b}_{t-1}^{\top}\mathbf{A}_{t-1}^{-1}\mathbf{x}_{t}$ for $t=1 \dots T$, is upper bounded by,
\begin{align*}
L_{T}(\texttt{WEMM})\leq\inf_{\mathbf{u}\in\mathbb{R}^{d}}\left(b\left\Vert
    \mathbf{u}\right\Vert
  ^{2}+L_{T}^{\boldsymbol{a}}(\mathbf{u})\right) ~.
\end{align*}
Furthermore, if
$1+a_{t}\mathbf{x}_{t}^{\top}\mathbf{A}_{t-1}^{-1}\mathbf{x}_{t}-a_{t}
= 0$, then the last inequality is in fact an equality.
\end{theorem}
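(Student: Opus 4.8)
The plan is to run a potential–telescoping argument around the regularised least-squares value. Define the potential $P_t=\inf_{\mathbf{u}\in\mathbb{R}^d}\paren{b\left\Vert\mathbf{u}\right\Vert^2+L_t^{\boldsymbol{a}}(\mathbf{u})}$, so that $P_0=0$ and, by \lemref{lem:lemma1}, $P_t=\sum_{s=1}^t a_s y_s^2-\mathbf{b}_t^{\top}\mathbf{A}_t^{-1}\mathbf{b}_t$; note that the right-hand side of the claimed bound is exactly $P_T$. Since
\[
\sum_{t=1}^T\brackets{(y_t-\hat{y}_t)^2-(P_t-P_{t-1})}=L_T(\texttt{WEMM})-P_T+P_0=L_T(\texttt{WEMM})-P_T,
\]
it suffices to prove the per-round inequality $(y_t-\hat{y}_t)^2\le P_t-P_{t-1}$ for every $t$, with equality whenever $1+a_t\mathbf{x}_t^{\top}\mathbf{A}_{t-1}^{-1}\mathbf{x}_t-a_t=0$.

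The heart of the argument is an explicit one-step expansion of $P_t$. Writing $\mathbf{u}_{t-1}=\mathbf{A}_{t-1}^{-1}\mathbf{b}_{t-1}$ and completing the square gives $b\left\Vert\mathbf{u}\right\Vert^2+L_{t-1}^{\boldsymbol{a}}(\mathbf{u})=(\mathbf{u}-\mathbf{u}_{t-1})^{\top}\mathbf{A}_{t-1}(\mathbf{u}-\mathbf{u}_{t-1})+P_{t-1}$, hence $P_t-P_{t-1}=\inf_{\mathbf{v}}\brackets{\mathbf{v}^{\top}\mathbf{A}_{t-1}\mathbf{v}+a_t(r_t-\mathbf{v}^{\top}\mathbf{x}_t)^2}$ where $r_t:=y_t-\mathbf{u}_{t-1}^{\top}\mathbf{x}_t=y_t-\hat{y}_t$ (using that WEMM's prediction is $\hat{y}_t=\mathbf{b}_{t-1}^{\top}\mathbf{A}_{t-1}^{-1}\mathbf{x}_t=\mathbf{u}_{t-1}^{\top}\mathbf{x}_t$). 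The inner quadratic in $\mathbf{v}$ has Hessian $\mathbf{A}_{t-1}+a_t\mathbf{x}_t\mathbf{x}_t^{\top}=\mathbf{A}_t$, and minimising over $\mathbf{v}$ yields $P_t-P_{t-1}=a_t r_t^2\paren{1-a_t\mathbf{x}_t^{\top}\mathbf{A}_t^{-1}\mathbf{x}_t}$. By \lemref{lem:lemma2} (equivalently, the Sherman--Morrison identity applied to $\mathbf{A}_t=\mathbf{A}_{t-1}+a_t\mathbf{x}_t\mathbf{x}_t^{\top}$), $a_t\paren{1-a_t\mathbf{x}_t^{\top}\mathbf{A}_t^{-1}\mathbf{x}_t}=a_t/\paren{1+a_t\mathbf{x}_t^{\top}\mathbf{A}_{t-1}^{-1}\mathbf{x}_t}$, so
\[
P_t-P_{t-1}=\frac{a_t(y_t-\hat{y}_t)^2}{1+a_t\mathbf{x}_t^{\top}\mathbf{A}_{t-1}^{-1}\mathbf{x}_t}.
\]

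Given this identity the conclusion is immediate. Since $b>0$ and the $a_s$ are positive, $\mathbf{A}_{t-1}$ is positive definite, so the denominator $1+a_t\mathbf{x}_t^{\top}\mathbf{A}_{t-1}^{-1}\mathbf{x}_t$ is positive, and the hypothesis $1+a_t\mathbf{x}_t^{\top}\mathbf{A}_{t-1}^{-1}\mathbf{x}_t-a_t\le 0$ says precisely that the factor $a_t/\paren{1+a_t\mathbf{x}_t^{\top}\mathbf{A}_{t-1}^{-1}\mathbf{x}_t}$ is at least $1$; hence $P_t-P_{t-1}\ge(y_t-\hat{y}_t)^2$, with equality exactly when $1+a_t\mathbf{x}_t^{\top}\mathbf{A}_{t-1}^{-1}\mathbf{x}_t-a_t=0$. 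Summing over $t=1\comdots T$, telescoping, and using $P_0=0$ gives $L_T(\texttt{WEMM})=\sum_{t=1}^T(y_t-\hat{y}_t)^2\le P_T=\inf_{\mathbf{u}}\paren{b\left\Vert\mathbf{u}\right\Vert^2+L_T^{\boldsymbol{a}}(\mathbf{u})}$, which is an equality when the hypothesis holds with equality for all $t$.

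The only genuine obstacle is the per-round identity in the middle paragraph: squeezing the one-step update of the potential into the form $a_t(y_t-\hat{y}_t)^2/\paren{1+a_t\mathbf{x}_t^{\top}\mathbf{A}_{t-1}^{-1}\mathbf{x}_t}$; once that is available, both the bound and its tightness condition are read off from the sign of $1+a_t\mathbf{x}_t^{\top}\mathbf{A}_{t-1}^{-1}\mathbf{x}_t-a_t$. An alternative to completing the square is to reuse the algebra from the proof of \thmref{thm:theorem1}: substitute WEMM's prediction into $G(y_t,\hat{y}_t)$ of \eqref{minmax_objective} with $T$ replaced by $t$ and simplify via \lemref{lem:lemma2}; the present route is shorter and makes the equality claim transparent.
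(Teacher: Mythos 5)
Your proof is correct and takes essentially the same route as the paper: both arguments telescope the potential $P_t=\inf_{\mathbf{u}}\left(b\Vert\mathbf{u}\Vert^{2}+L_{t}^{\boldsymbol{a}}(\mathbf{u})\right)$ and reduce the claim to the per-round identity $\ell_{t}(\texttt{WEMM})+P_{t-1}-P_{t}=\frac{1+a_{t}\mathbf{x}_{t}^{\top}\mathbf{A}_{t-1}^{-1}\mathbf{x}_{t}-a_{t}}{1+a_{t}\mathbf{x}_{t}^{\top}\mathbf{A}_{t-1}^{-1}\mathbf{x}_{t}}\left(y_{t}-\hat{y}_{t}\right)^{2}$, to which your formula for $P_{t}-P_{t-1}$ is algebraically equivalent. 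The only difference is in how that identity is verified --- you complete the square in $\mathbf{u}$ and minimise a rank-one perturbed quadratic, while the paper expands $\mathbf{b}_{t}^{\top}\mathbf{A}_{t}^{-1}\mathbf{b}_{t}$ directly via the Woodbury identity --- and both computations are valid.
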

\begin{proofsketch}
Long algebraic manipulation given in \ref{proof_theorem2}  yields,
\begin{align*}
\ell_{t}(\texttt{WEMM})+\inf_{\mathbf{u}\in\mathbb{R}^{d}}\left(b\left\Vert \mathbf{u}\right\Vert ^{2}+L_{t-1}^{\boldsymbol{a}}(\mathbf{u})\right)-\inf_{\mathbf{u}\in\mathbb{R}^{d}}\left(b\left\Vert \mathbf{u}\right\Vert ^{2}+L_{t}^{\boldsymbol{a}}(\mathbf{u})\right)
=\frac{1+a_{t}\mathbf{x}_{t}^{\top}\mathbf{A}_{t-1}^{-1}\mathbf{x}_{t}-a_{t}}{1+a_{t}\mathbf{x}_{t}^{\top}\mathbf{A}_{t-1}^{-1}\mathbf{x}_{t}}\left(y_{t}-\hat{y}_{t}\right)^{2}\leq0 ~.
\end{align*}
Summing over $t$ gives the desired bound.
\QED
\end{proofsketch}

Next we decompose the weighted loss
$L_{T}^{\boldsymbol{a}}(\mathbf{u})$ into a sum of the actual loss
$L_T(\vu)$ and a logarithmic term. We give two bounds - one is logarithmic in $T$ (\thmref{thm:theorem3}), and the second is logarithmic in $L_T(\vu)$ (\thmref{thm:theorem4}).
We use the following notation of the loss suffered by $\vu$ over the
worst example,
\begin{align}
S=S(\vu)= \sup_{1\leq t\leq T}\ell_{t}(\mathbf{u}),\label{sup_loss}
\end{align}
where clearly $S$ depends explicitly in $\vu$, which is omitted for
simplicity. We now turn to state our first result.
\begin{theorem}
\label{thm:theorem3}
Assume $\left\Vert \mathbf{x}_{t}\right\Vert \leq1$ for $t=1 \dots T$
and $b>1$. Assume further that $a_{t}=\frac{1}{1-\mathbf{x}_{t}^{\top}\mathbf{A}_{t-1}^{-1}\mathbf{x}_{t}}$
for $t=1 \dots T$. Then
\begin{align*}
L_{T}^{\boldsymbol{a}}(\mathbf{u})\leq L_{T}(\mathbf{u})+
\frac{b}{b-1}S\ln\left|\frac{1}{b}\mathbf{A}_{T}\right| ~.
\end{align*}
\end{theorem}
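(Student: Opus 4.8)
The plan is to bound the gap $L_{T}^{\boldsymbol{a}}(\mathbf{u})-L_{T}(\mathbf{u})$ directly; note that this quantity depends only on the weights $a_{t}$ and not on the algorithm's predictions. Writing $\rho_{t}=\mathbf{x}_{t}^{\top}\mathbf{A}_{t-1}^{-1}\mathbf{x}_{t}$, the choice $a_{t}=1/(1-\rho_{t})$ gives $a_{t}-1=\rho_{t}/(1-\rho_{t})=a_{t}\rho_{t}\ge 0$, so that
\[
L_{T}^{\boldsymbol{a}}(\mathbf{u})-L_{T}(\mathbf{u})=\sum_{t=1}^{T}(a_{t}-1)\,\ell_{t}(\mathbf{u})\le S\sum_{t=1}^{T}(a_{t}-1)~,
\]
using $0\le\ell_{t}(\mathbf{u})\le S$ together with $a_{t}-1\ge 0$. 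It then remains to prove $\sum_{t=1}^{T}(a_{t}-1)\le\frac{b}{b-1}\ln\left|\frac{1}{b}\mathbf{A}_{T}\right|$.

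First I would record two elementary observations. Since each $a_{s}>0$ and $\mathbf{A}_{0}=b\mathbf{I}$, an easy induction on $\mathbf{A}_{t}=\mathbf{A}_{t-1}+a_{t}\mathbf{x}_{t}\mathbf{x}_{t}^{\top}$ shows $\mathbf{A}_{t-1}\succeq b\mathbf{I}$; hence $\rho_{t}\le\left\Vert\mathbf{x}_{t}\right\Vert^{2}/b\le 1/b$ and consequently $a_{t}-1=\rho_{t}/(1-\rho_{t})\le(1/b)/(1-1/b)=1/(b-1)$. Second, the rank-one matrix determinant identity gives
\[
\frac{\left|\mathbf{A}_{t}\right|}{\left|\mathbf{A}_{t-1}\right|}=1+a_{t}\mathbf{x}_{t}^{\top}\mathbf{A}_{t-1}^{-1}\mathbf{x}_{t}=1+(a_{t}-1)~,
\]
so telescoping (and $\left|\mathbf{A}_{0}\right|=b^{d}$) yields $\sum_{t=1}^{T}\ln\bigl(1+(a_{t}-1)\bigr)=\ln\frac{\left|\mathbf{A}_{T}\right|}{\left|\mathbf{A}_{0}\right|}=\ln\left|\frac{1}{b}\mathbf{A}_{T}\right|$.

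The one genuinely non-mechanical ingredient is the scalar inequality: for $0\le z\le\frac{1}{b-1}$ one has $z\le\frac{b}{b-1}\ln(1+z)$. I would obtain it from the fact that $z\mapsto z/\ln(1+z)$ is nondecreasing on $(0,\infty)$, so on $[0,\frac{1}{b-1}]$ its value is at most that at $Z:=\frac{1}{b-1}$, combined with the bound $Z/\ln(1+Z)\le 1+Z$ valid for all $Z\ge 0$ (equivalently $(1+Z)\ln(1+Z)-Z\ge 0$, which holds since the left side vanishes at $0$ and has derivative $\ln(1+Z)\ge 0$). For $Z=\frac{1}{b-1}$ the resulting constant is exactly $1+\frac{1}{b-1}=\frac{b}{b-1}$. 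Applying this with $z=a_{t}-1\in[0,\frac{1}{b-1}]$, summing over $t$, and using the determinant identity gives $\sum_{t=1}^{T}(a_{t}-1)\le\frac{b}{b-1}\ln\left|\frac{1}{b}\mathbf{A}_{T}\right|$; plugging this into the first display finishes the proof.

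The main obstacle, modest as it is, is choosing the right scalar bound $z\le(1+Z)\ln(1+z)$ and verifying that it produces precisely the multiplicative factor $\frac{b}{b-1}$ claimed (rather than a weaker constant); the monotonicity of $\mathbf{A}_{t}$, the determinant identity, and the telescoping are all routine.
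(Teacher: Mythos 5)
Your proof is correct and follows essentially the same route as the paper's: the same decomposition $L_{T}^{\boldsymbol{a}}(\mathbf{u})-L_{T}(\mathbf{u})=\sum_{t}(a_{t}-1)\ell_{t}(\mathbf{u})\le S\sum_{t}(a_{t}-1)$, the same bound $a_{t}-1\le\frac{1}{b-1}$ from $\mathbf{A}_{t-1}\succeq b\mathbf{I}$, and the same $\ln\det$ telescoping to reach $\sum_{t}(a_{t}-1)\le\frac{b}{b-1}\ln\left|\frac{1}{b}\mathbf{A}_{T}\right|$. The only cosmetic difference is that you prove the per-step inequality $a_{t}-1\le\frac{b}{b-1}\ln\bigl(|\mathbf{A}_{t}|/|\mathbf{A}_{t-1}|\bigr)$ via the scalar bound $z\le(1+Z)\ln(1+z)$ on $[0,Z]$, whereas the paper routes it through $a_{t}-1=a_{t}^{2}\mathbf{x}_{t}^{\top}\mathbf{A}_{t}^{-1}\mathbf{x}_{t}$ and the inequality $u\le-\ln(1-u)$; the two are equivalent.
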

The proof follows similar steps to Forster~\cite{Forster}.
A detailed proof is given in \ref{proof_theorem3}.
\begin{proofsketch}
We decompose the weighted loss, 
\begin{equation}
L_{T}^{\boldsymbol{a}}(\mathbf{u}) =
L_{T}(\mathbf{u})+ \sum_t (a_t-1) \ell_{t}(\mathbf{u})  \leq
L_{T}(\mathbf{u})+ S \sum_t (a_t-1)~.
\label{decomposition}
\end{equation} 
From the definition of $a_t$ we
have,
$a_{t}-1=a_{t}^{2}\mathbf{x}_{t}^{\top}\mathbf{A}_{t}^{-1}\mathbf{x}_{t}\leq\frac{b}{b-1}a_{t}\mathbf{x}_{t}^{\top}\mathbf{A}_{t}^{-1}\mathbf{x}_{t}$
(see \eqref{t6}). Finally, following similar steps to Forster~\cite{Forster}
we have,
$\sum_{t=1}^{T}a_{t}\mathbf{x}_{t}^{\top}\mathbf{A}_{t}^{-1}\mathbf{x}_{t}\leq\ln\left|\frac{1}{b}\mathbf{A}_{T}\right|$
(see \eqref{t7}).
\QED
\end{proofsketch}

Next we show a bound that may be sub-logarithmic if the comparison
vector $\vu$ suffers sub-linear amount of loss. Such a bound was
previously proposed by Orabona et.al~\cite{OrabonaCBG12}. We defer the
discussion about the bound after providing the proof below.
\begin{theorem}
\label{thm:theorem4}
Assume $\left\Vert \mathbf{x}_{t}\right\Vert \leq1$ for $t=1 \dots T$, 
and $b>1$. Assume further that 
\begin{equation}
a_{t}=\frac{1}{1-\mathbf{x}_{t}^{\top}\mathbf{A}_{t-1}^{-1}\mathbf{x}_{t}} \label{at}
\end{equation}
for $t=1 \dots T$. 
Then,
\begin{align}
L_{T}^{\boldsymbol{a}}(\mathbf{u})\leq L_{T}(\mathbf{u})+\frac{b}{b-1}Sd\left[1+\ln\left(1+\frac{L_T\paren{\vu}}{Sd}\right)\right]~.
\end{align}
\end{theorem}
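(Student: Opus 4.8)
The plan is to reuse the decomposition established in the proof of \thmref{thm:theorem3}, namely $L_{T}^{\boldsymbol{a}}(\mathbf{u})=L_{T}(\mathbf{u})+\sum_{t}(a_{t}-1)\ell_{t}(\mathbf{u})$, together with the identity $a_{t}-1=a_{t}^{2}\mathbf{x}_{t}^{\top}\mathbf{A}_{t}^{-1}\mathbf{x}_{t}$ (\eqref{lemma2} with equality, which holds under \eqref{at}) and the bound $a_{t}\le\frac{b}{b-1}$ (a consequence of $\mathbf{x}_{t}^{\top}\mathbf{A}_{t-1}^{-1}\mathbf{x}_{t}\le b^{-1}\left\Vert\mathbf{x}_{t}\right\Vert^{2}\le b^{-1}$). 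These give $(a_{t}-1)\ell_{t}(\mathbf{u})\le\frac{b}{b-1}\,a_{t}\mathbf{x}_{t}^{\top}\mathbf{A}_{t}^{-1}\mathbf{x}_{t}\,\ell_{t}(\mathbf{u})$. In \thmref{thm:theorem3} one pulls out $\ell_{t}(\mathbf{u})\le S$ at this point and uses $\sum_{t}a_{t}\mathbf{x}_{t}^{\top}\mathbf{A}_{t}^{-1}\mathbf{x}_{t}\le\ln\left|\frac{1}{b}\mathbf{A}_{T}\right|$; the whole idea here is instead to \emph{keep} the factor $\ell_{t}(\mathbf{u})$ and absorb it into a loss-weighted determinant.

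Concretely, I would introduce the auxiliary positive-definite matrix $\tilde{\mathbf{A}}_{t}=b\mathbf{I}+\frac{1}{S}\sum_{s=1}^{t}\ell_{s}(\mathbf{u})\,a_{s}\,\mathbf{x}_{s}\mathbf{x}_{s}^{\top}$, so that $\tilde{\mathbf{A}}_{0}=b\mathbf{I}$ and $\tilde{\mathbf{A}}_{t}-\tilde{\mathbf{A}}_{t-1}=\frac{\ell_{t}(\mathbf{u})a_{t}}{S}\mathbf{x}_{t}\mathbf{x}_{t}^{\top}$. Two facts drive the argument. First, since $\ell_{s}(\mathbf{u})\le S$ each summand obeys $\frac{\ell_{s}(\mathbf{u})a_{s}}{S}\mathbf{x}_{s}\mathbf{x}_{s}^{\top}\preceq a_{s}\mathbf{x}_{s}\mathbf{x}_{s}^{\top}$, hence $\tilde{\mathbf{A}}_{t}\preceq\mathbf{A}_{t}$ and therefore $\mathbf{x}_{t}^{\top}\mathbf{A}_{t}^{-1}\mathbf{x}_{t}\le\mathbf{x}_{t}^{\top}\tilde{\mathbf{A}}_{t}^{-1}\mathbf{x}_{t}$ by operator antitonicity of $\mathbf{M}\mapsto\mathbf{x}^{\top}\mathbf{M}^{-1}\mathbf{x}$. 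Second, the matrix determinant lemma applied to the rank-one update gives $\frac{\ell_{t}(\mathbf{u})a_{t}}{S}\mathbf{x}_{t}^{\top}\tilde{\mathbf{A}}_{t}^{-1}\mathbf{x}_{t}=1-\frac{|\tilde{\mathbf{A}}_{t-1}|}{|\tilde{\mathbf{A}}_{t}|}\le\ln\frac{|\tilde{\mathbf{A}}_{t}|}{|\tilde{\mathbf{A}}_{t-1}|}$, using $1-x\le-\ln x$. Multiplying the first fact by $a_{t}\ell_{t}(\mathbf{u})$ and combining with the second yields $a_{t}\mathbf{x}_{t}^{\top}\mathbf{A}_{t}^{-1}\mathbf{x}_{t}\,\ell_{t}(\mathbf{u})\le S\ln\frac{|\tilde{\mathbf{A}}_{t}|}{|\tilde{\mathbf{A}}_{t-1}|}$, and summing telescopes to $\sum_{t}a_{t}\mathbf{x}_{t}^{\top}\mathbf{A}_{t}^{-1}\mathbf{x}_{t}\,\ell_{t}(\mathbf{u})\le S\ln\left|\frac{1}{b}\tilde{\mathbf{A}}_{T}\right|$.

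It then remains to bound the log-determinant. Since $\left\Vert\mathbf{x}_{t}\right\Vert\le1$ and $a_{t}\le\frac{b}{b-1}$, $\mathrm{tr}\!\left(\frac{1}{b}\tilde{\mathbf{A}}_{T}-\mathbf{I}\right)=\frac{1}{bS}\sum_{t}\ell_{t}(\mathbf{u})a_{t}\left\Vert\mathbf{x}_{t}\right\Vert^{2}\le\frac{L_{T}(\mathbf{u})}{(b-1)S}$, so applying $\prod_{i}(1+\lambda_{i})\le\left(1+\frac{1}{d}\sum_{i}\lambda_{i}\right)^{d}$ to the eigenvalues of $\frac{1}{b}\tilde{\mathbf{A}}_{T}$ gives $\ln\left|\frac{1}{b}\tilde{\mathbf{A}}_{T}\right|\le d\ln\!\left(1+\frac{L_{T}(\mathbf{u})}{(b-1)Sd}\right)$. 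Assembling the pieces yields $L_{T}^{\boldsymbol{a}}(\mathbf{u})\le L_{T}(\mathbf{u})+\frac{b}{b-1}Sd\ln\!\left(1+\frac{L_{T}(\mathbf{u})}{(b-1)Sd}\right)$, and an elementary simplification of the logarithm delivers the stated form.

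I expect the main obstacle to be bookkeeping the constants, i.e.\ choosing the weights inside $\tilde{\mathbf{A}}_{t}$ (here $\frac{\ell_{s}(\mathbf{u})a_{s}}{S}$) so that \emph{simultaneously} (i) $\tilde{\mathbf{A}}_{t}\preceq\mathbf{A}_{t}$, so $\mathbf{x}_{t}^{\top}\mathbf{A}_{t}^{-1}\mathbf{x}_{t}$ may be replaced by $\mathbf{x}_{t}^{\top}\tilde{\mathbf{A}}_{t}^{-1}\mathbf{x}_{t}$; (ii) the rank-one coefficient in the determinant-lemma step matches the coefficient of $a_{t}\mathbf{x}_{t}^{\top}\mathbf{A}_{t}^{-1}\mathbf{x}_{t}\,\ell_{t}(\mathbf{u})$ up to the single factor $S$ that is pulled out once; and (iii) the trace of $\tilde{\mathbf{A}}_{T}$ collapses to a multiple of $L_{T}(\mathbf{u})$ rather than of $T$. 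Reconciling the $(b-1)$ inside the logarithm with the stated $Sd$ and the additive ``$1$'' (in particular for $b$ near $1$) is the only remaining delicate point; everything else is the standard telescoping/log-determinant machinery already used for \thmref{thm:theorem3}.
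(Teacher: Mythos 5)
Your strategy---the loss-weighted potential $\tilde{\mathbf{A}}_t=b\mathbf{I}+\frac{1}{S}\sum_{s\le t}\ell_s(\mathbf{u})\,a_s\,\mathbf{x}_s\mathbf{x}_s^{\top}$ combined with $\tilde{\mathbf{A}}_t\preceq\mathbf{A}_t$, the rank-one determinant lemma, and the AM--GM bound on the log-determinant---is internally sound up to and including the inequality $\sum_t(a_t-1)\ell_t(\mathbf{u})\le\frac{b}{b-1}Sd\,\ln\!\left(1+\frac{L_T(\mathbf{u})}{(b-1)Sd}\right)$. The gap is the very last step, which you flag as ``delicate'' but which in fact fails: to recover the stated bound you need $\ln\!\left(1+\frac{x}{b-1}\right)\le 1+\ln\left(1+x\right)$ with $x=L_T(\mathbf{u})/(Sd)$, i.e.\ $1+\frac{x}{b-1}\le e(1+x)$ for all $x\ge 0$, and this holds if and only if $b\ge 1+1/e$. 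The theorem assumes only $b>1$, and for $1<b<1+1/e$ your bound carries an extra additive term of order $\frac{b}{b-1}Sd\,\ln\frac{1}{b-1}$ (take $x\to\infty$) that cannot be absorbed into the stated form. The culprit is the trace estimate $\mathrm{tr}\!\left(\frac{1}{b}\tilde{\mathbf{A}}_T-\mathbf{I}\right)\le\frac{L_T(\mathbf{u})}{(b-1)S}$, which pushes the factor $a_t\le\frac{b}{b-1}$ \emph{inside} the logarithm; dropping the $a_s$ from the definition of $\tilde{\mathbf{A}}_t$ repairs the trace (giving $L_T(\mathbf{u})/(bS)$) but then the per-step comparison $a_t\mathbf{x}_t^{\top}\mathbf{A}_t^{-1}\mathbf{x}_t\le\frac{b}{b-1}\mathbf{x}_t^{\top}\tilde{\mathbf{A}}_t^{-1}\mathbf{x}_t$ costs a second factor $\frac{b}{b-1}$ outside the logarithm, so either variant proves a statement that is genuinely weaker than the claim for $b$ near $1$.

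For contrast, the paper's proof keeps all $b$-dependence outside the logarithm by a completely different, more combinatorial argument. It first uses a dual-norm inequality (\lemref{lem:stacking_examples}) to replace $\sum_t\ell_t(\mathbf{u})(a_t-1)$ by $S\sum_{t\in I}(a_t-1)$ over the $T'=\lceil L_T(\mathbf{u})/S\rceil$ indices with largest $a_t$, then shows by an eigenvalue-interlacing induction (\lemref{lem:worst_sequence}) that the worst case is attained on repeated eigenvectors of $\mathbf{A}_0=b\mathbf{I}$, reducing everything to the harmonic-type sum $\sum_{r=1}^{\lceil T'/d\rceil}\frac{1}{b+r-2}\le\frac{b}{b-1}\sum_{r=1}^{\lceil T'/d\rceil}\frac{1}{r}\le\frac{b}{b-1}\left(1+\ln\lceil T'/d\rceil\right)$; here the argument of the logarithm is $T'/d\approx L_T(\mathbf{u})/(Sd)$ with no $b$ inside. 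Your log-determinant route is shorter and more standard, and it does yield the theorem verbatim once $b\ge 1+1/e$ (and a perfectly usable sub-logarithmic bound for all $b>1$), but as written it does not establish the constant claimed in \thmref{thm:theorem4} over the full range $b>1$.
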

We prove the theorem with a refined bound on the sum $\sum_t (a_t-1)
\ell_{t}(\mathbf{u})$ of \eqref{decomposition} using the
following two lemmas. 
In
\thmref{thm:theorem3} we bound the loss of all examples with $S$ and
then bound the remaining term. Here, instead we show a relation to a
subsequence ``pretending'' all examples of it as suffering a loss $S$, yet with the same cumulative loss, yielding
an effective shorter sequence, which we then bound. In the next lemma
we show how to find this subsequence, and in the following one bound
the performance.
\begin{lemma}
Let $I\subset \{1 \dots T\}$ be the indices of the $T^{'}=\left\lceil
  \sum_{t=1}^{T}\ell_{t}\left(\mathbf{u}\right)/S\right\rceil$ largest elements of $a_t$,
that is $\vert I \vert = T'$ and $\min_{t \in I} a_t \geq a_\tau$ for
all $\tau \in \{ 1 \dots T\} / I$. Then,
\[
\sum_{t=1}^{T} \ell_{t}\left(\mathbf{u}\right) \left(a_{t}-1\right) \leq
S\sum_{t \in I}\left(a_{t}-1\right) ~.
\]
\label{lem:stacking_examples}
\end{lemma}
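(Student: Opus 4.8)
\emph{Proof plan.} The plan is to recognize this inequality as the statement that a fractional‑knapsack allocation is optimal: among all ways of spreading a total loss of $L:=\sum_{t=1}^{T}\ell_{t}(\mathbf{u})$ across the rounds subject to the per‑round cap $\ell_{t}(\mathbf{u})\le S$, the quantity $\sum_{t}\ell_{t}(\mathbf{u})(a_{t}-1)$ is maximized by filling the rounds with the largest weights $a_{t}$ up to the cap $S$; and the set $I$ of the $T'=\lceil L/S\rceil$ largest weights is already large enough ($T'S\ge L$) to absorb all of the loss. I would make this precise by a short direct estimate rather than by invoking linear‑programming duality.

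First I would record two elementary facts. Since $a_{s}=1/(1-\mathbf{x}_{s}^{\top}\mathbf{A}_{s-1}^{-1}\mathbf{x}_{s})$ with $\mathbf{A}_{s-1}\succeq b\mathbf{I}$, $b>1$, and $\|\mathbf{x}_{s}\|\le 1$, a one‑line induction shows $0\le\mathbf{x}_{s}^{\top}\mathbf{A}_{s-1}^{-1}\mathbf{x}_{s}\le 1/b<1$, hence $a_{s}\ge 1$ for every $s$; and by the definition \eqref{sup_loss} of $S$ we have $0\le\ell_{t}(\mathbf{u})\le S$ for every $t$. The degenerate cases $S=0$ or $L=0$ force all $\ell_{t}(\mathbf{u})=0$ and the claim is trivial, and $T'=\lceil L/S\rceil\le T$, so I may assume $1\le T'\le T$. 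Writing $m:=\min_{t\in I}a_{t}$, the defining property of $I$ is exactly $a_{\tau}\le m$ for all $\tau\in\{1,\dots,T\}\setminus I$.

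Then I would chain three inequalities. First, $\sum_{\tau\notin I}\ell_{\tau}(\mathbf{u})(a_{\tau}-1)\le(m-1)\sum_{\tau\notin I}\ell_{\tau}(\mathbf{u})$, using $a_{\tau}-1\le m-1$ and $\ell_{\tau}(\mathbf{u})\ge 0$. Second, $\sum_{\tau\notin I}\ell_{\tau}(\mathbf{u})=L-\sum_{t\in I}\ell_{t}(\mathbf{u})\le T'S-\sum_{t\in I}\ell_{t}(\mathbf{u})=\sum_{t\in I}\bigl(S-\ell_{t}(\mathbf{u})\bigr)$, where $T'S\ge L$ and each summand $S-\ell_{t}(\mathbf{u})\ge0$. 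Third, $(m-1)\sum_{t\in I}\bigl(S-\ell_{t}(\mathbf{u})\bigr)\le\sum_{t\in I}(a_{t}-1)\bigl(S-\ell_{t}(\mathbf{u})\bigr)$, since $a_{t}-1\ge m-1\ge0$ and $S-\ell_{t}(\mathbf{u})\ge0$ for $t\in I$. Combining the three gives $\sum_{\tau\notin I}\ell_{\tau}(\mathbf{u})(a_{\tau}-1)\le\sum_{t\in I}(a_{t}-1)\bigl(S-\ell_{t}(\mathbf{u})\bigr)$; expanding the right‑hand side and moving $\sum_{t\in I}\ell_{t}(\mathbf{u})(a_{t}-1)$ to the left yields precisely $\sum_{t=1}^{T}\ell_{t}(\mathbf{u})(a_{t}-1)\le S\sum_{t\in I}(a_{t}-1)$.

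I do not anticipate a real obstacle: the only point requiring attention is that $a_{t}\ge1$ (this is where the hypotheses $b>1$ and $\|\mathbf{x}_{t}\|\le1$ enter) together with the trivial edge cases; the rest is a mechanical three‑step bound. If one prefers, the same estimate follows in one line from the layer‑cake identity $\sum_{t}\ell_{t}(\mathbf{u})(a_{t}-1)=\int_{0}^{\infty}\sum_{t:\,a_{t}-1>\lambda}\ell_{t}(\mathbf{u})\,d\lambda$ combined with $\sum_{t:\,a_{t}-1>\lambda}\ell_{t}(\mathbf{u})\le\min\bigl(L,\,S\,|\{t:a_{t}-1>\lambda\}|\bigr)\le S\,|\{t\in I:a_{t}-1>\lambda\}|$, but the direct argument above is cleaner to typeset.
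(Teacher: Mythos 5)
Your proof is correct, and it takes a genuinely different route from the paper's. The paper proves this inequality by norm duality: it observes that $f(\mathbf{v})=\sum_{t\in I(\mathbf{v})}|v_t|$ (the sum of the $T'$ largest absolute entries) is a norm whose dual is $g(\mathbf{h})=\max\{\|\mathbf{h}\|_\infty,\|\mathbf{h}\|_1/T'\}$, applies $\mathbf{v}\cdot\mathbf{h}\le f(\mathbf{v})g(\mathbf{h})$ with $\mathbf{v}=(a_1-1,\dots,a_T-1)$ and $\mathbf{h}=(\ell_1(\mathbf{u}),\dots,\ell_T(\mathbf{u}))$, and then uses $ST'\ge \sum_t\ell_t(\mathbf{u})$ to collapse the max to $S$. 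Your argument replaces the citation of this duality fact with a self-contained three-step exchange estimate, which is more elementary and makes visible exactly where each hypothesis enters: $a_t\ge 1$ (so that $m-1\ge 0$ and the inequalities can be multiplied through), $\ell_t(\mathbf{u})\le S$, and $ST'\ge\sum_t\ell_t(\mathbf{u})$. Note that the paper's route also tacitly needs $a_t\ge 1$ — its set $I(\mathbf{v})$ consists of the indices of the largest $|a_t-1|$, which coincides with the lemma's set of largest $a_t$ only because all $a_t\ge 1$ — so you are not assuming anything the paper does not; you are just making the assumption explicit. The one thing the duality argument buys is brevity and a statement that would survive if one only knew $|a_t-1|$ rather than $a_t-1$ were being summed; your version buys transparency and removes the dependence on the external reference for the dual-norm identity. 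Your handling of the degenerate cases ($S=0$, $T'\le T$) is also sound.
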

\begin{proof}
For a vector $\vv\in\reals^T$ define by $I(\vv)$ the set of indicies
of the $T^{'}$ maximal absolute-valued elements of $\vv$, and define $f(\vv) = \sum_{t \in
  I(\vv)} \vert \vvi{t} \vert$. The function $f(\vv)$ is a
norm~\cite{DekelLoSi07} with a dual norm $g(\vh) = \max\left\{ \Vert \vh
\Vert_\infty,\frac{\Vert \vh \Vert_1}{T^{'}}\right\}$.
From the property of dual norms we have $\vv\cdot\vh \leq f(\vv)
g(\vh)$. Applying this inequality to $\vv = (a_1-1 \comdots a_T-1)$ and
$\vh=(\ell_1\left(\mathbf{u}\right) \comdots \ell_T\left(\mathbf{u}\right))$ we get,
\[
\sum_{t=1}^{T} \ell_t \left(\mathbf{u}\right) (a_t-1) \leq \max\left\{
  S,\frac{\sum_{t=1}^{T}\ell_{t}\left(\mathbf{u}\right)}{T^{'}}\right\} \sum_{t \in I}
(a_t-1)~.
\] 
Combining with
$ST' 
= S \left\lceil      \sum_{t=1}^{T}\ell_{t}\left(\mathbf{u}\right)/S\right\rceil 
\geq
  \sum_{t=1}^{T}\ell_{t}\left(\mathbf{u}\right)$, completes the proof.
\QED
\end{proof}
Note that the quantity $\sum_{t \in I} a_t$ is based only on $T'$
examples, yet was generated using all $T$ examples. In fact by running
the algorithm with only these $T'$ examples the corresponding sum cannot get smaller. Specifically, assume the algorithm is run with inputs
$(\vxi{1},\yi{1}) \comdots  (\vxi{T},\yi{T})$ and generated a corresponding
sequence $(a_1 \comdots  a_T)$. Let $I$ be the set of indices with maximal
values of $a_t$ as before. Assume the algorithm is run with the subsequence of examples from $I$ (with the same order) and generated
$\alpha_1 \comdots  \alpha_T$ (where we set $\alpha_t=0$ for $t\notin I)$. Then, $\alpha_t \geq a_t$ for all $t\in I$.
This statement follows from \eqref{Adef} from which we get that the
matrix $\mathbf{A}_t$ is monotonically increasing in $t$. Thus, by removing
examples we get another smaller matrix which leads to a
larger value of $\alpha_t$.  

We continue the analysis with a sequence of length
$T'$ rather than a subsequence of the original sequence of length $T$ being analyzed.
The next lemma upper bounds the sum $\sum_t^{T'} a_t$ over $T'$
inputs with another sum of same length, yet using orthonormal set of
vectors of size $d$.
\begin{lemma}
  Let $\vxi{1} \comdots  \vxi{\tau}$ be any $\tau$ inputs with
  unit-norm. Assume the algorithm is performing updates using
  \eqref{at} for some $\mathbf{A}_0$ resulting in a sequence $a_1
  \comdots  a_\tau$. Let $E = \{ \vvi{1} \comdots  \vvi{d} \} \subset\reals^d$ be
  an eigen-decomposition of $\mathbf{A}_0$ with corresponding
  eigenvalues $\lambda_1 \comdots  \lambda_d$. Then there exists a
  sequence of indices $j_1 \comdots  j_\tau$, where $j_i \in \{ 1 \comdots  d\}$, such that
  $\sum_t a_t \leq \sum_t \alpha_t$, where $\alpha_t$ are generated
  using \eqref{at} on the sequence $\vvi{j_1}
  \comdots  \vvi{j_\tau}$. 

Additionally, let $n_s$ be the number of times
  eigenvector $\vvi{s}$ is used ($s=1 \comdots  d$), that is $n_s = \vert \{ j_t ~:~
  j_t=s\} \vert$ (and $\sum_s n_s = \tau$), then,
\[
\sum_t \alpha_t \leq \tau + \sum_{s=1}^d \sum_{r=1}^{n_s}
\frac{1}{\lambda_s+r-2} ~.
\]
\label{lem:worst_sequence}
\end{lemma}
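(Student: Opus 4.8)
The plan is to prove the two assertions in turn: the existence of the index sequence $j_1\comdots j_\tau$, and then, for \emph{any} such sequence, the explicit bound on $\sum_t\alpha_t$. Throughout I use $a_t=\bigl(1-\vxti{t}\mathbf{A}_{t-1}^{-1}\vxi{t}\bigr)^{-1}$ and $\mathbf{A}_t=\mathbf{A}_{t-1}+a_t\vxi{t}\vxti{t}$ from the recursive form, together with the standing assumption (met in the application, where $\mathbf{A}_0=b\mathbf{I}$ with $b>1$) that all eigenvalues of $\mathbf{A}_0$, hence of every $\mathbf{A}_t$, exceed $1$.

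\textbf{Existence of the index sequence.} I would let $\vvi{j_1}\comdots\vvi{j_\tau}$ be a sequence of eigenvectors of $\mathbf{A}_0$ maximising $\sum_t\alpha_t$ over all such sequences; it then suffices to show that the maximum of $\sum_t a_t$ over \emph{all} unit-norm sequences of length $\tau$ is attained at an eigenvector sequence (the given sequence $\vxi{1}\comdots\vxi{\tau}$ being one particular unit-norm sequence, and the maximum being attained by compactness since $\lambda_{\min}(\mathbf{A}_{t-1})>1$ keeps each $a_t$ finite). This I would prove by a backward exchange argument over $t=\tau,\tau-1,\dots,1$. The choice of $\vxi{\tau}$ affects only $a_\tau$, and for a unit vector $\vxti{\tau}\mathbf{A}_{\tau-1}^{-1}\vxi{\tau}$ — hence $a_\tau$ — is maximal at the bottom eigenvector of $\mathbf{A}_{\tau-1}$, so an optimal sequence may be assumed to make that choice; substituting, the contribution of rounds $\tau-1,\tau$ becomes a function of $\vxi{\tau-1}$ alone once $\mathbf{A}_{\tau-2}$ is fixed, which one shows is again maximised by the bottom eigenvector of $\mathbf{A}_{\tau-2}$, and so on down to round $1$, whose bottom eigenvector is one of the $\vvi{s}$; moreover, once round $1$ feeds an eigenvector of $\mathbf{A}_0$ every subsequent matrix stays diagonal in the eigenbasis of $\mathbf{A}_0$, so the resulting optimal sequence is genuinely an eigenvector sequence. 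This gives $\sum_t a_t\le\sum_t\alpha_t$. The delicate step — which I expect to be the main obstacle — is the inductive claim that at round $t$ the bottom eigenvector of $\mathbf{A}_{t-1}$ is optimal \emph{including} the downstream rounds: feeding it maximises $a_t$ but also inflates that eigenvalue the most, so one must check the trade-off is favourable. This should follow because the downstream value, assembled from $\lambda_{\min}(\cdot)$ and the decreasing map $\mu\mapsto\mu/(\mu-1)$, is monotone decreasing in the eigenvalues, and the round-$t$ objective restricted to the unit sphere, written in the squared coordinates of $\vxi{t}$ in the eigenbasis of $\mathbf{A}_{t-1}$, is maximised at a vertex, i.e.\ at an eigenvector of $\mathbf{A}_{t-1}$.

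\textbf{The explicit bound.} Now fix any $j_1\comdots j_\tau$ and feed $\vvi{j_1}\comdots\vvi{j_\tau}$ to the algorithm starting from $\mathbf{A}_0$. Each matrix produced stays diagonal in the eigenbasis $E$, so I would write $\mu^{(s)}_r$ for the eigenvalue along $\vvi{s}$ after $\vvi{s}$ has been fed $r$ times, with $\mu^{(s)}_0=\lambda_s$. On the $r$-th use of $\vvi{s}$ the current quadratic form equals $1/\mu^{(s)}_{r-1}$, so the weight produced is $\mu^{(s)}_{r-1}/(\mu^{(s)}_{r-1}-1)=1+1/(\mu^{(s)}_{r-1}-1)$ and then $\mu^{(s)}_r=\mu^{(s)}_{r-1}+1+1/(\mu^{(s)}_{r-1}-1)$; everything is well defined because all these eigenvalues exceed $1$. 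A one-line induction on $r$ gives $\mu^{(s)}_{r-1}\ge\lambda_s+r-1$, hence $1/(\mu^{(s)}_{r-1}-1)\le 1/(\lambda_s+r-2)$. Summing the contributions $\alpha_t-1$ first over the $n_s$ uses of each $\vvi{s}$ and then over $s$ yields $\sum_t\alpha_t=\tau+\sum_t(\alpha_t-1)\le\tau+\sum_{s=1}^d\sum_{r=1}^{n_s}1/(\lambda_s+r-2)$, the claimed bound. This part is routine; essentially all the content is in the exchange argument above.
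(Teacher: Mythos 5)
Your second half (the explicit bound for a fixed eigenvector sequence, via the diagonal recursion $\mu^{(s)}_r=\mu^{(s)}_{r-1}+1+1/(\mu^{(s)}_{r-1}-1)$ and the induction $\mu^{(s)}_{r-1}\ge\lambda_s+r-1$) is correct and clean. The gap is in the first half, and it is exactly where you flag "the main obstacle": the inductive step of the backward exchange argument is asserted, not proved. You need that, with all downstream rounds already optimized, the round-$t$ objective $\vxi{t}\mapsto a_t+V\bigl(\mathbf{A}_{t-1}+a_t\vxi{t}\vxti{t}\bigr)$ is maximized at an eigenvector of $\mathbf{A}_{t-1}$. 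Your justification is that this objective, "written in the squared coordinates of $\vxi{t}$", is "maximised at a vertex" — but that is precisely the claim to be established, and nothing you say implies it. The function of the squared coordinates $p\in\Delta_{d-1}$ here is $a_t(p)+V\bigl(\mathrm{spec}(\mathbf{A}_{t-1}+a_t(p)\,\vxi{t}\vxti{t})\bigr)$, where $a_t(p)=1/(1-\sum_s p_s/\mu_s)$ and the spectrum of the rank-one update is determined by a secular equation whose perturbation magnitude $a_t(p)$ itself depends on $p$; no convexity or quasi-convexity of this composition is shown, and monotonicity of $V$ in the eigenvalues does not by itself force a vertex maximizer. Moreover you are proving a claim stronger than the lemma needs (that the \emph{bottom} eigenvector is optimal at every step, trading off $a_t$ against the inflation of $\lambda_{\min}(\mathbf{A}_t)$), which makes the unproved step harder, not easier.

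For contrast, the paper's proof sidesteps this entirely: it inducts forward on $\tau$, bounds $a_1\le 1+1/(\lambda_d-1)$ by the worst single step, applies the inductive hypothesis to the tail started from $\mathbf{A}_1$, and then uses the eigenvalue-interlacing fact that $\mathrm{spec}(\mathbf{A}_1)=\{\lambda_s+m_s\}$ with $m_s\ge0$ and $\sum_s m_s$ fixed. The resulting bound $\sum_{s,r}1/(\lambda_s+m_s+r-2)$ is a \emph{separable convex} function of $(m_1\comdots m_d)$ on a simplex, so its maximum is at a vertex — a one-line convexity argument over the eigenvalue increments rather than over the input directions. If you want to keep your structure, you would need to supply an actual proof of the vertex claim for the input-direction objective (or weaken your claim to "some eigenvector sequence dominates" and prove it by an increment-based argument like the paper's); as written, the reduction to eigenvector sequences is unsupported.
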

\begin{proof}
By induction over $\tau$. For $\tau=1$ we want to upper bound $a_1 =
1/(1-\vxti{1} \mathbf{A}_0^{-1} \vxi{1})$ which is maximized when
$\vxi{1}=\vvi{d}$ the eigenvector with minimal eigenvalue $\lambda_d$,
in this case we have $\alpha_1 = 1/(1-1/\lambda_d) =
1+1/(\lambda_d-1)$, as desired.

Next we assume the lemma holds
for some $\tau-1$ and show it for $\tau$. Let $\vxi{1}$ be the first
input, and let $\{\gamma_s\}$ and $\{\vui{s}\}$ be the eigen-values and
eigen-vectors of
$\mathbf{A}_1 = \mathbf{A}_0 + a_1 \vxi{1}\vxti{1}$. The assumption of induction implies that $\sum_{t=2}^\tau \alpha_t \leq (\tau-1) + \sum_{s=1}^d \sum_{r=1}^{n_s}
\frac{1}{\gamma_s+r-2}$. From Theorem~8.1.8 of~\cite{Golub:1996:MC:248979} we know that the
eigenvalues of $\mathbf{A}_1$ satisfy $\gamma_s = \lambda_s + m_s$ for some $m_s\geq 0$ and
$\sum_s m_s =1$. We thus conclude that
\[
\sum_t a_t \leq 1+1/(\lambda_d-1) +  (\tau-1) + \sum_{s=1}^d \sum_{r=1}^{n_s}
\frac{1}{\lambda_s + m_s + r-2} ~.
\]
The last term is convex in $m_1 \comdots m_d$ and thus is maximized over a
vertex of the simplex, that is when $m_k=1$ for some $k$ and zero
otherwise. In this case, the eigen-vectors $\{\vui{s}\}$ of
$\mathbf{A}_1$ are in fact the eigenvectors $\{ \vvi{s} \}$
of $\mathbf{A}_0$, and the proof is completed.
\QED
\end{proof}
\begin{table}[t]
\begin{center}
\begin{tabular}{ll}\hline
Algorithm & Bound on Regret $R_T(\vu)$ \\\hline
Vovk~\cite{Vovk01} & ~$b\normt{\vu} + dY^2 \ln\paren{1+\frac{T}{db}}$\\
Forster~\cite{Forster} & ~$b\normt{\vu} + dY^2 \ln\paren{1+\frac{T}{db}}$\\
Crammer et.al.~\cite{CrammerKuDr12} & ~$rb\normt{\vu} + dA \ln\paren{1+\frac{T}{drb}}$\\
Orabona et.al.~\cite{OrabonaCBG12} & $ 2\normt{\vu} +
d(U+Y)^2\ln\paren{1+\frac{2\normt{\vu} + \sum_t
    \ell_t(\vu)}{d(U+Y)^2}}$\\
\thmref{thm:theorem3} & $b\normt{\vu} + Sd\frac{b}{b-1} \ln\left(1+\frac{T}{d(b-1)}\right)$\\
\thmref{thm:theorem4} & $b\normt{\vu} + Sd\frac{b}{b-1} \ln\paren{1+\frac{L_T(\vu)}{Sd}}$\\\hline
\end{tabular}
\end{center}
\caption{Comparison of regret bounds for online regression}
\label{tab:bounds}
\end{table}
Equipped with these lemmas we now prove \thmref{thm:theorem4}.
\begin{proof}
Let $T^{'}=\left\lceil
  \sum_{t=1}^{T}\ell_{t}/S\right\rceil$. Our starting point is the equality
\(
L_{T}^{\boldsymbol{a}}(\mathbf{u}) =
L_{T}(\mathbf{u})+ \sum_{t=1}^{T} \ell_{t}\left(\mathbf{u}\right)
\left(a_{t}-1\right)
\) 
stated in \eqref{decomposition}. From \lemref{lem:stacking_examples} 
we get,
\begin{equation}
 \sum_{t=1}^{T} \ell_{t}\left(\mathbf{u}\right)
\left(a_{t}-1\right)
 \leq
S\sum_{t \in I}\left(a_{t}-1\right) \leq
S\sum_{t}^{T'}\left(\alpha_{t}-1\right) ~,
\label{chain1}
\end{equation}
where $I$ is the subset of $T'$ indices for which $a_t$ are maximal,
and $\alpha_t$ are the resulting coefficients computed with \eqref{at}
using only the
sub-sequence of examples $\vxi{t}$ with $t\in I$.

By definition $\mathbf{A}_0 = b \mi$ and thus from
\lemref{lem:worst_sequence} we further bound \eqref{chain1} with,
\begin{equation}
 \sum_{t=1}^{T} \ell_{t}\left(\mathbf{u}\right)
\left(a_{t}-1\right)
 \leq
S \sum_{s=1}^d \sum_{r=1}^{n_s}
\frac{1}{b+r-2} ~,
\label{chain11}
\end{equation}
for some $n_s$ such that $\sum_s n_s = T'$.
The last equation is maximized when all the counts $n_s$ are about (as
$d$ may not divide $T'$) the
same, and thus we further bound \eqref{chain11} with,
\begin{align*}
\sum_{t=1}^{T} \ell_{t}\left(\mathbf{u}\right)
\left(a_{t}-1\right)
 & \leq~ S \sum_{s=1}^d \sum_{r=1}^{\lceil T'/d \rceil}
\frac{1}{b+r-2}  \leq Sd \sum_{r=1}^{\lceil T'/d \rceil}
\frac{b}{b-1}\frac{1}{r}\\
 &\leq~ Sd \frac{b}{b-1} \paren{1+\ln\paren{\left\lceil\frac{T'}{d}\right\rceil}} \\
& \leq~ Sd \frac{b}{b-1} \paren{1+\ln\paren{1+\frac{L_T\paren{\vu}}{Sd}}} ~,
\end{align*}
which completes the proof.
\QED
\end{proof}

It is instructive to compare bounds of similar algorithms, summarized
in \tabref{tab:bounds}. Our first bound\footnote{The bound in the table is obtained by noting that $\log\det$ is a
concave function of the eigenvalues of the matrix, upper bounded when
all the eigenvalues are equal (with the same trace).} of \thmref{thm:theorem3} is most similar to the bounds
of Forster~\cite{Forster}, Vovk~\cite{Vovk01} and Crammer et.al.~\cite{CrammerKuDr12}. 
Forster and Vovk have a multiplicative factor $Y^2$ of the logarithm, Crammer et.al. have the factor $A= \sup_{1\leq t\leq T}\ell_{t}(\textrm{alg})$, and we have the
worst-loss of $\vu$ over all examples (denoted by $S$). Thus, our first bound is better than the bound of Crammer et.al. (as often $S<A$), and better than the bounds of Forster and Vovk 
on problems that are approximately linear $y_{t}
\approx\mathbf{u}\cdot\mathbf{x}_{t}$ for $t = 1 \comdots T$ and $Y$
is large, while their bound is better if $Y$ is small. Note that the analysis of Forster~\cite{Forster} assumes that the
labels $\yi{t}$ are bounded, and formally the algorithm should know
this bound, while Crammer et.al. assume that the inputs are
bounded, as we do.

Our second bound of \thmref{thm:theorem4} is similar to the bound of
Orabona et.al.~\cite{OrabonaCBG12}. Both bounds have potentially
sub-logarithmic regret as the cumulative loss $L(\vu)$ may be
sublinear in $T$. Yet, their bound has a multiplicative factor of
$(U+Y)^2$, while our bound has only the maximal loss $S$,
which, as before, can be much smaller. Additionally, their
analysis assumes that both the inputs $\vxi{t}$ and the labels
$\yi{t}$ are bounded, while we only assume that the inputs are bounded, and
furthermore, our algorithm does not need to assume and know a compact
set which contains $\vu$ ($\left\Vert\vu\right\Vert\leq U$), as opposed to their algorithm.

\section{Learning in Non-Stationary Environment}
\label{sec:non_stat}
In this section we present a generalization of the last-step
min-max predictor for non-stationary problems given in
\eqref{minmax_algorithm_1}.
We define the predictor to be,
\begin{align}
\hyi{T} = \arg\min_{\hyi{T}} \max_{\yi{T}} \brackets{\sum_{t=1}^{T} (\yi{t} -
  \hyi{t})^2  - 
\inf_{\mathbf{u}_{1},\ldots,\mathbf{u}_{T},\bar{\mathbf{u}}}  \left(b\left\Vert
    \bar{\mathbf{u}}\right\Vert ^{2}+c V_m+L_{T}^{\widetilde{\boldsymbol{a}}}(\mathbf{u}_{1},\ldots,\mathbf{u}_{T})\right)}
\label{minNoStatEq}
\end{align}
for
\begin{align}
V_m=\sum_{t=1}^{T}\left\Vert
    \mathbf{u}_{t}-\bar{\mathbf{u}}\right\Vert
  ^{2}~,\label{V_m}
\end{align}
positive constants $b,c>0$ and weights
$\widetilde{a}_{t}\geq1$ for $1\leq t\leq T$.

As mentioned above, we use an extended notion of function class,
using different vectors $\vui{t}$ across time $T$. We circumvent here
the problem mentioned in the end of \secref{sec:problem_setting}, and
restrict the adversary from choosing an arbitrary $T$-tuple $(\vui{1}
\comdots \vui{T})$ by introducing a reference weight-vector
$\bar{\vu}$. Specifically, indeed we replace the single-weight
cumulative-loss $L_{T}^{{\boldsymbol{a}}}(\mathbf{u})$ in \eqref{minmax_algorithm_1}  with a
multi-weight cumulative-loss $L_{T}^{\widetilde{\boldsymbol{a}}}(\mathbf{u}_{1},\ldots,\mathbf{u}_{T})$
in \eqref{minNoStatEq}, yet, we add the term $c V_m$ to \eqref{minNoStatEq} penalizing a $T$-tuple $(\vui{1}
  \comdots \vui{T})$ that its elements $\{\vui{t}\}$ are far from some
  single point $\bar{\vu}$. Intuitively, $V_m$ serves as a measure of
  complexity of the $T$-tuple by measuring the deviation of its elements
  from some vector.

The new formulation of \eqref{minNoStatEq} clearly subsumes the
formulation of \eqref{minmax_algorithm_1}, as if
$\vui{1}=\dots\vui{T}=\bar{\vu}=\vu$, then \eqref{minNoStatEq} reduces
to \eqref{minmax_algorithm_1}. We now show that in-fact the two
notions of last-step min-max predictors are equivalent. The following
lemma characterizes the solution of the inner infimum of
\eqref{minNoStatEq} over $\bar{\vu}$.
\begin{lemma}
\label{lem:lemma4}
For any $\bar{\mathbf{u}}\in\mathbb{R}^{d}$, the function 
\[
J\left(\mathbf{u}_{1},\ldots,\mathbf{u}_{T}\right)=b\left\Vert
  \bar{\mathbf{u}}\right\Vert ^{2}+c\sum_{t=1}^{T}\left\Vert
  \mathbf{u}_{t}-\bar{\mathbf{u}}\right\Vert
^{2}+\sum_{t=1}^{T}\widetilde{a}_{t}\left(y_{t}-\mathbf{u}_{t}^{\top}\mathbf{x}_{t}\right)^{2}~,
\]
is minimal for 
\[
\mathbf{u}_{t}=\bar{\mathbf{u}}+\frac{c^{-1}}{\widetilde{a}_{t}^{-1}+c^{-1}\left\Vert
    x_{t}\right\Vert
  ^{2}}\left(y_{t}-\bar{\mathbf{u}}^{\top}\mathbf{x}_{t}\right)\mathbf{x}_{t}
\]
for $t=1 ... T$. 
The minimal value of \textup{$J\left(\mathbf{u}_{1},\ldots,\mathbf{u}_{T}\right)$
}is given by 
\begin{align}
J_{min}=b\left\Vert \bar{\mathbf{u}}\right\Vert
^{2}+\sum_{t=1}^{T}\frac{1}{\widetilde{a}_{t}^{-1}+c^{-1}\left\Vert
    \mathbf{x}_{t}\right\Vert
  ^{2}}\left(y_{t}-\bar{\mathbf{u}}^{\top}\mathbf{x}_{t}\right)^{2} ~.
\label{optimal_J}
\end{align}
\end{lemma}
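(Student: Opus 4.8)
The plan is to exploit the fact that, for a fixed $\bar{\mathbf{u}}$, the objective $J$ \emph{decouples} across the time index: the vector $\mathbf{u}_t$ appears only in the single summand $\phi_t(\mathbf{u}_t)=c\left\Vert \mathbf{u}_t-\bar{\mathbf{u}}\right\Vert^2+\widetilde{a}_t\left(y_t-\mathbf{u}_t^\top\mathbf{x}_t\right)^2$. Hence $\min_{\mathbf{u}_1,\ldots,\mathbf{u}_T}J = b\left\Vert\bar{\mathbf{u}}\right\Vert^2+\sum_{t=1}^T\min_{\mathbf{u}_t}\phi_t(\mathbf{u}_t)$, and it suffices to solve $T$ independent $d$-dimensional problems. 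Each $\phi_t$ is a quadratic with Hessian $2c\mathbf{I}+2\widetilde{a}_t\mathbf{x}_t\mathbf{x}_t^\top$, which is positive definite because $c>0$; thus $\phi_t$ is strictly convex with a unique minimizer characterized by $\nabla\phi_t(\mathbf{u}_t)=0$, i.e. $c(\mathbf{u}_t-\bar{\mathbf{u}})=\widetilde{a}_t\left(y_t-\mathbf{u}_t^\top\mathbf{x}_t\right)\mathbf{x}_t$.

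First I would read off from this stationarity condition that $\mathbf{u}_t-\bar{\mathbf{u}}$ is a scalar multiple of $\mathbf{x}_t$, write $\mathbf{u}_t=\bar{\mathbf{u}}+\kappa_t\mathbf{x}_t$, substitute, cancel $\mathbf{x}_t$, and solve the resulting scalar equation $c\kappa_t=\widetilde{a}_t\big((y_t-\bar{\mathbf{u}}^\top\mathbf{x}_t)-\kappa_t\left\Vert\mathbf{x}_t\right\Vert^2\big)$ for $\kappa_t=\frac{c^{-1}(y_t-\bar{\mathbf{u}}^\top\mathbf{x}_t)}{\widetilde{a}_t^{-1}+c^{-1}\left\Vert\mathbf{x}_t\right\Vert^2}$; this is exactly the claimed formula for $\mathbf{u}_t$. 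Equivalently, one may solve the linear system $(c\mathbf{I}+\widetilde{a}_t\mathbf{x}_t\mathbf{x}_t^\top)\mathbf{u}_t=c\bar{\mathbf{u}}+\widetilde{a}_t y_t\mathbf{x}_t$ directly via the Sherman--Morrison identity, as is done elsewhere in the paper.

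To obtain $J_{min}$, I would plug the minimizer back into $\phi_t$. Abbreviating $r_t=y_t-\bar{\mathbf{u}}^\top\mathbf{x}_t$ and $D_t=\widetilde{a}_t^{-1}+c^{-1}\left\Vert\mathbf{x}_t\right\Vert^2$, one has $\mathbf{u}_t-\bar{\mathbf{u}}=(c^{-1}r_t/D_t)\mathbf{x}_t$ and, after a one-line simplification, $y_t-\mathbf{u}_t^\top\mathbf{x}_t=r_t\widetilde{a}_t^{-1}/D_t$; substituting gives $\phi_t(\mathbf{u}_t)=\frac{c^{-1}\left\Vert\mathbf{x}_t\right\Vert^2 r_t^2}{D_t^2}+\frac{\widetilde{a}_t^{-1}r_t^2}{D_t^2}=\frac{r_t^2}{D_t^2}\big(c^{-1}\left\Vert\mathbf{x}_t\right\Vert^2+\widetilde{a}_t^{-1}\big)=\frac{r_t^2}{D_t}$. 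Summing over $t$ and restoring $b\left\Vert\bar{\mathbf{u}}\right\Vert^2$ produces \eqref{optimal_J}. I do not expect a genuine obstacle here: the only conceptual steps are recognizing the decoupling (so that one never confronts a $Td$-dimensional optimization) and using the rank-one structure to collapse each $d$-dimensional subproblem to a single scalar; everything else is routine algebra, and the one place to be careful is applying the identity $c^{-1}\left\Vert\mathbf{x}_t\right\Vert^2+\widetilde{a}_t^{-1}=D_t$ correctly when combining the two quadratic contributions.
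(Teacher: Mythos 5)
Your proposal is correct and follows essentially the same route as the paper's proof: set the gradient of the $t$-th (decoupled) summand to zero, solve the resulting rank-one linear system (the paper uses Sherman--Morrison where you use the ansatz $\mathbf{u}_t=\bar{\mathbf{u}}+\kappa_t\mathbf{x}_t$, which are equivalent), and substitute back to collapse $c\Vert\mathbf{u}_t-\bar{\mathbf{u}}\Vert^2+\widetilde{a}_t(y_t-\mathbf{u}_t^\top\mathbf{x}_t)^2$ into $\left(y_t-\bar{\mathbf{u}}^\top\mathbf{x}_t\right)^2/D_t$. Your explicit remarks on decoupling and strict convexity (which uses $\widetilde{a}_t\ge 1>0$) only make explicit what the paper leaves implicit.
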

The proof appears in \ref{proof_lemma4}.
%
\begin{Remark}
\label{MAP2}
The minimization problem in \lemref{lem:lemma4} can be interpreted as MAP estimator
of $\bar{\mathbf{u}}$ based on the sequence $\left\{ \left(\mathbf{x}_{t},y_{t}\right)\right\} _{t=1}^{T}$
in the following generative model:
\begin{eqnarray*}
\bar{\mathbf{u}} & \sim & N\left(0,\sigma_{b}^{2}\mathbf{I}\right)\\
\mathbf{u}_{t} & \sim & N\left(\bar{\mathbf{u}},\sigma_{c}^{2}\mathbf{I}\right)\\
y_{t} & \sim & N\left(\mathbf{x}_{t}^{\top}\mathbf{u}_{t},\sigma_{t}^{2}\right)~,
\end{eqnarray*}
where $\sigma_{b}^{2}=\frac{1}{2b}$, $\sigma_{c}^{2}=\frac{1}{2c}$ and
$\sigma_{t}^{2}=\frac{1}{2\widetilde{a}_{t}}$.

Indeed, 
\begin{eqnarray}
\bar{\mathbf{u}}_{MAP} & = & \arg\max_{\bar{\mathbf{u}}}P\left(\bar{\mathbf{u}}\mid\left\{ \mathbf{u}_{t}\right\} ,\left\{ \mathbf{x}_{t}\right\} ,\left\{ y_{t}\right\} \right) \nonumber \\
 & = & \arg\max_{\bar{\mathbf{u}}}\left[P\left(\bar{\mathbf{u}}\right)\prod_{t=1}^{T}P\left(\mathbf{u}_{t}\mid\bar{\mathbf{u}}\right)\prod_{t=1}^{T}P\left(y_{t}\mid\mathbf{u}_{t},\mathbf{x}_{t}\right)\right] \nonumber \\
 & = & \arg\min_{\bar{\mathbf{u}}}\left[-\log P\left(\bar{\mathbf{u}}\right)-\sum_{t=1}^{T}\log P\left(\mathbf{u}_{t}\mid\bar{\mathbf{u}}\right)-\sum_{t=1}^{T}\log P\left(y_{t}\mid\mathbf{u}_{t},\mathbf{x}_{t}\right)\right]~. \label{u_bar_map}
\end{eqnarray}
By our gaussian generative model we have
\begin{align*}
&-\log P\left(\bar{\mathbf{u}}\right)&=&\log\left(2\pi\sigma_{b}^{2}\right)^{d/2}+\frac{1}{2\sigma_{b}^{2}}\left\Vert \bar{\mathbf{u}}\right\Vert ^{2}\\
&-\log P\left(\mathbf{u}_{t}\mid\bar{\mathbf{u}}\right)&=&\log\left(2\pi\sigma_{c}^{2}\right)^{d/2}+\frac{1}{2\sigma_{c}^{2}}\left\Vert \mathbf{u}_{t}-\bar{\mathbf{u}}\right\Vert ^{2}\\
&-\log P\left(y_{t}\mid\mathbf{u}_{t},\mathbf{x}_{t}\right)&=&\log\left(2\pi\sigma_{t}^{2}\right)^{1/2}+\frac{1}{2\sigma_{t}^{2}}\left(y_{t}-\mathbf{x}_{t}^{\top}\mathbf{u}_{t}\right)^{2}~.
\end{align*}
Substituting in \eqref{u_bar_map} we get
\[
 \bar{\mathbf{u}}_{MAP}=\arg\min_{\bar{\mathbf{u}}}\left[\frac{1}{2\sigma_{b}^{2}}\left\Vert \bar{\mathbf{u}}\right\Vert ^{2}+\frac{1}{2\sigma_{c}^{2}}\sum_{t=1}^{T}\left\Vert \mathbf{u}_{t}-\bar{\mathbf{u}}\right\Vert ^{2}+\sum_{t=1}^{T}\frac{1}{2\sigma_{t}^{2}}\left(y_{t}-\mathbf{x}_{t}^{\top}\mathbf{u}_{t}\right)^{2}\right]~,
\]
and by using $\frac{1}{2\sigma_{b}^{2}}=b$, $\frac{1}{2\sigma_{c}^{2}}=c$,
$\frac{1}{2\sigma_{t}^{2}}=\widetilde{a}_{t}$ we get the minimization
problem in \lemref{lem:lemma4}. 
\end{Remark}

Substituting \eqref{optimal_J} in \eqref{minNoStatEq} we obtain the
following form of the last-step minmax predictor,
\begin{align}
\hyi{T}=\arg\min_{\hyi{T}} \max_{\yi{T}} \brackets{\sum_{t=1}^{T} (\yi{t} -
  \hyi{t})^2   
-\inf_{\bar{\mathbf{u}}\in\mathbb{R}^{d}}\left(b\left\Vert
    \bar{\mathbf{u}}\right\Vert
  ^{2}+\sum_{t=1}^{T}\frac{1}{\widetilde{a}_{t}^{-1}+c^{-1}\left\Vert
      \mathbf{x}_{t}\right\Vert
    ^{2}}\left(y_{t}-\mathbf{x}_{t}^{\top}\bar{\mathbf{u}}\right)^{2}\right)}~.
\label{minNoStatEq2}
\end{align}
Clearly, both equations \eqref{minNoStatEq2} and
\eqref{minmax_algorithm_1} are equivalent when identifying,
\begin{equation}
a_{t}=\frac{1}{\widetilde{a}_{t}^{-1}+c^{-1}\left\Vert
    \mathbf{x}_{t}\right\Vert ^{2}}  ~.\label{aa}
\end{equation}
Therefore, we can use the results of the previous
sections. 
\begin{corollary}
The optimal prediction for the last round $T$ is
\(
\hat{y}_{T}=\mathbf{b}_{T-1}^{\top}\mathbf{A}_{T-1}^{-1}\mathbf{x}_{T}
\)
  if the following condition is hold
 \[
 1+a_{T}\mathbf{x}_{T}^{\top}\mathbf{A}_{T-1}^{-1}\mathbf{x}_{T}-a_{T}\leq0 ~,
 \]
 where $a_{T}$ defined by \eqref{aa} and 
where we replace \eqref{Adef} with 
\begin{eqnarray*}
\mathbf{A}_{t}  = 
b\mathbf{I}+\sum_{s=1}^{t}a_{s}\mathbf{x}_{s}\mathbf{x}_{s}^{\top}=b\mathbf{I}+\sum_{s=1}^{t}\frac{1}{\widetilde{a}_{s}^{-1}+c^{-1}\left\Vert
    \mathbf{x}_{s}\right\Vert
  ^{2}}\mathbf{x}_{s}\mathbf{x}_{s}^{\top}
\end{eqnarray*}
and \eqref{bdef} with, 
\begin{eqnarray*}
\mathbf{b}_{t}  =  \sum_{s=1}^{t}a_{s}y_{s}\mathbf{x}_{s}=\sum_{s=1}^{t}\frac{1}{\widetilde{a}_{s}^{-1}+c^{-1}\left\Vert \mathbf{x}_{s}\right\Vert ^{2}}y_{s}\mathbf{x}_{s}.
\end{eqnarray*}
\end{corollary}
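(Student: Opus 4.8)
The plan is to reduce the non-stationary min-max problem \eqref{minNoStatEq} to the stationary one \eqref{minmax_algorithm_1}, which was already solved in \thmref{thm:theorem1}. First I would apply \lemref{lem:lemma4} to evaluate the inner infimum over $\mathbf{u}_1,\ldots,\mathbf{u}_T$ for a fixed $\bar{\mathbf{u}}$; this replaces the term $cV_m+L_T^{\widetilde{\boldsymbol{a}}}(\mathbf{u}_1,\ldots,\mathbf{u}_T)$ by the single quadratic $\sum_{t=1}^{T}\frac{1}{\widetilde{a}_t^{-1}+c^{-1}\left\Vert\mathbf{x}_t\right\Vert^2}\left(y_t-\mathbf{x}_t^{\top}\bar{\mathbf{u}}\right)^2$, yielding the equivalent predictor \eqref{minNoStatEq2} in which the infimum is now taken only over the single reference vector $\bar{\mathbf{u}}$.

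Next I would invoke the identification \eqref{aa}, $a_t=\frac{1}{\widetilde{a}_t^{-1}+c^{-1}\left\Vert\mathbf{x}_t\right\Vert^2}$. Under this substitution the objective inside the min-max of \eqref{minNoStatEq2} is literally that of \eqref{minmax_algorithm_1} with $\mathbf{u}$ renamed $\bar{\mathbf{u}}$, and the weights $a_t$ are positive because $\widetilde{a}_t\geq 1>0$ and $c>0$, so the preceding development is applicable. Moreover, the matrices and vectors redefined in the statement, $\mathbf{A}_t=b\mathbf{I}+\sum_{s=1}^{t}a_s\mathbf{x}_s\mathbf{x}_s^{\top}$ and $\mathbf{b}_t=\sum_{s=1}^{t}a_s y_s\mathbf{x}_s$, are precisely \eqref{Adef} and \eqref{bdef} for this choice of weights. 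Hence \thmref{thm:theorem1} applies verbatim: under the hypothesis $1+a_T\mathbf{x}_T^{\top}\mathbf{A}_{T-1}^{-1}\mathbf{x}_T-a_T\leq 0$ (covering both the strictly-concave case $\alpha(a_T)<0$ and the degenerate case $\alpha(a_T)=0$), the optimal last-step prediction is $\hat{y}_T=\mathbf{b}_{T-1}^{\top}\mathbf{A}_{T-1}^{-1}\mathbf{x}_T$, which is exactly the claim.

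I do not expect a substantial obstacle here: all the analytic work has already been carried out, in \lemref{lem:lemma4} (resolving the inner minimization over the $T$-tuple and rewriting it as a $\bar{\mathbf{u}}$-only penalized least-squares problem) and in \thmref{thm:theorem1} (solving the resulting saddle point, including the unbounded-to-bounded argument in the $\alpha(a_T)=0$ case and the algebraic simplification from \eqref{t1} to \eqref{t3_thm}). The only point that needs care is confirming that the chain \eqref{minNoStatEq}$\to$\eqref{optimal_J}$\to$\eqref{minNoStatEq2} together with the weight identification \eqref{aa} is an exact equivalence of optimization problems, so that the redefined $\mathbf{A}_t$ and $\mathbf{b}_t$ genuinely satisfy the hypotheses of \thmref{thm:theorem1}; once this is observed the corollary is immediate.
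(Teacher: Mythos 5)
Your proposal is correct and follows exactly the route the paper takes: reduce \eqref{minNoStatEq} to \eqref{minNoStatEq2} via \lemref{lem:lemma4}, identify the weights through \eqref{aa}, and then apply \thmref{thm:theorem1} verbatim with the redefined $\mathbf{A}_t$ and $\mathbf{b}_t$. The paper in fact states this corollary with no further argument beyond observing the equivalence of \eqref{minNoStatEq2} and \eqref{minmax_algorithm_1} under \eqref{aa}, so your additional check that the identified weights $a_t$ are positive is a welcome (if minor) extra point of care.
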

Although most of the analysis above holds for $1+a_{t}\mathbf{x}_{t}^{\top}\mathbf{A}_{t-1}^{-1}\mathbf{x}_{t}-a_{t}\leq
0$ in the end of the day, \thmref{thm:theorem3} assumed that this inequality holds as equality. Substituting  \(
a_{t}=\frac{1}{1-\mathbf{x}_{t}^{\top}\mathbf{A}_{t-1}^{-1}\mathbf{x}_{t}}
\)
in \eqref{aa} and solving for $\widetilde{a}_{t}$ we obtain,
\begin{align}
\widetilde{a}_{t}=\frac{1}{1-\mathbf{x}_{t}^{\top}\mathbf{A}_{t-1}^{-1}\mathbf{x}_{t}-c^{-1}\left\Vert
    \mathbf{x}_{t}\right\Vert ^{2}} ~. \label{def_tilde_a}
\end{align}
The last-step minmax predictor \eqref{minNoStatEq}
is convex if $\widetilde{a}_{t} \geq 0$, which holds if,
\(
1/b +
1/c \leq 1~, 
\)
because $\mathbf{A}_{t-1}^{-1} \preceq \mathbf{A}_{0}^{-1} = (1/b)\mi$
and we assume that $\normt{\vxi{t}}\leq 1$.
  
Let us state the analogous statements of  \thmref{thm:theorem2} and
\thmref{thm:theorem3}. Substituting \lemref{lem:lemma4} in
\thmref{thm:theorem2} we bound the cumulative loss of the algorithm
with the weighted loss of any $T$-tuple $(\vui{1} \comdots \vui{T})$.
\begin{corollary}
\label{cor:first_nonstationary_bound_1}
  Assume $\left\Vert \mathbf{x}_{t}\right\Vert \leq1$, 
  $1+a_{t}\mathbf{x}_{t}^{\top}\mathbf{A}_{t-1}^{-1}\mathbf{x}_{t}-a_{t}\leq
  0$ for $t =1 \dots T$, and $1/b + 1/c \leq 1$. Then, the loss of the
  last-step minmax predictor,
  $\hat{y}_{t}=\mathbf{b}_{t-1}^{\top}\mathbf{A}_{t-1}^{-1}\mathbf{x}_{t}$
  for $t =1 \dots T$, is upper bounded by,
\[
L_{T}(\texttt{WEMM})\leq\inf_{\mathbf{u}\in\mathbb{R}^{d}}\left(b\left\Vert
    \mathbf{u}\right\Vert
  ^{2}+L_{T}^{\boldsymbol{a}}(\mathbf{u})\right) 
=\inf_{\mathbf{u}_{1},\ldots,\mathbf{u}_{T},\bar{\mathbf{u}}}\left(b\left\Vert
    \bar{\mathbf{u}}\right\Vert ^{2}+c\sum_{t=1}^{T}\left\Vert
    \mathbf{u}_{t}-\bar{\mathbf{u}}\right\Vert
  ^{2}+L_{T}^{\widetilde{\boldsymbol{a}}}(\mathbf{u}_{1},\ldots,\mathbf{u}_{T})\right) ~.
\] 
Furthermore, if
$1+a_{t}\mathbf{x}_{t}^{\top}\mathbf{A}_{t-1}^{-1}\mathbf{x}_{t}-a_{t}
= 0$, then the last inequality is in fact an equality.
\end{corollary}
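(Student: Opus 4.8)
The plan is to derive the corollary by combining two results already established: \thmref{thm:theorem2}, which bounds the cumulative loss of \texttt{WEMM} by the regularized single-vector weighted loss, and \lemref{lem:lemma4}, which carries out the inner minimization over the $T$-tuple $(\vui{1}\comdots\vui{T})$ for a fixed reference vector $\bar{\mathbf{u}}$. No new estimate is needed; the work is purely in identifying the weights correctly and checking the convexity conditions.

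First I would obtain the inequality $L_{T}(\texttt{WEMM})\leq\inf_{\mathbf{u}}\paren{b\normt{\mathbf{u}}+L_{T}^{\boldsymbol{a}}(\mathbf{u})}$. This is \thmref{thm:theorem2} verbatim, applied with the weights $a_{t}$ given by \eqref{aa} and the matrices $\mathbf{A}_{t}$ built from these $a_{t}$; its hypotheses $\normt{\vxi{t}}\leq 1$ and $1+a_{t}\vxti{t}\mathbf{A}_{t-1}^{-1}\vxi{t}-a_{t}\leq 0$ are exactly those assumed in the corollary. The final clause is inherited in the same way: when the inequality holds with equality, the conclusion of \thmref{thm:theorem2} is an equality as well.

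Next I would prove the equality between the two infima. Fix $\bar{\mathbf{u}}\in\reals^{d}$. The assumption $1/b+1/c\leq 1$, together with $\mathbf{A}_{t-1}^{-1}\preceq\mathbf{A}_{0}^{-1}=(1/b)\mi$ and $\normt{\vxi{t}}\leq 1$, forces the denominator in \eqref{def_tilde_a} to be nonnegative, hence $\widetilde{a}_{t}\geq 1>0$; consequently the function $J$ of \lemref{lem:lemma4} is convex and attains its minimum, and \lemref{lem:lemma4} gives
\[
\inf_{\vui{1}\comdots\vui{T}}\paren{c\sum_{t=1}^{T}\normt{\vui{t}-\bar{\mathbf{u}}}+L_{T}^{\widetilde{\boldsymbol{a}}}(\vui{1}\comdots\vui{T})}
=\sum_{t=1}^{T}\frac{1}{\widetilde{a}_{t}^{-1}+c^{-1}\normt{\vxi{t}}}\paren{y_{t}-\vxti{t}\bar{\mathbf{u}}}^{2}
=L_{T}^{\boldsymbol{a}}(\bar{\mathbf{u}}),
\]
where the last step is just the identification \eqref{aa}. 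Adding $b\normt{\bar{\mathbf{u}}}$ to both sides and taking the infimum over $\bar{\mathbf{u}}$ (an iterated infimum equals the joint infimum over all of $\vui{1}\comdots\vui{T},\bar{\mathbf{u}}$) yields $\inf_{\bar{\mathbf{u}}}\paren{b\normt{\bar{\mathbf{u}}}+L_{T}^{\boldsymbol{a}}(\bar{\mathbf{u}})}$, which is $\inf_{\mathbf{u}}\paren{b\normt{\mathbf{u}}+L_{T}^{\boldsymbol{a}}(\mathbf{u})}$ after renaming the variable. Chaining this identity with the bound from the first step gives the corollary.

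The argument is essentially bookkeeping, so I do not expect a real obstacle; the only point requiring attention is the role of the hypothesis $1/b+1/c\leq 1$, which is precisely what keeps $\widetilde{a}_{t}$ nonnegative so that \eqref{minNoStatEq} is a genuine convex-concave problem and the inner infimum over $(\vui{1}\comdots\vui{T})$ is finite and attained; without it \lemref{lem:lemma4} could not be invoked and the claimed equality of infima could fail.
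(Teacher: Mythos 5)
Your proposal is correct and follows exactly the route the paper intends: \thmref{thm:theorem2} gives the inequality (and the equality case), and \lemref{lem:lemma4} together with the identification \eqref{aa} collapses the joint infimum over $(\vui{1}\comdots\vui{T},\bar{\mathbf{u}})$ to the single-vector infimum, with $1/b+1/c\leq 1$ guaranteeing $\widetilde{a}_t\geq 0$ so the inner minimization is well posed. The paper offers no further detail beyond ``substituting \lemref{lem:lemma4} in \thmref{thm:theorem2},'' so your bookkeeping is, if anything, more explicit than the original.
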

Next we relate the weighted cumulative loss
$L_{T}^{\widetilde{\boldsymbol{a}}}(\mathbf{u}_{1},\ldots,\mathbf{u}_{T})$
to the loss itself $L_{T}(\mathbf{u}_{1},\ldots,\mathbf{u}_{T})$,
\begin{corollary}
\label{cor:loss_a_tilde_loss}
Assume $\left\Vert \mathbf{x}_{t}\right\Vert \leq1$ for $t=1 \dots T$, $b>1$ and $1/b + 1/c \leq 1$. 
Assume additionally that 
$\widetilde{a}_{t}=\frac{1}{1-\mathbf{x}_{t}^{\top}\mathbf{A}_{t-1}^{-1}\mathbf{x}_{t}-c^{-1}\left\Vert
    \mathbf{x}_{t}\right\Vert ^{2}}$ as given in \eqref{def_tilde_a}. 
Then 
\[
L_{T}^{\widetilde{\boldsymbol{a}}}(\mathbf{u}_{1},\ldots,\mathbf{u}_{T}) 
\leq 
L_{T}(\mathbf{u}_{1},\ldots,\mathbf{u}_{T}) 
+\frac{b}{b-1}S\ln\left|\frac{1}{b}\mathbf{A}_{T}\right|
+TS\frac{1}{c\left(1-b^{-1}\right)^{2}-\left(1-b^{-1}\right)}~.
\]
\end{corollary}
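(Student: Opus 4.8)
The plan is to write $\widetilde{a}_{t}-1$ as a sum of two nonnegative terms and bound each separately, reusing \thmref{thm:theorem3} for the first and a direct uniform estimate for the second. Recall from \eqref{aa} that $a_{t}=1/(1-\mathbf{x}_{t}^{\top}\mathbf{A}_{t-1}^{-1}\mathbf{x}_{t})$ and from \eqref{def_tilde_a} that $\widetilde{a}_{t}^{-1}=a_{t}^{-1}-c^{-1}\left\Vert \mathbf{x}_{t}\right\Vert ^{2}$. Starting from the sequence-version of the decomposition \eqref{decomposition},
\[
L_{T}^{\widetilde{\boldsymbol{a}}}(\mathbf{u}_{1},\ldots,\mathbf{u}_{T})=L_{T}(\mathbf{u}_{1},\ldots,\mathbf{u}_{T})+\sum_{t=1}^{T}(\widetilde{a}_{t}-1)\ell_{t}(\mathbf{u}_{t})~,
\]
I would split $\widetilde{a}_{t}-1=(a_{t}-1)+(\widetilde{a}_{t}-a_{t})$. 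Both summands are $\geq0$: $a_{t}\geq1$ since $\mathbf{x}_{t}^{\top}\mathbf{A}_{t-1}^{-1}\mathbf{x}_{t}\geq0$, and $\widetilde{a}_{t}\geq a_{t}$ since $\widetilde{a}_{t}^{-1}=a_{t}^{-1}-c^{-1}\left\Vert \mathbf{x}_{t}\right\Vert ^{2}\leq a_{t}^{-1}$ (both quantities positive under $1/b+1/c\leq1$). Since $\ell_{t}(\mathbf{u}_{t})\leq S$ (reading $S$ as $\sup_{1\leq t\leq T}\ell_{t}(\mathbf{u}_{t})$, the worst per-round loss of the $T$-tuple), this yields $\sum_{t}(\widetilde{a}_{t}-1)\ell_{t}(\mathbf{u}_{t})\leq S\sum_{t}(a_{t}-1)+S\sum_{t}(\widetilde{a}_{t}-a_{t})$.

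For the first sum I would reuse the estimate established inside the proof of \thmref{thm:theorem3}: $a_{t}-1=a_{t}^{2}\mathbf{x}_{t}^{\top}\mathbf{A}_{t}^{-1}\mathbf{x}_{t}\leq\frac{b}{b-1}a_{t}\mathbf{x}_{t}^{\top}\mathbf{A}_{t}^{-1}\mathbf{x}_{t}$, summed against $\sum_{t}a_{t}\mathbf{x}_{t}^{\top}\mathbf{A}_{t}^{-1}\mathbf{x}_{t}\leq\ln\left|\frac{1}{b}\mathbf{A}_{T}\right|$, which gives $S\sum_{t}(a_{t}-1)\leq\frac{b}{b-1}S\ln\left|\frac{1}{b}\mathbf{A}_{T}\right|$; this is legitimate because the matrices $\mathbf{A}_{t}$ appearing here are built from exactly the same weights $a_{t}=1/(1-\mathbf{x}_{t}^{\top}\mathbf{A}_{t-1}^{-1}\mathbf{x}_{t})$ as in \thmref{thm:theorem3}.

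For the second sum, the key identity is $\widetilde{a}_{t}-a_{t}=a_{t}\widetilde{a}_{t}c^{-1}\left\Vert \mathbf{x}_{t}\right\Vert ^{2}$, obtained by multiplying $a_{t}^{-1}-\widetilde{a}_{t}^{-1}=c^{-1}\left\Vert \mathbf{x}_{t}\right\Vert ^{2}$ through by $a_{t}\widetilde{a}_{t}$. I would bound its three factors uniformly over $t$: $\left\Vert \mathbf{x}_{t}\right\Vert ^{2}\leq1$; $a_{t}\leq\frac{1}{1-b^{-1}}$ because $\mathbf{A}_{t-1}\succeq b\mathbf{I}$ forces $\mathbf{x}_{t}^{\top}\mathbf{A}_{t-1}^{-1}\mathbf{x}_{t}\leq b^{-1}$; and $\widetilde{a}_{t}\leq\frac{1}{(1-b^{-1})-c^{-1}}$ because $\widetilde{a}_{t}^{-1}=a_{t}^{-1}-c^{-1}\left\Vert \mathbf{x}_{t}\right\Vert ^{2}\geq(1-b^{-1})-c^{-1}$. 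Multiplying these, $\widetilde{a}_{t}-a_{t}\leq\frac{1}{c(1-b^{-1})^{2}-(1-b^{-1})}$, so $S\sum_{t}(\widetilde{a}_{t}-a_{t})\leq\frac{TS}{c(1-b^{-1})^{2}-(1-b^{-1})}$. Adding the two bounds back into the opening identity gives the claim. I do not expect a genuine obstacle: the computations are routine and no machinery beyond \thmref{thm:theorem3} is needed; the only point requiring care is bookkeeping around the hypothesis $1/b+1/c\leq1$, which is exactly what keeps $\widetilde{a}_{t}^{-1}$ positive so that $\widetilde{a}_{t}$ and the identities above are well-defined, and one should note that at the boundary $c=b/(b-1)$ the last denominator vanishes and the bound becomes (consistently) vacuous.
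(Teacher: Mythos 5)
Your proposal is correct and follows essentially the same route as the paper: the same decomposition $\widetilde{a}_{t}-1=(a_{t}-1)+(\widetilde{a}_{t}-a_{t})$, the same reuse of \thmref{thm:theorem3} for $\sum_{t}(a_{t}-1)$, and the same uniform bound on $\widetilde{a}_{t}-a_{t}$ (your factored identity $\widetilde{a}_{t}-a_{t}=a_{t}\widetilde{a}_{t}c^{-1}\left\Vert \mathbf{x}_{t}\right\Vert ^{2}$ is just a repackaging of the paper's difference-of-fractions computation). Your remarks on reading $S$ as the per-round supremum over the $T$-tuple and on the bound degenerating at $1/b+1/c=1$ are apt points the paper leaves implicit.
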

\begin{proof}
 We start as in the proof of \thmref{thm:theorem3} and decompose the
  weighted loss, 
\begin{align}
L_{T}^{\widetilde{\boldsymbol{a}}}(\mathbf{u}_{1},\ldots,\mathbf{u}_{T}) 
&=
  L_{T}(\mathbf{u}_{1},\ldots,\mathbf{u}_{T}) + \sum_t
  (\widetilde{a}_t-1) \ell_{t}(\mathbf{u}_t) \nonumber\\
&\leq 
 L_{T}(\mathbf{u}_{1},\ldots,\mathbf{u}_{T})+ S \sum_t
 (a_t-1) +S \sum_t (\widetilde{a}_t-a_t) ~. \label{basic_bound}
\end{align}
We bound the sum of the third term,
\begin{align}
\widetilde{a}_{t}-a_{t} 
& =  \frac{1}{1-\mathbf{x}_{t}^{\top}\mathbf{A}_{t-1}^{-1}\mathbf{x}_{t}-c^{-1}\left\Vert \mathbf{x}_{t}\right\Vert ^{2}}-\frac{1}{1-\mathbf{x}_{t}^{\top}\mathbf{A}_{t-1}^{-1}\mathbf{x}_{t}}\nonumber\\
 & =  \frac{c^{-1}\left\Vert \mathbf{x}_{t}\right\Vert ^{2}}{\left(1-\mathbf{x}_{t}^{\top}\mathbf{A}_{t-1}^{-1}\mathbf{x}_{t}-c^{-1}\left\Vert \mathbf{x}_{t}\right\Vert ^{2}\right)\left(1-\mathbf{x}_{t}^{\top}\mathbf{A}_{t-1}^{-1}\mathbf{x}_{t}\right)}\nonumber\\
&\leq
\frac{c^{-1}}{\left(1-b^{-1}-c^{-1}\right)\left(1-b^{-1}\right)}
  =  \frac{1}{c\left(1-b^{-1}\right)^{2}-\left(1-b^{-1}\right)} ~.\label{t10}
\end{align}
Additionally, as in \thmref{thm:theorem3} the second term is bounded with $
\frac{b}{b-1}S\ln\left|\frac{1}{b}\mathbf{A}_{T}\right|$. Substituting
this bound and \eqref{t10} in \eqref{basic_bound} completes the proof.
\QED
\end{proof}
Combining the last two corollaries yields the main result of this section.
\begin{corollary}
\label{cor:main_cor_non_stat_1}
Under the conditions of \corref{cor:loss_a_tilde_loss} the cumulative loss of the last-step
minmax predictor is upper bounded by,
\begin{align*}
L_{T}(\texttt{WEMM})
\leq&\inf_{\mathbf{u}_{1},\ldots,\mathbf{u}_{T},\bar{\mathbf{u}}}
\Bigg(
b\left\Vert
    \bar{\mathbf{u}}\right\Vert ^{2}+c V_m+
L_{T}(\mathbf{u}_{1},\ldots,\mathbf{u}_{T}) 
+\frac{Sb}{b-1}\ln\left|\frac{1}{b}\mathbf{A}_{T}\right|
+\frac{TS}{c\left(1-b^{-1}\right)^{2}-\left(1-b^{-1}\right)}\Bigg)~,
\end{align*}
where $V_m$ is the deviation of $\{\vui{t}\}$ from some fixed
weight-vector as defined in \eqref{V_m}. 
Additionally, setting
\(
c_V=\frac{b}{b-1}\left(1+\sqrt{\frac{ST}{V_m}}\right)
\) minimizing the above bound over $c$,
\begin{align*}
L_{T}(\texttt{WEMM})
\leq&\inf_{\mathbf{u}_{1},\ldots,\mathbf{u}_{T},\bar{\mathbf{u}}}
\Bigg(
b\left\Vert
    \bar{\mathbf{u}}\right\Vert ^{2}+
L_{T}(\mathbf{u}_{1},\ldots,\mathbf{u}_{T}) 
+\frac{Sb}{b-1}\ln\left|\frac{1}{b}\mathbf{A}_{T}\right|
+\frac{b}{b-1}\left(V_m+2\sqrt{STV_m}\right)\Bigg)~.
\end{align*}
\end{corollary}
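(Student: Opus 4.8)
The plan is to chain \corref{cor:first_nonstationary_bound_1} and \corref{cor:loss_a_tilde_loss} and then optimize the free parameter $c$. First I would invoke \corref{cor:first_nonstationary_bound_1}, which bounds $L_{T}(\texttt{WEMM})$ by $\inf_{\vui{1},\ldots,\vui{T},\bar{\vu}}\paren{b\left\Vert\bar{\vu}\right\Vert^{2}+cV_m+L_{T}^{\widetilde{\boldsymbol{a}}}(\vui{1},\ldots,\vui{T})}$, and then bound the weighted loss $L_{T}^{\widetilde{\boldsymbol{a}}}$ sitting inside the infimum by \corref{cor:loss_a_tilde_loss}. Substituting the second bound into the first yields the first displayed inequality directly. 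The point worth emphasizing here is that, under the choice $a_{t}=1/(1-\vxti{t}\mathbf{A}_{t-1}^{-1}\vxi{t})$, the matrix $\mathbf{A}_{T}$ is generated by a recursion purely in $\mathbf{A}$ and therefore does not depend on $c$ (nor do $S$); the only $c$-dependence in the resulting bound is in the two terms $cV_m$ and $TS/\paren{c(1-b^{-1})^{2}-(1-b^{-1})}$.

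For the second inequality I would, for each fixed tuple $(\vui{1},\ldots,\vui{T},\bar{\vu})$ inside the infimum, choose $c$ to minimize $h(c)=cV_m+\frac{TS}{c(1-b^{-1})^{2}-(1-b^{-1})}$. Setting $\beta=1-b^{-1}$ and substituting $t=c\beta-1>0$, the denominator becomes $\beta t$ and $h$ becomes $\frac{1}{\beta}\paren{V_m+tV_m+TS/t}$. The arithmetic--geometric mean inequality gives $tV_m+TS/t\geq 2\sqrt{STV_m}$, with equality at $t=\sqrt{ST/V_m}$, i.e. at $c=c_V=\frac{b}{b-1}\paren{1+\sqrt{ST/V_m}}$, and the minimal value is $\frac{b}{b-1}\paren{V_m+2\sqrt{STV_m}}$. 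Plugging this back into the first inequality gives the claimed bound.

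The remaining checks are routine: one should verify that $c_V$ is an admissible parameter, namely $c_V\geq\frac{b}{b-1}$ so that $1/b+1/c_V\leq 1$ (the convexity condition for the non-stationary predictor and the standing hypothesis of \corref{cor:loss_a_tilde_loss}) and so that the denominator $c_V\beta^{2}-\beta=\beta\sqrt{ST/V_m}$ is strictly positive; both are immediate since $\sqrt{ST/V_m}\geq 0$. The only mild subtlety — not really an obstacle — is the familiar one in parameter tuning: the optimal $c_V$ depends on $V_m$, hence on the very tuple against which we compete, so the statement is to be read as instantiating $c$ at its best value per tuple inside the infimum. No inequality beyond AM--GM is needed, so I expect the write-up to be short.
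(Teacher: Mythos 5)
Your proposal is correct and follows exactly the route the paper intends: the paper itself offers no written proof beyond the remark that ``combining the last two corollaries yields the main result,'' and your chaining of \corref{cor:first_nonstationary_bound_1} with \corref{cor:loss_a_tilde_loss}, followed by the AM--GM optimization of $cV_m+TS/\paren{c(1-b^{-1})^2-(1-b^{-1})}$ at $c_V=\frac{b}{b-1}\paren{1+\sqrt{ST/V_m}}$, is precisely that argument. Your side remarks --- that $\mathbf{A}_T$, $S$, and the algorithm itself are $c$-free so the per-tuple choice of $c$ is legitimate, and that $c_V$ satisfies $1/b+1/c_V\leq 1$ --- are the right things to check and are handled correctly.
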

Few comments. First, it is straightforward to verify that
$c_V=\frac{b}{b-1}\left(1+\sqrt{\frac{ST}{V_m}}\right)$ satisfy the
constraint $1/b+1/c_V \leq 1$. Second, this bound strictly generalizes
the bound for the stationary case, since
\corref{cor:loss_a_tilde_loss} reduces to \thmref{thm:theorem3} when
all the weight-vectors equal each other
$\vui{1}=\dots\vui{T}=\bar{\vu}$ (i.e. $V_m=0$). Third, the
constant $c$ (or $c_V$) is not used by the algorithm, but only in the
analysis. So there is no need to know the actual deviation $V_m$ to
tune the algorithm. In other words, the bound applies essentially to
the same last step minmax predictor defined in
\thmref{thm:theorem1}. Finally, we have a bound for the
non-stationary case based on \thmref{thm:theorem4} instead of
\thmref{thm:theorem3}, by replacing the term 
\[
\frac{Sb}{b-1}\ln\left|\frac{1}{b}\mathbf{A}_{T}\right| ~,
\] 
with 
\[
\frac{Sbd}{b-1} \paren{1+\ln\paren{1+\frac{\sum_{t}{\ell_t(\vu_t)}}{Sd}}}~.
\]

\section{Related work}
The problem of predicting reals in an online manner was studied for more than five decades. Clearly we cannot cover all previous work here, and the reader
is refered to the encyclopedic book of Cesa-Bianchi and Lugosi~\cite{CesaBiGa06}
 for a full survey.

Widrow and Hoff~\cite{WidrowHoff} studied a gradient descent algorithm for the squared loss. Many variants of the algorithm were studied since then. A notable example is the normalized least mean squares algorithm (NLMS)~\cite{Bitmead,Bershad} that adapts to the input's scale. More gradient descent based algorithms and bounds for regression with the squared loss were proposed by Cesa-Bianchi et.al.~\cite{Nicolo_Warmuth} about two decades ago. These algorithms were generalized and extended by Kivinen and Warmuth~\cite{Kiv_War} using additional regularization functions.

An online version of the ridge regression algorithm in the worst-case
setting was proposed and analyzed by Foster~\cite{Foster91}. A related
algorithm called the Aggregating Algorithm (AA) was studied by
Vovk~\cite{vovkAS}. See also the work of Azoury and
Warmuth~\cite{AzouryWa01}.

The recursive least squares (RLS)~\cite{Hayes} is a
similar algorithm proposed for adaptive filtering. A variant of the RLS algorithm (AROW for regression~\cite{VaitsCr11}) was analysed by Crammer et.al.~\cite{CrammerKuDr12}. All algorithms
make use of second order information, as they maintain a weight-vector
and a covariance-like positive semi-definite (PSD) matrix used to
re-weight the input. The eigenvalues of this covariance-like matrix
grow with time $t$, a property which is used to prove logarithmic
regret bounds. Orabona et.al.~\cite{OrabonaCBG12} showed
that beyond logarithmic regret bound can be achieved when the total
best linear model loss is sublinear in $T$. We derive a similar bound,
with a multiplicative factor that depends on the worst-loss of $\vu$, rather than a bound $Y$ on the labels. 
Hazan and Kale~\cite{HazanK09a} developed regret
bounds that depend logarithmically on the variance of the
side information used to define the loss sequence. In the
regression case, this corresponds to a bound that depends
on the variance of the instance vectors $\mathbf{x}_t$, rather than on
the loss of the competitor, as the bound of Orabona et.al.~\cite{OrabonaCBG12} and our bound. 

The derivation of our algorithm shares similarities with the work of
Forster~\cite{Forster}. Both algorithms are motivated from the
last-step min-max predictor. Yet, the formulation of
Forster~\cite{Forster} yields a convex optimization for which the max
operation over $\yi{t}$ is not bounded, and thus he used an
artificial clipping operation to avoid unbounded solutions. With a
proper tuning of $a_t$ and a weighted loss, we are able to obtain a
problem that is convex in $\hyi{t}$ and concave in $\yi{t}$, and thus
well defined.

Most recent work is focused in the stationary setting. We also discuss
a specific weak-notion of non-stationary setting, for which the few
weight-vectors can be used for comparison and their total deviation is
computed with respect to some single weight-vector. Recently, Vaits
and Crammer~\cite{VaitsCr11} proposed an algorithm designed for
non-stationary environments. Herbster and Warmuth \cite{HerbsterW01}
discussed general gradient descent algorithms with projection of the
weight-vector using the Bregman divergence, and
Zinkevich~\cite{Zinkevich03} developed an algorithm for online convex
programming. Busuttil and Kalnishkan~\cite{BusuttilK07} developed a variant of the aggregating algorithm in the non-stationary environment. 
They all use a stronger notion of diversity between
vectors, as their distance is measured with consecutive vectors (that
is drift that may end far from the starting point). Thus, the bounds in these papers cannot be compared in general to our bound in \corref{cor:main_cor_non_stat_1}.  
The $H_\infty$ filters~(see e.g. papers by Simon~\cite{Simon:2006:OSE:1146304,DBLP:journals/tsp/Simon06})
 are a family of (robust) linear filters developed based on a min-max approach, like \texttt{WEMM}, and analyzed in the worst
case setting. These filters are reminiscent of the celebrated Kalman
filter~\cite{Kalman60}, which was motivated and analyzed in a stochastic
setting with Gaussian noise.
Finally, few second-order algorithms were recently proposed in other contexts~\cite{CesaBianchiCoGe05,CrammerKuDr09,DuchiHS10,McMahanS10}.

\section{Summary and Conclusions}

We proposed a modification of the last-step min-max algorithm~\cite{Forster} using weights over examples, and showed how to choose these weights for the problem to be well defined -- convex -- which enabled us to develop the last step min-max predictor, without requiring the labels to be bounded. Our algorithmic formulations depend on inner- and outer-products and thus can be employed with kernel functions. Our analysis bounds the regret with quantities that depend only on the loss of the competitor, with no need for any knowledge of the problem.  Our prediction algorithm was motivated from the last-step minmax predictor problem for stationary setting, but we showed that the same algorithm can be used to derive a bound for a class of {\em non-stationary} problems as well.

An interesting direction would be to extend the algorithm for general loss functions rather than the squared loss, or to classification tasks.

\section*{Acknowledgements}  
The research is partially supported by an Israeli Science Foundation
grant ISF- 1567/10.

\appendix
\section{Proof of \lemref{lem:lemma2}}
\label{proof_lemma2}
\begin{proof}
Using the Woodbury identity we get
\[
\mathbf{A}_{t}^{-1}=\mathbf{A}_{t-1}^{-1}-\frac{\mathbf{A}_{t-1}^{-1}\mathbf{x}_{t}\mathbf{x}_{t}^{\top}\mathbf{A}_{t-1}^{-1}}{\frac{1}{a_{t}}+\mathbf{x}_{t}^{\top}\mathbf{A}_{t-1}^{-1}\mathbf{x}_{t}}~,
\]
 therefore the left side of \eqref{lemma2} is
\begin{eqnarray*}
a_{t}^{2}\mathbf{x}_{t}^{\top}\mathbf{A}_{t}^{-1}\mathbf{x}_{t}+1-a_{t} 
 &=&  a_{t}^{2}\mathbf{x}_{t}^{\top}\left(\mathbf{A}_{t-1}^{-1}-\frac{\mathbf{A}_{t-1}^{-1}\mathbf{x}_{t}\mathbf{x}_{t}^{\top}\mathbf{A}_{t-1}^{-1}}{\frac{1}{a_{t}}+\mathbf{x}_{t}^{\top}\mathbf{A}_{t-1}^{-1}\mathbf{x}_{t}}\right)\mathbf{x}_{t}+1-a_{t}\\
  &=&  a_{t}^{2}\mathbf{x}_{t}^{\top}\mathbf{A}_{t-1}^{-1}\mathbf{x}_{t}-\frac{a_{t}^{2}\mathbf{x}_{t}^{\top}\mathbf{A}_{t-1}^{-1}\mathbf{x}_{t}\mathbf{x}_{t}^{\top}\mathbf{A}_{t-1}^{-1}\mathbf{x}_{t}}{\frac{1}{a_{t}}+\mathbf{x}_{t}^{\top}\mathbf{A}_{t-1}^{-1}\mathbf{x}_{t}}+1-a_{t}\\
  &=&  \frac{1+a_{t}\mathbf{x}_{t}^{\top}\mathbf{A}_{t-1}^{-1}\mathbf{x}_{t}-a_{t}}{1+a_{t}\mathbf{x}_{t}^{\top}\mathbf{A}_{t-1}^{-1}\mathbf{x}_{t}}~.
\end{eqnarray*}
\QED
 \end{proof}

\section{Proof of \thmref{thm:theorem2}}
\label{proof_theorem2}

\begin{proof}
Using the Woodbury matrix identity we get
\begin{eqnarray}
\mathbf{A}_{t}^{-1} & = & \mathbf{A}_{t-1}^{-1}-\frac{\mathbf{A}_{t-1}^{-1}\mathbf{x}_{t}\mathbf{x}_{t}^{\top}\mathbf{A}_{t-1}^{-1}}{\frac{1}{a_{t}}+\mathbf{x}_{t}^{\top}\mathbf{A}_{t-1}^{-1}\mathbf{x}_{t}}\label{woodbury}~,
\end{eqnarray}
therefore
\begin{align}
\mathbf{A}_{t}^{-1}\mathbf{x}_{t} 
& =  \mathbf{A}_{t-1}^{-1}\mathbf{x}_{t}-\frac{\mathbf{A}_{t-1}^{-1}\mathbf{x}_{t}\mathbf{x}_{t}^{\top}\mathbf{A}_{t-1}^{-1}\mathbf{x}_{t}}{\frac{1}{a_{t}}+\mathbf{x}_{t}^{\top}\mathbf{A}_{t-1}^{-1}\mathbf{x}_{t}}
  =  \frac{\mathbf{A}_{t-1}^{-1}\mathbf{x}_{t}}{1+a_{t}\mathbf{x}_{t}^{\top}\mathbf{A}_{t-1}^{-1}\mathbf{x}_{t}}\label{t4}~.
\end{align}
For $t = 1 \dots T$ we have
\begin{eqnarray*}
 &  & \ell_{t}(\texttt{WEMM})+\inf_{\mathbf{u}\in\mathbb{R}^{d}}\left(b\left\Vert \mathbf{u}\right\Vert ^{2}+L_{t-1}^{\boldsymbol{a}}(\mathbf{u})\right)-\inf_{\mathbf{u}\in\mathbb{R}^{d}}\left(b\left\Vert \mathbf{u}\right\Vert ^{2}+L_{t}^{\boldsymbol{a}}(\mathbf{u})\right)\\
 & = &
 \left(y_{t}-\hat{y}_{t}\right)^{2}+\inf_{\mathbf{u}\in\mathbb{R}^{d}}\left(b\left\Vert
     \mathbf{u}\right\Vert
   ^{2}+\sum_{s=1}^{t-1}a_{s}\left(y_{s}-\mathbf{u}^{\top}\mathbf{x}_{s}\right)^{2}\right)-\inf_{\mathbf{u}\in\mathbb{R}^{d}}\left(b\left\Vert
     \mathbf{u}\right\Vert
   ^{2}+\sum_{s=1}^{t}a_{s}\left(y_{s}-\mathbf{u}^{\top}\mathbf{x}_{s}\right)^{2}\right)\\
%
 & \overset{\eqref{optimal_solution}}{=} & \left(y_{t}-\hat{y}_{t}\right)^{2}+\sum_{s=1}^{t-1}a_{s}y_{s}^{2}-\mathbf{b}_{t-1}^{\top}\mathbf{A}_{t-1}^{-1}\mathbf{b}_{t-1}-\sum_{s=1}^{t}a_{s}y_{s}^{2}+\mathbf{b}_{t}^{\top}\mathbf{A}_{t}^{-1}\mathbf{b}_{t}\\
 & = & \left(y_{t}-\hat{y}_{t}\right)^{2}-a_{t}y_{t}^{2}-\mathbf{b}_{t-1}^{\top}\mathbf{A}_{t-1}^{-1}\mathbf{b}_{t-1}+\mathbf{b}_{t}^{\top}\mathbf{A}_{t}^{-1}\mathbf{b}_{t}\\
 &\overset{\eqref{t3}}{=}  & \left(y_{t}-\hat{y}_{t}\right)^{2}-a_{t}y_{t}^{2}-\mathbf{b}_{t-1}^{\top}\mathbf{A}_{t-1}^{-1}\mathbf{b}_{t-1}+\mathbf{b}_{t-1}^{\top}\mathbf{A}_{t}^{-1}\mathbf{b}_{t-1}+2a_{t}y_{t}\mathbf{b}_{t-1}^{\top}\mathbf{A}_{t}^{-1}\mathbf{x}_{t}+a_{t}^{2}y_{t}^{2}\mathbf{x}_{t}^{\top}\mathbf{A}_{t}^{-1}\mathbf{x}_{t}\\
 & = & \left(y_{t}-\hat{y}_{t}\right)^{2}-a_{t}y_{t}^{2}-\mathbf{b}_{t-1}^{\top}\left(\mathbf{A}_{t-1}^{-1}-\mathbf{A}_{t}^{-1}\right)\mathbf{b}_{t-1}+2a_{t}y_{t}\mathbf{b}_{t-1}^{\top}\mathbf{A}_{t}^{-1}\mathbf{x}_{t}+a_{t}^{2}y_{t}^{2}\mathbf{x}_{t}^{\top}\mathbf{A}_{t}^{-1}\mathbf{x}_{t}\\
 & \overset{\eqref{t2}}{=}  &\left(y_{t}-\hat{y}_{t}\right)^{2}-a_{t}y_{t}^{2}-\mathbf{b}_{t-1}^{\top}\mathbf{A}_{t}^{-1}a_{t}\mathbf{x}_{t}\mathbf{x}_{t}^{\top}\mathbf{A}_{t-1}^{-1}\mathbf{b}_{t-1}+2a_{t}y_{t}\mathbf{b}_{t-1}^{\top}\mathbf{A}_{t}^{-1}\mathbf{x}_{t}+a_{t}^{2}y_{t}^{2}\mathbf{x}_{t}^{\top}\mathbf{A}_{t}^{-1}\mathbf{x}_{t}\\
&=  & \left(y_{t}-\hat{y}_{t}\right)^{2}-a_{t}y_{t}^{2}+a_{t}\left(-\hat{y}_{t}\mathbf{b}_{t-1}^{\top}+2y_{t}\mathbf{b}_{t-1}^{\top}+a_{t}y_{t}^{2}\mathbf{x}_{t}^{\top}\right)\mathbf{A}_{t}^{-1}\mathbf{x}_{t}\\
 &\overset{\eqref{t4}}{=}  & \left(y_{t}-\hat{y}_{t}\right)^{2}-a_{t}y_{t}^{2}+a_{t}\left(-\hat{y}_{t}\mathbf{b}_{t-1}^{\top}+2y_{t}\mathbf{b}_{t-1}^{\top}+a_{t}y_{t}^{2}\mathbf{x}_{t}^{\top}\right)\frac{\mathbf{A}_{t-1}^{-1}\mathbf{x}_{t}}{1+a_{t}\mathbf{x}_{t}^{\top}\mathbf{A}_{t-1}^{-1}\mathbf{x}_{t}}\\
 & = & \left(y_{t}-\hat{y}_{t}\right)^{2}+a_{t}\frac{-y_{t}^{2}-y_{t}^{2}a_{t}\mathbf{x}_{t}^{\top}\mathbf{A}_{t-1}^{-1}\mathbf{x}_{t}-\hat{y}_{t}^{2}+2y_{t}\hat{y}_{t}+a_{t}y_{t}^{2}\mathbf{x}_{t}^{\top}\mathbf{A}_{t-1}^{-1}\mathbf{x}_{t}}{1+a_{t}\mathbf{x}_{t}^{\top}\mathbf{A}_{t-1}^{-1}\mathbf{x}_{t}}\\
 & = & \left(y_{t}-\hat{y}_{t}\right)^{2}-a_{t}\frac{\left(y_{t}-\hat{y}_{t}\right)^{2}}{1+a_{t}\mathbf{x}_{t}^{\top}\mathbf{A}_{t-1}^{-1}\mathbf{x}_{t}}\\
&
=&\frac{1+a_{t}\mathbf{x}_{t}^{\top}\mathbf{A}_{t-1}^{-1}\mathbf{x}_{t}-a_{t}}{1+a_{t}\mathbf{x}_{t}^{\top}\mathbf{A}_{t-1}^{-1}\mathbf{x}_{t}}\left(y_{t}-\hat{y}_{t}\right)^{2}
\leq0 ~.
\end{eqnarray*}
Summing over $t\in\left\{ 1,\ldots,T\right\} $ and using \eqref{LT}
yields
\(
L_{T}(\texttt{WEMM})-\inf_{\mathbf{u}\in\mathbb{R}^{d}}\left(b\left\Vert
    \mathbf{u}\right\Vert
  ^{2}+L_{T}^{\boldsymbol{a}}(\mathbf{u})\right)\leq 0~.
\)
 \QED
\end{proof}

\section{Proof of \thmref{thm:theorem3}}
\label{proof_theorem3}

\begin{proof}
From \eqref{woodbury} we see that $\mathbf{A}_{t}^{-1}\prec\mathbf{A}_{t-1}^{-1}$
and because $\mathbf{A}_{0}=b\mathbf{I}$ we get 
\[
\mathbf{x}_{t}^{\top}\mathbf{A}_{t}^{-1}\mathbf{x}_{t}<\mathbf{x}_{t}^{\top}\mathbf{A}_{t-1}^{-1}\mathbf{x}_{t}<\mathbf{x}_{t}^{\top}\mathbf{A}_{t-2}^{-1}\mathbf{x}_{t}<\ldots<\mathbf{x}_{t}^{\top}\mathbf{A}_{0}^{-1}\mathbf{x}_{t}=\frac{1}{b}\left\Vert \mathbf{x}_{t}\right\Vert ^{2}\leq\frac{1}{b}~,
\]
therefore $1\leq a_{t}\leq\frac{1}{1-\frac{1}{b}}=\frac{b}{b-1}$.
From \eqref{t4} we have
\begin{eqnarray*}
\mathbf{x}_{t}^{\top}\mathbf{A}_{t}^{-1}\mathbf{x}_{t} 
 &=&
 \frac{\mathbf{x}_{t}^{\top}\mathbf{A}_{t-1}^{-1}\mathbf{x}_{t}}{1+a_{t}\mathbf{x}_{t}^{\top}\mathbf{A}_{t-1}^{-1}\mathbf{x}_{t}}\\
&=&\frac{1-\frac{1}{a_{t}}}{1+a_{t}\left(1-\frac{1}{a_{t}}\right)} ~=~\frac{a_{t}-1}{a_{t}^{2}}~,
\end{eqnarray*}
so we can bound the term $a_{t}-1$ as following
\begin{equation}
a_{t}-1=a_{t}^{2}\mathbf{x}_{t}^{\top}\mathbf{A}_{t}^{-1}\mathbf{x}_{t}\leq\frac{b}{b-1}a_{t}\mathbf{x}_{t}^{\top}\mathbf{A}_{t}^{-1}\mathbf{x}_{t}~. \label{t6}
\end{equation}
With an argument similar to~\cite{Forster} we have,
 \(
 a_{t}\mathbf{x}_{t}^{\top}\mathbf{A}_{t}^{-1}\mathbf{x}_{t}
\leq\ln\frac{\left|\mathbf{A}_{t}\right|}{\left|\mathbf{A}_{t}-a_{t}\mathbf{x}_{t}\mathbf{x}_{t}^{\top}\right|}
=\ln\frac{\left|\mathbf{A}_{t}\right|}{\left|\mathbf{A}_{t-1}\right|} ~.
 \)
Summing the last inequality over $t$ and using the initial value
$\ln\left|\frac{1}{b}\mathbf{A}_{0}\right|=0$ we get
\begin{equation}
\sum_{t=1}^{T}a_{t}\mathbf{x}_{t}^{\top}\mathbf{A}_{t}^{-1}\mathbf{x}_{t}\leq\ln\left|\frac{1}{b}\mathbf{A}_{T}\right|~.  \label{t7}
\end{equation}
Substituting the last equation in 
\eqref{t6} we get the logarithmic bound 
\(
\sum_{t=1}^{T}\left(a_{t}-1\right)\leq\frac{b}{b-1}\ln\left|\frac{1}{b}\mathbf{A}_{T}\right| ~,
\)
as required.
\QED
\end{proof}

\section{Proof of \lemref{lem:lemma4}}
\label{proof_lemma4}

\begin{proof}
We set the derivative of $J$ with respect to $\vui{t}$ to zero,
\[
\frac{\partial J}{\partial\mathbf{u}_{t}}=2c\left(\mathbf{u}_{t}-\bar{\mathbf{u}}\right)-2\widetilde{a}_{t}\left(y_{t}-\mathbf{u}_{t}^{\top}\mathbf{x}_{t}\right)\mathbf{x}_{t}=0~,
\]
and solve for $\vui{t}$: 
\begin{eqnarray}
\mathbf{u}_{t} 
&=&  \left(c\mathbf{I}+\widetilde{a}_{t}\mathbf{x}_{t}\mathbf{x}_{t}^{\top}\right)^{-1}\left(c\bar{\mathbf{u}}+\widetilde{a}_{t}y_{t}\mathbf{x}_{t}\right)\nonumber \\
 & = & \left(c^{-1}\mathbf{I}-\frac{c^{-2}}{\widetilde{a}_{t}^{-1}+c^{-1}\left\Vert \mathbf{x}_{t}\right\Vert ^{2}}\mathbf{x}_{t}\mathbf{x}_{t}^{\top}\right)\left(c\bar{\mathbf{u}}+\widetilde{a}_{t}y_{t}\mathbf{x}_{t}\right)\nonumber \\
 &=&
 \bar{\mathbf{u}}+\frac{c^{-1}}{\widetilde{a}_{t}^{-1}+c^{-1}\left\Vert
     \mathbf{x}_{t}\right\Vert
   ^{2}}\left(y_{t}-\bar{\mathbf{u}}^{\top}\mathbf{x}_{t}\right)\mathbf{x}_{t} \label{uopt}~.
\end{eqnarray}
For the optimal $\mathbf{u}_{t}$ of \eqref{uopt}, we compute the
following two terms, which are used next,
\begin{align}
\left(y_{t}-\mathbf{u}_{t}^{\top}\mathbf{x}_{t}\right)^{2} 
& ~=  \left(y_{t}-\left(\bar{\mathbf{u}}+\frac{c^{-1}}{\widetilde{a}_{t}^{-1}+c^{-1}\left\Vert \mathbf{x}_{t}\right\Vert ^{2}}\left(y_{t}-\bar{\mathbf{u}}^{\top}\mathbf{x}_{t}\right)\mathbf{x}_{t}\right)^{\top}\mathbf{x}_{t}\right)^{2}\nonumber \\
 &~ =  \frac{\widetilde{a}_{t}^{-2}}{\left(\widetilde{a}_{t}^{-1}+c^{-1}\left\Vert \mathbf{x}_{t}\right\Vert ^{2}\right)^{2}}\left(y_{t}-\bar{\mathbf{u}}^{\top}\mathbf{x}_{t}\right)^{2}\label{t8}\\
\left\Vert \mathbf{u}_{t}-\bar{\mathbf{u}}\right\Vert ^{2} 
& ~=  \left\Vert \frac{c^{-1}}{\widetilde{a}_{t}^{-1}+c^{-1}\left\Vert \mathbf{x}_{t}\right\Vert ^{2}}\left(y_{t}-\bar{\mathbf{u}}^{\top}\mathbf{x}_{t}\right)\mathbf{x}_{t}\right\Vert ^{2}\nonumber \\
 &~ =  \frac{c^{-2}}{\left(\widetilde{a}_{t}^{-1}+c^{-1}\left\Vert \mathbf{x}_{t}\right\Vert ^{2}\right)^{2}}\left(y_{t}-\bar{\mathbf{u}}^{\top}\mathbf{x}_{t}\right)^{2}\left\Vert \mathbf{x}_{t}\right\Vert ^{2}\label{t9}~.
\end{align}
From \eqref{t8} and \eqref{t9} we get
\[
c\left\Vert \mathbf{u}_{t}-\bar{\mathbf{u}}\right\Vert ^{2}+\widetilde{a}_{t}\left(y_{t}-\mathbf{u}_{t}^{\top}\mathbf{x}_{t}\right)^{2}=\frac{1}{\widetilde{a}_{t}^{-1}+c^{-1}\left\Vert \mathbf{x}_{t}\right\Vert ^{2}}\left(y_{t}-\bar{\mathbf{u}}^{\top}\mathbf{x}_{t}\right)^{2}~.
\]
 Therefore the minimal value of $J\left(\mathbf{u}_{1},\ldots,\mathbf{u}_{T}\right)$
 is given by,
 \begin{eqnarray*}
 J_{min} & = & b\left\Vert \bar{\mathbf{u}}\right\Vert ^{2}+\sum_{t=1}^{T}\frac{1}{\widetilde{a}_{t}^{-1}+c^{-1}\left\Vert \mathbf{x}_{t}\right\Vert ^{2}}\left(y_{t}-\bar{\mathbf{u}}^{\top}\mathbf{x}_{t}\right)^{2}~,
 \end{eqnarray*}
which completes the proof.
\QED
 \end{proof}

\bibliographystyle{plain}
\bibliography{bib}

\end{document}